\newcommand*\@dblLabelI {}
\newcommand*\@dblLabelII {}
\newcommand*\@dblequationAux {}
\def\@dblequationAux #1,#2,%
\def\@dblLabelI{\label{#1}}\def\@dblLabelII{\label{#2}}}
\newcommand*{\doubleequation}[3][]{%
    \par\vskip\abovedisplayskip\noindent
    \if\relax\detokenize{#1}\relax
       \let\@dblLabelI\@empty
       \let\@dblLabelII\@empty
    \else % we assume here that the optional argument
          % has the required shape A,B
       \@dblequationAux #1,%
    \fi
    \makebox[0.5\linewidth-1.5em]{%
     \hspace{\stretch2}%
     \makebox[0pt]{$\displaystyle #2$}%
     \hspace{\stretch1}%
    }%
    \makebox[0.5\linewidth-1.5em]{%
     \hspace{\stretch1}%
     \makebox[0pt]{$\displaystyle #3$}%
     \hspace{\stretch2}%
    }%
    \makebox[3em][r]{(%
  \refstepcounter{equation}\theequation\@dblLabelI, 
  \refstepcounter{equation}\theequation\@dblLabelII)}%
  \par\vskip\belowdisplayskip
}
\theoremstyle{plain}
\newtheorem{theorem}{Theorem}[section]
\newtheorem{example}[theorem]{Example}
\newtheorem{lemma}[theorem]{Lemma}
\newtheorem{corollary}[theorem]{Corollary}
\theoremstyle{definition}
\newtheorem{definition}[theorem]{Definition}
\theoremstyle{remark}
\newtheorem{remark}[theorem]{Remark}
\newcommand{\GG}{\mathcal{G}}
\newcommand{\MM}{\mathcal{M}}
\newcommand{\PP}{\mathcal{P}}
\DeclareMathOperator{\NGASS}{NG}
\newcommand{\R}{\mathbb{R}}
\newcommand{\mB}{\mathbf{B}}
\newcommand{\tmB}{\Tilde{\mathbf{B}}}
\newcommand{\mA}{\mathbf{A}}
\newcommand{\tmA}{\Tilde{\mathbf{A}}}
\newcommand{\indep}{\perp \!\!\! \perp}
\newcommand{\dsep}{\perp \!}
\DeclareMathOperator{\doop}{do}
\DeclareMathOperator*{\argmin}{arg\,min}
\def\newop#1{\expandafter\def\csname #1\endcsname{\mathop{\rm
#1}\nolimits}}
\icmltitlerunning{Causal Effect Identification in LiNGAM Models with Latent Confounders}
\begin{document}

\twocolumn[
\icmltitle{Causal Effect Identification in LiNGAM Models with Latent Confounders}

% It is OKAY to include author information, even for blind
% submissions: the style file will automatically remove it for you
% unless you've provided the [accepted] option to the icml2024
% package.

% List of affiliations: The first argument should be a (short)
% identifier you will use later to specify author affiliations
% Academic affiliations should list Department, University, City, Region, Country
% Industry affiliations should list Company, City, Region, Country

% You can specify symbols, otherwise they are numbered in order.
% Ideally, you should not use this facility. Affiliations will be numbered
% in order of appearance, which is preferred.
\icmlsetsymbol{equal}{*}

\begin{icmlauthorlist}
\icmlauthor{Daniele Tramontano}{tum}
\icmlauthor{Yaroslav Kivva}{epfl}
\icmlauthor{Saber Salehkaleybar}{liacs}
\icmlauthor{Mathias Drton}{tum,mcml}
\icmlauthor{Negar Kiyavash}{epfl}
\end{icmlauthorlist}

\icmlaffiliation{tum}{Technical University of Munich, Munich, Germany}
\icmlaffiliation{mcml}{Munich Center for Machine Learning, Munich, Germany}
\icmlaffiliation{epfl}{Ecole Polytechnique Fédérale de Lausanne, Lausanne, Switzerland}
\icmlaffiliation{liacs}{Leiden Institute of Advanced Computer Science, Leiden University, Netherlands}

\icmlcorrespondingauthor{Daniele Tramontano}{daniele.tramontano@tum.de}

% You may provide any keywords that you
% find helpful for describing your paper; these are used to populate
% the "keywords" metadata in the PDF but will not be shown in the document
\icmlkeywords{Machine Learning, ICML}
\vskip 0.3in
]

% this must go after the closing bracket ] following \twocolumn[ ...

% This command actually creates the footnote in the first column
% listing the affiliations and the copyright notice.
% The command takes one argument, which is text to display at the start of the footnote.
% The \icmlEqualContribution command is standard text for equal contribution.
% Remove it (just {}) if you do not need this facility.

\printAffiliationsAndNotice{}  % leave blank if no need to mention equal contribution
% \printAffiliationsAndNotice{\icmlEqualContribution} % otherwise use the standard text.

\begin{abstract}
We study the generic identifiability of causal effects in linear non-Gaussian acyclic models (LiNGAM) with latent variables. We consider the problem in two main settings: When the causal graph is known a priori, and when it is unknown. In both settings, we provide a complete graphical characterization of the identifiable direct or total causal effects among observed variables. Moreover, we propose efficient algorithms to certify the graphical conditions. Finally, we propose an adaptation of the reconstruction independent component analysis (RICA) algorithm that estimates the causal effects from the observational data given the causal graph. Experimental results show the effectiveness of the proposed method in estimating the causal effects.
\end{abstract}

\section{Introduction}
\label{sec:intro}
Predicting the impact of an unseen intervention in a system is a crucial challenge in many fields, such as medicine \cite{sanchez:2022, michoel:2023}, policy evaluation \cite{athey:2017}, fair decision-making \cite{kilbertus:2017}, and finance \cite{de:2023}. 
Randomized experiments form the gold standard for addressing this challenge, but are often infeasible due to ethical concerns or prohibitively high costs.
To tackle situations in which only observational data are available, one needs to make additional assumptions on the underlying causal system. The field of causal inference strives to formalize such assumptions.
One notable approach in causal inference is modeling causal relationships through structural equation models (SEM) \citep{pearl:2009}. 
In this framework, a random vector is associated with a directed acyclic graph (DAG). Each vector component is associated with a node in the graph and is a function of the random variables corresponding to its parents in the graph and its corresponding exogenous noise.

In this paper, we mainly focus on the identifiability of causal effects in the important subclass of linear SEM and characterize graphically which causal effects can be uniquely determined from the observational data.
When the exogenous noises in a linear SEM are Gaussian, the entire information about the model is contained in the covariance matrix among the variables, with all the higher-order moments of the distribution being uninformative. This entails that the causal structure as well as other causal quantities are often not identifiable from mere observational data. The most prominent instance in the context of causal structure learning is that the causal graph is identifiable only up to an equivalence class 
\citep[e.g.,][\S10]{drton:2018}.
This motivated the widespread use of the linear non-Gaussian additive noise model (LiNGAM), where the exogenous noises are non-Gaussian. 

The seminal work of \citet{shimizu:2006} showed that in the setting of LiNGAM, the true underlying causal graph is uniquely identifiable when all the variables are observed. Since then, a rich literature on this topic has emerged, focusing mainly on the identification and the estimation of the causal graph; see, e.g., \citet{adams:2021,shimizu:2022, yang:2022, wang:2023, wang:cd:2023} for recent results that allow for the presence of hidden variables. Indeed, linear models remain the backbone of problem abstraction in many scientific disciplines, because they offer simple qualitative interpretations and can be learned with moderate sample sizes \citep[Principle 1]{pe:2011}. In particular, the LiNGAM model finds application in diverse scientific fields, such as Neuroscience \cite{chiyohara:2023}, Economics \cite{ciarli:2023}, or Epidemiology \cite{barrera:2022}.

Within the LiNGAM literature, causal effect identification has received less attention; only few recent work  \citep{kivva:2023, shuai:2023} have exploited the non-Gaussianity to provide identification formulas that work for specific causal graphs. The only graphical criteria for identification are given in \citet{salehkaleybar:2020, yang:2022}. The main drawback of the aforementioned papers is that they target simultaneous recovery of all the causal effects. However, in many applications, we are interested in causal effects of only some subset of variables on others. Indeed, it may be the case that some
causal effects are identifiable while others are not; see \cref{fig:proxy} for an example in the context of proxy variable graphs. 
In this paper, we provide necessary and sufficient graphical conditions for the generic identifiability (see \cref{subsec:ident} for the exact definition) of  direct and total causal effects between a given pair of observed variables in a LiNGAM.

\subsection{Contribution}
\label{subsec:contr}

Our main contributions are as follows:
\begingroup
\setlength{\itemindent}{-10pt}
\setlength{\leftmargini}{0pt}
\begin{itemize}
    \item We provide {necessary and sufficient} graphical criteria for the generic identifiability of the causal effect, both when the causal graph is known a priori (\cref{subsec:ceid:unknown:graph}) and when it is unknown (\cref{subsec:ceid:known:graph}).
    \item We propose sound and complete algorithms that check our criteria in polynomial time in the size of the graph, in both considered settings (\cref{subsec:cert}).
    \item For practical estimation of the effect of interest, we propose an adaptation of the RICA algorithm for Independent Component Analysis \citep{le:2011}. Experimental results show that the proposed method can provide better estimates of causal effects when compared with previous work
    (\cref{subsec:est:alg}).
    \end{itemize}
\endgroup
\section{Problem Definition}
\label{sec:prob:def}
\subsection{Notation}
\label{subsec:not}
A \emph{directed graph} is a pair $\GG=(\mathcal{V},E)$ where $\mathcal{V}:=\{1,\dots,p\}$ is the set of nodes and $E\subseteq \{(i,j)\mid i, j\in \mathcal{V},\, i\neq j\}$ is the set of edges. We denote a pair $(i,j)\in E$ as $i\to j$. 

A (directed) path from node $i$ to node $j$ in $\GG$  is a sequence of nodes $\pi=(i_1=i,\dots,i_{k+1}=j)$ such that $i_s\to i_{s+1}\in E$ for $s\in\{1,\dots,k\}$.  
A cycle in $\GG$ is a path from a node $i$ to itself. 
A \emph{Directed Acyclic Graph} (DAG) is a directed graph without cycles.
If $i\to j\in E$, we say that $i$ is a parent of $j$, and $j$ is a child of $i$. If there is a path from $i$ to $j$ in $\GG$, we say that $i$ is an ancestor of $j$ and $j$ is a descendant of $i$. The sets of parents, children, ancestors, and descendants of a given node $i$ are denoted by $\pa(i), \ch(i), \an(i)$, and $\de(i)$, respectively.
In our work, we distinguish between observed and latent variables by partitioning the nodes in two sets $\mathcal{V} = \mathcal{O}\cup \mathcal{L}$,
of respective sizes $p_o$ and $p_l$.
Moreover, we define the set of observed descendants of a node $i$
as $\de_o(i):=(\de(i)\cup\{i\})\cap\mathcal{O}$.

We write vectors and matrices in boldface. The entry $(i,j)$ of a matrix $\mA$ is denoted by $[\mA]_{i,j}$. 
Let $I, J$ be subsets of the row and column sets of $\mA$, respectively. We denote the submatrix containing only the rows in $I$ and the columns in $J$ as $[\mA]_{I, J}$.
For a permutation $\sigma$,  $\mathbf{P}_{\sigma}$ denotes the associated permutation matrix.

\subsection{Model}
\label{subsec:model}
Let $\GG=(\mathcal{V},E)$ be a \emph{fixed} DAG on $p$ nodes.  In a fixed probability space, let 
$\mathbf{V} = (V_0,\dots, V_p)$ be a random vector taking values in $\R^p$ and satisfying the following structural equation model:
\begin{equation}
\label{eq:sem:1}
    \mathbf{V}=\mathbf{AV}+\mathbf{N}=\mathbf{BN},
\end{equation}
where $[\mathbf{A}]_{j,i}=0$ if $i\to j\notin E$,  $\mathbf{B}:=(\mathbf{I}-\mathbf{A})^{-1}$, and the enteries of the exogenous noise vector $\mathbf{N}$ are assumed to be jointly independent and \emph{non-Gaussian}. $\mathbf{V}$ is partitioned to $[\mathbf{V}_o,\mathbf{V}_l]$, where 
$\mathbf{V}_o$ is observed of dimension $p_o$, while $\mathbf{V}_l$ is latent and of dimension $p_l$. 
We can rewrite \eqref{eq:sem:1} as
\begin{equation*}
    \begin{bmatrix}
        \mathbf{V}_o\\
        \mathbf{V}_l
    \end{bmatrix}=\begin{bmatrix}
        \mathbf{A}_{o,o} & \mathbf{A}_{o,l}\\
        \mathbf{A}_{l,o} & \mathbf{A}_{l,l}
    \end{bmatrix}\begin{bmatrix}
        \mathbf{V}_o\\
        \mathbf{V}_l
    \end{bmatrix}+
        \begin{bmatrix}
        \mathbf{N}_o\\
        \mathbf{N}_l
    \end{bmatrix},
\end{equation*}
which implies that the observed random vector satisfies
\begin{equation}
\label{eq:sem:3}
    \mathbf{V}_o=    
    \mathbf{B}'\mathbf{N}=\begin{bmatrix}
        \mathbf{B}_o & \mathbf{B}_l
    \end{bmatrix}\begin{bmatrix}
        \mathbf{N}_o\\
        \mathbf{N}_l
    \end{bmatrix}
\end{equation}
with $\mathbf{B}':=[(\mathbf{I-A})^{-1}]_{\mathcal{O},\mathcal{V}}$. We refer to this model as the latent variable LiNGAM (lvLiNGAM).\footnote{Although  our results are presented for the lvLiNGAM model, our analysis only relies on the identifiability of the mixing matrix $\mathbf{B}'$ up to permutation and scaling. This can be also achieved in other settings as explained in \citet[\S2.3]{yang:2022} or \citet[\S3]{adams:2021}. Hence, our results also hold in these settings.}

\citet[\S3]{salehkaleybar:2020} show that the matrix $\mathbf{B}'$ can be expressed as follows:
\begin{equation}
    \label{eq:b:matrix}
    \mathbf{B}_o = (\mathbf{I}-\mathbf{D})^{-1},\quad \mathbf{B}_l = (\mathbf{I}-\mathbf{D})^{-1}\mathbf{A}_{o,l}(\mathbf{I}-\mathbf{A}_{l,l})^{-1},
\end{equation}
with $\mathbf{D}=\mathbf{A}_{o,o}+\mathbf{A}_{o,l}(\mathbf{I}-\mathbf{A}_{l,l})^{-1}\mathbf{A}_{l,o}$. The matrices $\mathbf{B}'$ and $\mathbf{D}$ contain information on the interventional distributions of $\mathbf{V}_o$. In particular,\footnote{Please see \citet[\S3]{pearl:2009} for the definition of \emph{do} intervention.}
\begin{equation}
    \begin{aligned}
    \label{eq:ce:of:int}
        [\mathbf{B}']_{i,j}&=\frac{\partial \mathbb{E}(V_i\mid \doop(V_j))}{\partial V_j},\\
        [\mathbf{D}]_{i,j}&=\frac{\partial \mathbb{E}(V_i\mid \doop(V_{\pa(i)}))}{\partial V_j}.
    \end{aligned}
\end{equation}
In other words, $[\mathbf{B}']_{j, i}$ is the average total causal effect of $j$ on $i$, while $[\mathbf{D}]_{j, i}$ is the average causal effect of $j$ on $i$ that is not mediated by other observed nodes. With slight abuse of terminology, we refer to the entries of $\mathbf{B}'$ as \emph{total causal effects} and to those of $\mathbf{D}$ as \emph{direct causal effects}.

\citet{hoyer:2008} show that for any lvLiNGAM model, an associated \emph{canonical model} exists, in which, in the corresponding graph, all the latent nodes have at least two children and have no parents. We refer to the graph corresponding to a canonical model as a canonical graph. The original and the associated canonical model are \emph{observationally} and \emph{causally} equivalent \citep[\S3]{hoyer:2008}. Subsequently, without loss of generality, we will assume our model is canonical in this sense. \citet[Cor.~11]{salehkaleybar:2020} proved that the number of latent variables is identifiable in canonical models. 

\begin{remark}
    Throughout the paper, we assume that the number of latent variables $p_l$ is known. 
\end{remark}

In canonical models, $\mathbf{A}_{l,o}=\mathbf{A}_{l,l}=\mathbf{0}$, and in particular 
\begin{equation}
    \label{eq:b:can:matrix}
    \mathbf{B}_o = (\mathbf{I}-\mathbf{A}_{o,o})^{-1},\qquad \mathbf{B}_l = (\mathbf{I}-\mathbf{A}_{o,o})^{-1}\mathbf{A}_{o,l}.
\end{equation}
For every canonical $\GG$, let $\R_\mathbf{A}^\GG$ be the set of all $p\times p$ real matrices $\mathbf{A}$ such that $[\mathbf{A}]_{i,j}=0$ if $j\to i\notin\GG$. Let $\R^\GG$ be the set of all $p_o\times p$ matrices, $\mathbf{B}'=[\mathbf{B}_{o},\mathbf{B}_{l}]$ that can be obtained from a matrix $\mathbf{A}\in\R_\mathbf{A}^\GG$ from \eqref{eq:b:can:matrix}. Let $\NGASS^p$ be the set of $p$ dimensional, non-degenerate, jointly independent \emph{non-Gaussian} random vectors, and let $\MM(\GG)$ be the set of all $p_o$ dimensional random vectors that can be expressed according to \eqref{eq:sem:3}, where the matrix $\mathbf{B}'\in\R^\GG$. 

\subsection{Identifiability}
\label{subsec:ident}

According to \eqref{eq:sem:3}, the mechanism generating the observational distribution, i.e., the probability distribution of  $\mathbf{V}_o$, only depends on the matrix $\mathbf{B}'$ and the exogenous noise vector $\mathbf{N}$.  Therefore, the parameters of interest such as the causal effects in \eqref{eq:ce:of:int}, are functions $\phi(\mathbf{B}')$.  As  generally there are multiple pairs $(\mathbf{B}',\mathbf{N})$, with different $\mathbf{B}'$s, that generate the same observational distribution, it is important
to clarify whether a parameter $\phi$
is \emph{identifiable}.  This holds if  $\phi(\mathbf{B}')$ takes the same value in all considered pairs $(\mathbf{B}',\mathbf{N})$ generating the same observational distribution.  Adding further assumptions about the causal graph may limit the data-generating mechanisms compatible with the observational data, leading to more parameters becoming identifiable.

In this paper, we study the identification of causal effects in two scenarios: when the causal graph $\GG$ is known and when it is not.
The remainder of the section provides a formal description of the resulting parameter identification problems in the lvLiNGAM setting.

    Fix a DAG $\GG$.  The observational random vector is obtained from the noise and the matrix $\mathbf{B}'$ according to the mapping
    \begin{equation}
    \label{eq:phi:map}
        \begin{aligned}
            \Phi_{\GG}: \R^{\GG}\times\NGASS^{p}&\xrightarrow{}\MM(\GG)\\
             (\mathbf{B}',\mathbf{N}) &\mapsto\mathbf{B}'\mathbf{N}=\begin{bmatrix}
        \mathbf{B}_o & \mathbf{B}_l
    \end{bmatrix}\begin{bmatrix}
        \mathbf{N}_o\\
        \mathbf{N}_l
    \end{bmatrix}.
        \end{aligned}
    \end{equation}
    Let $\phi$ be a real-valued function on $\R^\GG$. We say that the parameter $\phi$ is \emph{globally identifiable without knowledge of the graph} if for every 
    pair $(\mathbf{B}',\mathbf{N})\in\R^{\GG}\times\NGASS^{p}$ and the associated observed random vector
    $\mathbf{V}_o:=\Phi_\GG(\mathbf{B}',\mathbf{N})\in\MM(\GG)$, there does not exist a DAG $\Tilde{\GG}$ and a pair $(\Tilde{\mathbf{B}'}, \Tilde{\mathbf{N}})\in\R^{\tilde{\GG}}\times \NGASS^p$
    such that
    \begin{align}
        \label{eq:id:without:2}
        \Phi_{\tilde{\GG}}(\Tilde{\mathbf{B}'}, \Tilde{\mathbf{N}}) =_d \mathbf{V}_o \ \text{ but } \ \phi(\mathbf{B}')\neq \phi(\tilde{\mathbf{B}}').
    \end{align}
    Here, $=_d$ denotes equality in distribution of two random vectors. Note that the above definition allows for consideration of any $\tilde{\GG}$, possibly, different from $\GG$.

    In contrast, we say that $\phi$ is \emph{globally identifiable with knowledge of the graph} if for every 
    pair $(\mathbf{B}',\mathbf{N})\in\R^{\GG}\times\NGASS^{p}$ and the associated observed random vector
    $\mathbf{V}_o:=\Phi_\GG(\mathbf{B}',\mathbf{N})\in\MM(\GG)$, there does not exist
    another pair $ (\tilde{\mathbf{B}'},\Tilde{\mathbf{N}})\in\R^{\GG}\times\NGASS^{p}$ with 
    \begin{align}
        \label{eq:id:with:2}
        \Phi_{{\GG}}(\Tilde{\mathbf{B}'}, \Tilde{\mathbf{N}}) =_d \mathbf{V}_o  \ \text{ but } \  \phi(\mathbf{B}')\neq \phi(\tilde{\mathbf{B}}').
    \end{align}

    In latent variable models such as lvLiNGAM, global identifiability, as defined above, is often too stringent of a condition and fails to apply even in many commonly used models such as the instrumental variable model.  Thus, a weaker notion of so-called generic identifiability is  often considered for linear models \citep[\S16.4]{handbook}. We say that the parameter $\phi$ is \emph{generically} identifiable with (or without) knowledge of the graph, if the condition in  \eqref{eq:id:with:2} (or \eqref{eq:id:without:2}) holds for every $(\mathbf{B}',\mathbf{N})\in(\R^{\GG}\setminus \mathcal{B}_\text{except})\times\NGASS^{p}$ where $\mathcal{B}_\text{except}$ is a Lebesgue measure zero subset of $\R^\GG$. 
    
    Our results give necessary and sufficient conditions for generic identifiability of the parameters $[\mathbf{A}_{o,o}]_{i,j}$ and $[\mathbf{B}_{o}]_{i,j}$, both with and without knowledge of the underlying graph $\GG$.

\section{Main Results on Causal Effect Identification}
\label{sec:main:res}
This section presents our main identifiability results for lvLiNGAM. In Sections \ref{subsec:ceid:known:graph} and \ref{subsec:ceid:unknown:graph}, we characterize the identifiable causal effects, respectively, with or without knowledge of the graph. In Section \ref{subsec:examples}, we provide several examples in which our identifiability criteria hold; in \cref{subsec:cert}, we propose an algorithm to certify our criteria efficiently in time. Finally, in \cref{subsec:est:alg}, we present an estimation algorithm to estimate the causal effect from the observational data. 
 All proofs appear in the \cref{sec:proofs}.
 
\subsection{Identification with an Unknown Graph}
\label{subsec:ceid:unknown:graph}
In \citet{salehkaleybar:2020}, it is shown that if the total causal effect between any ancestor-descendent pair is non-zero, the mixing matrix $\mathbf{B}'$ can be identified by means of overcomplete independent component analysis (ICA), up to a permutation of the columns corresponding to a pair $(j,l)\in \mathcal{O}\times \mathcal{L}$ such that $\de_o(j)=\de_o(l)$. This leads to a graphical criterion for the identification of the entire mixing matrix, which we rephrase herein using our notation.

\begin{theorem}{\citep[Theo.~16]{salehkaleybar:2020}}
\label{thm:B:ident}    
    For  $\mathbf{B}'\in\R^\GG$,  matrix $\mathbf{B}_{o}$ is \emph{generically} identifiable without knowing the causal graph if and only if there are no pairs $j\neq i\in\mathcal{V}$ such that $\de_o(j)=\de_o(i)$.
\end{theorem}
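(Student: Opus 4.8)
The plan is to treat the overcomplete-ICA identifiability recalled just before the statement as the single analytic input and to reduce the theorem to a combinatorial statement about the family of descendant sets. Since $\mathbf{V}_o=\Phi_\GG(\mathbf{B}',\mathbf{N})$ is a linear mixture of the jointly independent non-Gaussian coordinates of $\mathbf{N}$, its law determines the columns of $\mathbf{B}'$ as a set of directions in $\R^{p_o}$, and generically (nonzero total effect along every ancestor-to-descendant pair, a condition failing only on a measure-zero set) the support of the column indexed by a node $v$ is exactly $\de_o(v)$. Recovering $\mathbf{B}_o$ thus amounts to deciding which of these recovered columns form the observed block and how to rescale and assign them to observed variables so that the result is a valid $(\mathbf{I}-\mathbf{A}_{o,o})^{-1}$ completed by a canonical latent block. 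A first useful reduction is that two distinct \emph{observed} nodes can never share a descendant set: $\de_o(i)=\de_o(j)$ would force $i$ and $j$ to be observed descendants of one another, contradicting acyclicity. Hence every offending pair $i\neq j$ in the statement contains at least one latent node.

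\emph{Sufficiency.} Assume no pair $i\neq j\in\mathcal{V}$ has $\de_o(i)=\de_o(j)$, so all $p$ recovered supports are distinct. I would invoke the cited identifiability up to permutations of columns $(j,l)\in\mathcal{O}\times\mathcal{L}$ with $\de_o(j)=\de_o(l)$: the hypothesis removes exactly these residual swaps, so the observed/latent partition of the recovered columns, the assignment of each observed column to its home variable, and, via the unit-diagonal normalization of $\mathbf{B}_o=(\mathbf{I}-\mathbf{A}_{o,o})^{-1}$, the scaling are all forced. Concretely, I would identify each observed column as the unique recovered column whose support $S$ equals $\de_o(i)$ for some $i\in S$, with $i$ the unique \emph{top} node of $S$ (unique by distinctness), and then read off $\mathbf{B}_o$; distinctness of the supports guarantees that no latent column can masquerade as an observed one and that the reconstruction is unique, hence every entry of $\mathbf{B}_o$ is identified.

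\emph{Necessity.} For the converse I would argue the contrapositive by exhibiting, for a generic true model, a second canonical model with the same observational law but a different $\mathbf{B}_o$. Given $i\neq j$ with $\de_o(i)=\de_o(j)=S$ and (after the reduction) a latent member of the pair, I take the recovered columns and reassign the latent column to play the role of an observed variable's column, homed at a suitable $k\in S$ and rescaled to unit diagonal, while demoting the displaced column into the latent block. Because the two columns share the support $S$, one can reorder the observed nodes so that the modified observed block is again lower-unitriangular, i.e.\ $(\mathbf{I}-\tilde{\mathbf{A}}_{o,o})^{-1}$ for an acyclic $\tilde{\mathbf{A}}_{o,o}$, and the latent block stays canonical because $|S|\ge 2$ (a canonical latent has at least two observed children). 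Genericity ensures the two columns are non-proportional, so the reassigned column differs from the original and $\tilde{\mathbf{B}}_o\neq\mathbf{B}_o$; exhibiting matching independent non-Gaussian noises gives $\Phi_{\tilde\GG}(\tilde{\mathbf{B}}',\tilde{\mathbf{N}})=_d\mathbf{V}_o$, certifying non-identifiability.

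The step I expect to be the real obstacle is the necessity construction: one must verify that after moving a latent column into the observed block the coefficient matrix is still acyclic (choosing the home node and a compatible topological order, and allowing compensating cancellations in the latent columns so that all recovered supports are preserved) and that the latent part remains a valid canonical model, while simultaneously guaranteeing on a full-measure set of parameters that the entry of $\mathbf{B}_o$ in question genuinely changes. On the sufficiency side the mirror-image difficulty is proving that ruling out equal-support observed-latent swaps really forces the column-to-variable matching; both rest on the transitive-closure (laminar) structure of descendant families together with the genericity that excludes accidental support coincidences.
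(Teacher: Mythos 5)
First, a point of comparison: the paper does not prove \cref{thm:B:ident} at all --- it is imported verbatim from \citet[Theo.~16]{salehkaleybar:2020}, and no argument for it appears in \cref{sec:proofs}. So there is no in-paper proof to measure your attempt against; what follows assesses your sketch on its own terms and against the machinery the paper builds for its new results. Your sufficiency direction is fine but is essentially a one-line corollary of the input you invoke (identifiability of $\mathbf{B}'$ up to scaling and permutations of column pairs $(j,l)\in\mathcal{O}\times\mathcal{L}$ with $\de_o(j)=\de_o(l)$, i.e.\ Theorem~15 of the cited work): that statement already contains the hard combinatorial content, so the additional support-matching procedure you describe adds nothing beyond what is assumed, and the real question is whether you are entitled to treat Theorem~15 as given while proving Theorem~16.

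The genuine gap is in necessity, and it is twofold. First, the swap construction is only described, not verified: you must check that after promoting the latent column the observed block is again unitriangular for some acyclic ordering, that the demoted column yields a valid canonical latent node, and that some entry of $\mathbf{B}_o$ actually changes off a measure-zero parameter set. This is exactly what the paper's \cref{lem:adj:mat} supplies by explicit cofactor expansion: for $\de_o(j)=\de_o(l)$ with $j$ observed and $l$ latent, the swapped mixing matrix is $(\mathbf{I}-\tilde{\mathbf{A}}_{o,o})^{-1}$ for an explicit $\tilde{\mathbf{A}}$ supported on a DAG, and $[\tilde{\mathbf{B}}_o]_{i,j}-[\mathbf{B}_o]_{i,j}=\sum_{Q\in\mathcal{P}_{\setminus j}(l,i)}a^Q$, a nonzero polynomial whenever $i$ is a second child of $l$ (which exists by canonicity); reproducing or citing that computation would close this part. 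Second, and more seriously, your construction does not cover offending pairs consisting of two \emph{latent} nodes, which the hypothesis ``no pairs $j\neq i\in\mathcal{V}$'' includes: permuting two latent columns leaves $\mathbf{B}_o$ unchanged, and there is no observed variable with that support at which to ``home'' a promoted column, so your argument produces no second model with a different $\mathbf{B}_o$ in that case. You would either need a different non-identifiability witness for latent--latent coincidences or an argument that the operative condition really concerns only $\mathcal{O}\times\mathcal{L}$ pairs (as in the paper's own refinements such as \cref{thm:tot:eff}); as written, your proof does not establish the stated ``if and only if.''
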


\begin{remark}[The scaling matrix]
\label{rem:scaling}
    Equation \eqref{eq:b:can:matrix} implies that as long as we are focused on identifying the causal effect between observed variables alone, the scaling of the latent columns does not make a difference. Hence, without loss of generality, in the sequel, we assume that all the mixing matrices are scaled in such a way that the first non-zero entry in each column is equal to 1. In other words,  $\mathbf{A}_{il}=1$ if $i$ and $l$ are, respectively, observed and latent variables and $i$ is the first child of $l$ in a given causal order. Note that this  is always the case if $\de_o(l)=\de_o(i)$.
\end{remark}

In this section, we extend the result in Theorem \ref{thm:B:ident}  by providing \emph{necessary and sufficient} graphical conditions for the \emph{generic} identifiability for the entries of $\mathbf{B}_{o}$ and $\mathbf{A}_{o,o}$. 

Our first result refines the graphical condition of \cref{thm:B:ident} by adding \eqref{eq:tce:unknown:2}. This provides a complete graphical characterization of the identifiable total causal effects. 

\begin{theorem}[Total causal effect]
\label{thm:tot:eff}
    Consider any two observed variables  $i$ and $j$. The \emph{total causal effect} of $j$ on $i$ is \emph{generically} identifiable without knowledge of the graph \emph{if and only if} there is no $l\in\mathcal{L}$, such that    
    \begin{align}
    \label{eq:tce:unknown:1}
    \de_o(j) = \de_o&(l), \ \text{and}\\
    \label{eq:tce:unknown:2}
    i  \in\de^{\GG_{\setminus j}}_o&(l),
    \end{align}    
    where $\de^{\GG_{\setminus j}}_o(l)$ is the set of observed descendents of $l$ in the graph obtained from $\GG$ by removing all the edges pointing to $j$.
\end{theorem}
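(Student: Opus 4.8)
The plan is to reduce generic identifiability of the single entry $[\mathbf{B}_o]_{i,j}$ to a question about which columns of $\mathbf{B}'$ can legitimately occupy the ``observed $j$'' slot in an observationally equivalent model, and then to settle that question by a directed-path (total effect) decomposition. First I would recall from the reasoning behind \cref{thm:B:ident} that overcomplete ICA identifies $\mathbf{B}'$ only up to a column permutation and scaling, and that such a permutation yields a valid lvLiNGAM (i.e. $\tilde{\mathbf{B}}'\in\R^{\tilde\GG}$ for some canonical DAG $\tilde\GG$) exactly when it mixes columns sharing the same observed-descendant support $\de_o(\cdot)$. Since two distinct observed nodes never satisfy $\de_o(i)=\de_o(j)$ — each lies in its own observed descendant set, which would force a cycle if it held for both — and permutations among latent columns leave $\mathbf{B}_o$ untouched, the only operation that can alter the column in position $j$ is a swap of observed $j$ with a latent $l$ such that $\de_o(l)=\de_o(j)$, i.e. an $l$ satisfying \eqref{eq:tce:unknown:1}.

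Invoking the scaling convention of \cref{rem:scaling}, both column $j$ of $\mathbf{B}_o$ and column $l$ of $\mathbf{B}_l$ have their topmost nonzero entry at row $j$ equal to $1$, since $j$ is the causal-order-minimal element of $\de_o(j)=\de_o(l)$; the swap is therefore scale-free and produces $[\tilde{\mathbf{B}}_o]_{i,j}=[\mathbf{B}_l]_{i,l}$. As the only column that can land in position $j$ is either $\mathbf{B}'$'s own column $j$ or that of some twin latent $l$, I conclude that $[\mathbf{B}_o]_{i,j}$ is generically identifiable \emph{if and only if} $[\mathbf{B}_l]_{i,l}=[\mathbf{B}_o]_{i,j}$ for every latent $l$ obeying \eqref{eq:tce:unknown:1}.

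Next I would evaluate both quantities as sums over directed paths using $\mathbf{B}'=[(\mathbf{I}-\mathbf{A})^{-1}]_{\mathcal{O},\mathcal{V}}$, each entry being the sum, over directed source-to-target paths, of the product of the edge weights along the path. Splitting the $l\to i$ paths according to whether they traverse $j$, and using that every through-$j$ path factors as an $l\to j$ path followed by a $j\to i$ path together with $[\mathbf{B}_l]_{j,l}=1$, I obtain
\begin{equation*}
[\mathbf{B}_l]_{i,l}=[\mathbf{B}_o]_{i,j}+\sum_{\pi:\ l\to i\ \text{avoiding}\ j}\ \prod_{e\in\pi}[\mathbf{A}]_{e}.
\end{equation*}
The residual sum runs over directed $l\to i$ paths that use no edge into $j$, i.e. the paths witnessing $i\in\de^{\GG_{\setminus j}}_o(l)$, which is exactly \eqref{eq:tce:unknown:2}. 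Because distinct directed paths in a DAG give distinct squarefree monomials, this residual is a nonzero polynomial in the edge weights precisely when at least one such path exists, and vanishes identically otherwise. Hence $[\mathbf{B}_l]_{i,l}=[\mathbf{B}_o]_{i,j}$ for all parameter values iff $i\notin\de^{\GG_{\setminus j}}_o(l)$, while $i\in\de^{\GG_{\setminus j}}_o(l)$ forces the two to differ off a Lebesgue-null set.

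Combining the two steps yields the theorem: the total effect is generically identifiable iff no latent $l$ satisfies both \eqref{eq:tce:unknown:1} and \eqref{eq:tce:unknown:2}. For the ``if'' direction I would take the finite union, over the twin latents $l$, of the null sets on which a discrepancy could arise as the exceptional set $\mathcal{B}_\text{except}$; for the ``only if'' direction, the single twin-swap against a violating $l$ already exhibits an observationally equivalent model realizing a different value on a positive-measure set. I expect the main obstacle to be the first step — rigorously pinning down that the admissible alternative models reduce exactly to the twin-swaps, including verifying that the swapped matrix is realizable by a genuine canonical DAG and that the scaling convention indeed forces scale $1$ — since the path-sum computation of the second step is routine once that reduction is established.
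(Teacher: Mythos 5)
Your proposal follows essentially the same route as the paper's proof: reduce the problem, via the overcomplete-ICA identifiability result of \citet{salehkaleybar:2020}, to comparing column $j$ of $\mathbf{B}_o$ with the column of a twin latent $l$ satisfying $\de_o(l)=\de_o(j)$, then expand both entries as sums of path monomials, factor the through-$j$ paths using the scaling $[\mathbf{B}']_{j,l}=1$, and observe that the difference $\sum_{Q\in\mathcal{P}_{\setminus j}(l,i)}a^Q$ is a nonzero polynomial exactly when $i\in\de^{\GG_{\setminus j}}_o(l)$. This matches the paper's argument step for step (including the appeal to the Okamoto-type measure-zero lemma for genericity), so no further comparison is needed.
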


The next result provides a complete graphical characterization of the identifiable direct causal effects.

\begin{theorem}[Direct causal effect]
\label{thm:dce:id}
    Consider any two observed variables  $i$ and $j$. The direct causal effect of $j$ on $i$ is \emph{generically} identifiable without knowledge of the graph if and only if there are no pairs $(k, l)\in\mathcal{O}\times\mathcal{L}$ such that 
        \begin{align}
            \label{eq:dce:id:cond}
            &\de_o(k)=\de_o(l),\\
            \label{eq:dce:id:cond:1}
            &i\in\ch(l),\\
            \label{eq:dce:id:cond:2}
            &k\in\ch(j)\cup\{j\}.
        \end{align}
    
\end{theorem}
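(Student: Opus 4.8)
The plan is to reduce identifiability of the direct effect $[\mathbf{A}_{o,o}]_{i,j}$ to an \emph{invariance} question under the residual ambiguity left by overcomplete ICA, and then to settle that question with a single rank-one (Sherman--Morrison) computation. First I would recall that, by \cref{thm:B:ident} and the discussion preceding it, the distribution of $\mathbf{V}_o$ determines the columns of $\mathbf{B}'$ only up to permutation and scaling, and that after the normalization of \cref{rem:scaling} the sole remaining freedom among valid canonical models is to relabel columns sharing the same support $\de_o$. Concretely, the admissible relabelings are generated by transpositions swapping an observed node $k$ with a latent node $l$ whenever $\de_o(k)=\de_o(l)$; note that in a canonical graph every latent node has only observed children, so such a $k$ is forced to be the topologically first child of $l$, whence $[\mathbf{A}_{o,l}]_{k,l}\neq 0$. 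Since $\mathbf{A}_{o,o}=\mathbf{I}-\mathbf{B}_o^{-1}$, the parameter $[\mathbf{A}_{o,o}]_{i,j}$ is generically identifiable precisely when its value is unchanged by every such swap.

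The core step is to compute the effect of one swap $(k,l)$. It replaces the $k$-th column $c_k$ of $\mathbf{B}_o$ by the normalized latent column $\tilde c_k$, i.e. $\tilde{\mathbf{B}}_o=\mathbf{B}_o+(\tilde c_k-c_k)e_k^\top$, a rank-one update. I would apply Sherman--Morrison using the two identities $\mathbf{B}_o^{-1}c_k=e_k$ and $\mathbf{B}_o^{-1}\tilde c_k=c\,[\mathbf{A}_{o,l}]_{\cdot,l}$ with $c\neq 0$ the scaling constant, the latter because $\mathbf{B}_l=\mathbf{B}_o\mathbf{A}_{o,l}$. One finds the denominator equals $c\,[\mathbf{A}_{o,l}]_{k,l}\neq 0$, so the swap is always admissible, and
\begin{equation*}
[\tilde{\mathbf{A}}_{o,o}]_{i,j}-[\mathbf{A}_{o,o}]_{i,j}=\frac{\big(c\,[\mathbf{A}_{o,l}]_{i,l}-\delta_{ik}\big)\big(\delta_{kj}-[\mathbf{A}_{o,o}]_{k,j}\big)}{c\,[\mathbf{A}_{o,l}]_{k,l}}.
\end{equation*}
Reading off the two numerator factors, the first is a nonzero function of the parameters exactly when $i\in\ch(l)$ (using that $i=k$ already forces $i\in\ch(l)$), and the second exactly when $k\in\ch(j)\cup\{j\}$. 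Hence the swap $(k,l)$ perturbs $[\mathbf{A}_{o,o}]_{i,j}$ off a Lebesgue-null set if and only if \eqref{eq:dce:id:cond}--\eqref{eq:dce:id:cond:2} hold.

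From here the two directions follow. For necessity of the graphical condition (a bad pair implies non-identifiability), the swapped model is a valid canonical lvLiNGAM, equal in distribution to the original, but with a generically different $[\mathbf{A}_{o,o}]_{i,j}$; so the effect is not identifiable. For sufficiency I would argue that every valid alternative model is obtained by choosing, independently within each support class, which column plays the observed role, so the entire relabeling group is generated by the observed--latent transpositions above; if no bad pair exists, none of these generators changes $[\mathbf{A}_{o,o}]_{i,j}$, whence its value is invariant and the effect is identifiable.

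I expect the main obstacle to lie in this sufficiency direction. The delicate points are: establishing that the admissible relabelings are \emph{exactly} generated by single observed--latent swaps with equal $\de_o$ (and that each constructed $\tilde{\mathbf{B}}'$ is genuinely the mixing matrix of a canonical DAG with $p_l$ latents, i.e. verifying acyclicity of $\tilde{\mathbf{A}}_{o,o}$ and the support constraints), and that the per-swap generically-nonzero changes do not conspire to cancel along a composition of transpositions whose intermediate configurations correspond to different graphs. The genericity bookkeeping—arguing that the displayed change is a nonzero rational function of the model parameters, so its vanishing locus is null—should be routine by comparison.
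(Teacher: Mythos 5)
Your high-level strategy is the same as the paper's: by \citet[Thm.~15]{salehkaleybar:2020} the residual ambiguity after overcomplete ICA is a column relabeling within $\de_o$-classes, so one computes how a single observed--latent swap perturbs $\mathbf{A}_{o,o}=\mathbf{I}-\mathbf{B}_o^{-1}$ and reads off conditions \eqref{eq:dce:id:cond}--\eqref{eq:dce:id:cond:2} from generic non-vanishing of that perturbation. The genuine difference is in how the perturbation is computed. The paper's \cref{lem:adj:mat} expands $\det([\mathbf{B}_o+C]_{\setminus k,\setminus i})$ along the swapped column and then runs a sign-cancellation argument over path systems via the Gessel--Viennot--Lindstr\"om lemma (\cref{thm:gvl}); your Sherman--Morrison computation reaches the rank-one perturbation in two lines using $\mathbf{B}_l=\mathbf{B}_o\mathbf{A}_{o,l}$. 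I checked your closed form on examples and it is correct; it is in fact cleaner than the paper's displayed conclusion, whose coefficient $c^{j}_{l,i}$ retains the full path sum $\sum_{Q\in\mathcal{P}_{\setminus j}(l,i)}a^Q$ even though the cancellations established in its own proof collapse it to the single edge weight $[\mathbf{A}_{o,l}]_{i,l}$, exactly as your formula predicts. The two points you flag as delicate are precisely where the paper invests its remaining machinery: the classification of admissible relabelings is imported from \citet[Thm.~15]{salehkaleybar:2020}, and compositions of disjoint swaps are treated by the induction in the proof of \cref{thm:graph:id}; for necessity a single transposition suffices, since one alternative model is enough.

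Two concrete gaps remain. First, your parenthetical claim that the first numerator factor is nonzero ``exactly when $i\in\ch(l)$ (using that $i=k$ already forces $i\in\ch(l)$)'' fails in the case $i=k$: the unit-diagonal constraint on $\tilde{\mathbf{B}}_o$ forces $c[\mathbf{A}_{o,l}]_{k,l}=1$, so the factor $c[\mathbf{A}_{o,l}]_{i,l}-\delta_{ik}$ vanishes identically there and the exhibited swap does \emph{not} perturb $[\mathbf{A}_{o,o}]_{i,j}$. Your argument therefore establishes necessity only for witnesses with $k\neq i$; a pair satisfying \eqref{eq:dce:id:cond}--\eqref{eq:dce:id:cond:2} with $k=i\in\ch(j)$ is not covered. (The paper's proof has the same blind spot: it infers that the relevant path set is nonempty from $i\in\ch(l)$, which fails when $i$ coincides with the swapped observed node.) Second, the sufficiency direction still owes the argument that a product of swaps whose individual perturbations all vanish has vanishing total perturbation; since admissible permutations are products of disjoint observed--latent transpositions in distinct $\de_o$-classes, iterating your rank-one update would close this, but as written it is announced rather than proved.
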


\begin{example}[Identification with instrumental variables]
\label{ex:iv:unknown:graph}

    The mixing matrix for the instrumental variable (IV) graph \citep[\S7.1]{cunningham:2021} in \cref{fig:IV:Unkwown:graph} has the following form
\begin{equation*}
    \mathbf{B}' = \begin{bmatrix}
        1 & 0 & 0 & 0\\
        b_{TI} & 1 & 1 & 0\\
        b_{TI}b_{YT} & b_{YT} & b_{YT}+b_{LY} & 1
    \end{bmatrix}.
\end{equation*}
The parameter of interest in the IV graph of  \cref{fig:IV:Unkwown:graph} is $b_{YT}$, i.e., the causal effect of the treatment $T$ on the outcome $Y$. Since $\de_o(T) =\de_o(L) = \{T, Y\}$, we can consider 
\begin{equation*}
    \begin{aligned}
        \Tilde{\mathbf{B}'} = \mathbf{B}'\cdot \mathbf{P}_{\sigma},\quad \Tilde{\mathbf{N}} = \mathbf{P}^{-1}_{\sigma}\cdot\mathbf{N}, 
    \end{aligned}
\end{equation*} where $\sigma$ is the transposition that permutes the columns corresponding to $T$ and $L$. From \eqref{eq:phi:map}, one can see that $$\Phi_{\tilde{\GG}_{IV}}(\Tilde{\mathbf{B}'}, \Tilde{\mathbf{N}}) =_d \Phi_{\GG_{IV}}({\mathbf{B}'}, \mathbf{N}), \qquad\forall\,\mathbf{N}\in\NGASS^p,$$ where $\Tilde{\GG}_{IV}$ is the graph obtained from the IV graph after adding an edge from $I$ to $Y$ (see \cref{appendix:fig:iv:graphs} in \cref{sec:proofs}). 

Since $[\Tilde{\mathbf{B}}']_{YT}\neq [{\mathbf{B}}']_{YT}$ the total causal effect of $T$ on $Y$ is not identifiable without knowing the true causal graph.

    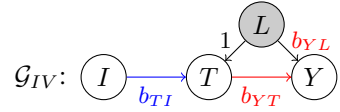
\begin{figure}[hb]
        \centering
        \begin{tikzpicture}[scale = 0.7]
        \node (I) at (-2.2,0) {$\GG_{IV}$:};
        \node[draw, circle, inner sep=2pt, minimum size=0.6cm] (I) at (-1,0) {$I$};
        \node[draw, circle, inner sep=2pt, minimum size=0.6cm] (T) at (1,0) {$T$};
        \node[draw, circle, inner sep=2pt, minimum size=0.6cm] (Y) at (3,0) {$Y$};
        \node[draw, circle, fill=gray!40, inner sep=2pt, minimum size=0.6cm] (H) at (2,1) {$L$};

        \draw[->, blue] (I) --node[below,blue] {\small$b_{TI}$} (T);
        \draw[->, red] (T) --node[below,red] {\small$b_{YT}$} (Y);
        \draw[->] (H) --node[left, pos = 0] {\small$1\,$} (T);
        \draw[->] (H) --node[right, pos = 0, red] {\small$\,b_{YL}$} (Y);

        \end{tikzpicture}
        \caption{Instrumental variable graph. The parameters in blue are identifiable without knowledge of the graph, while the parameters in red are not identifiable.}
        \label{fig:IV:Unkwown:graph}
    \end{figure}
Notice that we could derive this result by applying \cref{thm:dce:id} to the pair $(k, l) = (T, Y)$ directly.     
\end{example}

\subsection{Identification with a Known Graph}
\label{subsec:ceid:known:graph}
A permutation of two columns of the matrix $\mathbf{B}'$ may result in different graphs. This implies that if the graph is known, we can narrow down the set of possible permutations. In this section, we study how this additional assumption allows us to identify a parameter of interest (which is not generically identifiable). We will discuss an instance of this situation in \cref{ex:iv:known:graph}. 

The first result of this section provides a characterization of the column permutations that leave the graph unchanged.

\begin{theorem}
\label{thm:graph:id}
    For every $\mathbf{B}'$ outside a Lebesgue zero subset of $\R^\GG$, let $\Tilde{\mathbf{B}'} = \mathbf{B}'\cdot \mathbf{P}_{\sigma}$, where $\sigma$ is any permutation.    
    We have $\Tilde{\mathbf{B}'}\in\R^{\GG}$, if and only if 
    $$\de_o(i) = \de_o(\sigma(i)), \qquad\forall i\in\mathcal{V},$$ 
    and there are no $i, j\in \mathcal{O}$ and $l\in\mathcal{L}$ such that 
    \begin{equation*}
        i \in\pa(j)\cup\{j\}, \quad \sigma(j) = l, \quad \ch(l)\setminus\ch(i) \neq \emptyset. 
    \end{equation*}
\end{theorem}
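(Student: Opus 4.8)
The plan is to reduce the membership $\tilde{\mathbf{B}'}\in\R^\GG$ to two support conditions on the matrices that \eqref{eq:b:can:matrix} would reconstruct from $\tilde{\mathbf{B}'}$. Since the map \eqref{eq:b:can:matrix} is a bijection onto its image, with inverse $\mathbf{A}_{o,o}=\mathbf{I}-\mathbf{B}_o^{-1}$ and $\mathbf{A}_{o,l}=\mathbf{B}_o^{-1}\mathbf{B}_l$, we have $\tilde{\mathbf{B}'}\in\R^\GG$ exactly when $\tilde{\mathbf{A}}_{o,o}:=\mathbf{I}-\tilde{\mathbf{B}}_o^{-1}$ and $\tilde{\mathbf{A}}_{o,l}:=\tilde{\mathbf{B}}_o^{-1}\tilde{\mathbf{B}}_l$ are both supported on the edge set of $\GG$. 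The starting point is the path expansion of \eqref{eq:b:can:matrix}, which shows that for generic $\mathbf{B}'$ the support of column $c$ is exactly $\de_o(c)$. Because column $i$ of $\tilde{\mathbf{B}'}$ is column $\sigma(i)$ of $\mathbf{B}'$, matching the generic $\R^\GG$ support pattern column by column forces $\de_o(\sigma(i))\subseteq\de_o(i)$ for all $i$, which upgrades to the equality $\de_o(i)=\de_o(\sigma(i))$ by counting over the bijection $\sigma$; this is how I would obtain the first displayed condition. I would record two consequences: distinct observed nodes cannot share a $\de_o$-set (a $2$-cycle would otherwise appear), so $\sigma$ either fixes an observed node or sends it to a latent node $l$; and whenever $\sigma(j)=l$ the node $j$ is forced to be the topmost child of $l$ (an ancestor of every other observed descendant), which via \cref{rem:scaling} makes the diagonal of $\tilde{\mathbf{B}}_o$ equal to $1$, so that $\tilde{\mathbf{A}}_{o,o}$ is well defined with zero diagonal.

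The heart of the argument is the observed block. Under the first condition $\tilde{\mathbf{B}}_o$ arises from $\mathbf{B}_o$ by replacing, for each $j$ with $\sigma(j)=l\in\mathcal{L}$, the $j$-th column by the latent column $[\mathbf{B}_l]_{\cdot,l}$. For a single replacement this is a rank-one update, and I would apply Sherman--Morrison: writing $\mathbf{u}=\mathbf{B}_o^{-1}[\mathbf{B}_l]_{\cdot,l}-\mathbf{e}_j=[\mathbf{A}_{o,l}]_{\cdot,l}-\mathbf{e}_j$, one checks $\mathbf{u}$ is supported on $\ch(l)\setminus\{j\}$ (its $j$-th entry vanishes by $[\mathbf{A}_{o,l}]_{j,l}=1$), and obtains $\tilde{\mathbf{A}}_{o,o}=\mathbf{A}_{o,o}+\mathbf{u}\,[\mathbf{B}_o^{-1}]_{j,\cdot}$, where the row $[\mathbf{B}_o^{-1}]_{j,\cdot}$ is supported on $\{j\}\cup(\pa(j)\cap\mathcal{O})$. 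Thus the only positions where $\tilde{\mathbf{A}}_{o,o}$ can differ from $\mathbf{A}_{o,o}$ are the $(c,i)$ with $c\in\ch(l)\setminus\{j\}$ and $i\in\{j\}\cup(\pa(j)\cap\mathcal{O})$, i.e. the potential new edges $i\to c$. Each such entry is a nonzero monomial in the free edge weights, so it vanishes only on a measure-zero set; hence $\tilde{\mathbf{A}}_{o,o}$ respects $\GG$ generically if and only if every such $i\to c$ is already an edge of $\GG$, which is exactly the second condition.

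For a general permutation I would upgrade the rank-one step to a simultaneous rank-$m$ (Woodbury) update over all observed-to-latent moves, ordered topologically so the correction matrix is triangular and cross terms do not cancel the leading contributions; the same edge-by-edge criterion survives. In parallel I would treat $\tilde{\mathbf{A}}_{o,l}=\tilde{\mathbf{B}}_o^{-1}\tilde{\mathbf{B}}_l$, whose columns are governed by the same inverse and must be supported inside the corresponding $\ch(\lambda)$. This closes both directions: sufficiency by exhibiting the reconstructed $\tilde{\mathbf{A}}\in\R_{\mathbf{A}}^{\GG}$ whose support lands only on existing edges, and necessity by the non-vanishing of the candidate new entries.

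The step I expect to be the main obstacle is the simultaneous multi-column update together with the genericity bookkeeping: Sherman--Morrison is transparent for one replaced column, but with several columns exchanged I must rule out algebraic cancellations among the correction terms, which requires identifying the exceptional locus as the zero set of an explicit finite family of polynomials (the candidate new entries) and proving each is not the zero polynomial. A secondary delicate point is the boundary case $i=j$, where the new edges are $j\to c$ for $c\in\ch(l)\setminus\{j\}$, so the relevant containment is against $\ch(j)$ with $j$ itself excluded; keeping this $\{j\}$ bookkeeping consistent with the way the second condition is phrased is where I would be most careful.
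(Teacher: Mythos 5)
Your proposal follows the same architecture as the paper's proof: reduce membership of $\Tilde{\mathbf{B}'}$ in $\R^\GG$ to the support patterns of the reconstructed $\tilde{\mathbf{A}}_{o,o}=\mathbf{I}-\tilde{\mathbf{B}}_o^{-1}$ and $\tilde{\mathbf{A}}_{o,l}=\tilde{\mathbf{B}}_o^{-1}\tilde{\mathbf{B}}_l$, derive the condition $\de_o(i)=\de_o(\sigma(i))$ from column supports (the paper simply cites \citet[Thm.~15]{salehkaleybar:2020} here), analyze a single observed--latent swap as a one-column perturbation of $\mathbf{B}_o$, and then compose transpositions. Where you genuinely diverge is in the key computational step, the paper's \cref{lem:adj:mat}: the paper computes $[\tilde{\mathbf{A}}_{o,o}]_{i,k}$ by cofactor expansion of $\det([\mathbf{B}_o+C]_{\setminus k,\setminus i})$ and a careful cancellation argument over non-intersecting path systems via the Gessel--Viennot--Lindstr\"om lemma, whereas your Sherman--Morrison computation (with denominator $1+u_j=a_{j,l}=1$, justified exactly by \cref{rem:scaling} and your correct observation that $j$ must be the topmost child of $l$) reaches the same conclusion --- the only perturbed entries sit at $(c,i)$ with $c\in\ch(l)\setminus\{j\}$ and $i\in\pa(j)\cup\{j\}$, each a nonzero monomial $\pm a_{c,l}[\mathbf{I}-\mathbf{A}_{o,o}]_{j,i}$ --- in a few lines and with an exact closed form; this is a real simplification of the paper's argument. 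The one place your plan is not yet a proof is exactly the place you flag: the passage from a single transposition to a general permutation. The paper handles this by decomposing $\sigma=\sigma_n\cdots\sigma_1$ and proving by induction that $[\mathbf{A}^{\sigma}_{o,o}]_{i,k}$ equals $[\mathbf{A}_{o,o}]_{i,k}$ plus the sum of the single-transposition corrections plus a remainder $r_\sigma$ of degree at least two in $\R[\GG_\mA]$ that cannot cancel the leading monomials of the corrections; your proposed Woodbury update with a topological ordering is a plausible alternative but, as you acknowledge, the non-cancellation of cross terms still has to be argued polynomial by polynomial, which is precisely the content of the paper's remainder bookkeeping. Your handling of the $i=j$ boundary case and of the latent block $\tilde{\mathbf{A}}_{o,l}$ matches the paper's.
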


\begin{example}[\cref{ex:iv:unknown:graph} continued]
\label{ex:iv:known:graph}
 We now show that the causal effect of $T$ on $Y$ is identifiable if we assume the graph in \cref{fig:IV:Unkwown:graph} is the true underlying graph. 
 
 The adjacency matrix corresponding to $\tilde{\mathbf{B}'}$ has the following form
\begin{equation*}
    \Tilde{\mathbf{A}}_{o,o} = \mathbf{I}-\tilde{\mathbf{B}}_{o}^{-1} = \begin{bmatrix}
        1 & 0 & 0 \\
        b_{TI} & 1 & 0 \\
        -b_{TI}b_{YL} & b_{YT}+b_{YL} & 1
    \end{bmatrix}.
\end{equation*}
This form is not compatible with the graph in \cref{fig:IV:Unkwown:graph} since the entry corresponding to the edge from $I$ to $Y$ is nonzero. Hence, if we were told that the graph in \cref{fig:IV:Unkwown:graph} is the true underlying causal structure, the only valid permutation of $\mathbf{B}'$ would be the identity. Therefore, we can identify  $b_{YT}$, the parameter of our interest.

Notice that we could derive this result by applying \cref{thm:graph:id} to the triple $(i, j, l) = (I, T, L)$ directly. 
See \cref{app:example:iv} for more discussion of the IV graph.
\end{example}

Using \cref{thm:graph:id}, we can refine the criteria of \cref{subsec:ceid:unknown:graph} for the setting of known causal graph. In particular, the next two theorems characterize all the total and direct causal effects that are identifiable when the graph is known.

\begin{theorem}[Total Causal Effect]
\label{thm:tot:eff:known:graph}
    Consider any two observed variables  $i$ and $j$. The \emph{total causal effect} of $j$ on $i$ is \emph{generically} identifiable with knowledge of the graph \emph{if and only if} there is no $l\in\mathcal{L}$, such that
    \begin{align}
        \label{eq:tot:ce:id:known:1}
        \de_o(j)=\de_o(l),&\\
        \label{eq:tot:ce:id:known:2}
        i\in\de^{\GG_{\setminus j}}_o(l),&\\
        \label{eq:tot:ce:id:known:3}
        \ch(l)\setminus\ch(k)=\emptyset,&\qquad\forall k\in\pa(j)\cup\{j\}.
    \end{align}
\end{theorem}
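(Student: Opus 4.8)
The plan is to reduce \cref{thm:tot:eff:known:graph} to a combination of \cref{thm:tot:eff} (the unknown-graph characterization of total-effect identifiability) and \cref{thm:graph:id} (the characterization of which column permutations preserve the graph). The key conceptual point is that identifiability \emph{with} knowledge of the graph is strictly weaker than \emph{without}: the only way a total effect $[\mathbf{B}_o]_{i,j}$ can fail to be identifiable is if there exists an alternative pair $(\tilde{\mathbf{B}'},\tilde{\mathbf{N}})$ producing the same observational law with $[\mathbf{B}_o]_{i,j}\neq[\tilde{\mathbf{B}}_o]_{i,j}$. Since the mixing matrix is identifiable up to column permutation and scaling (as recalled before \cref{thm:B:ident}, and with scaling fixed by \cref{rem:scaling}), generically every such $\tilde{\mathbf{B}'}$ is of the form $\mathbf{B}'\mathbf{P}_\sigma$. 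In the unknown-graph setting $\sigma$ ranges over all graph-unconstrained permutations, whereas in the known-graph setting \cref{thm:graph:id} forces $\tilde{\mathbf{B}'}\in\R^{\GG}$, so only those $\sigma$ satisfying the two conditions of \cref{thm:graph:id} are admissible.

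First I would set up the equivalence: the total effect of $j$ on $i$ is \emph{not} identifiable with a known graph iff there is a valid (graph-preserving) permutation $\sigma$ that changes the $(i,j)$-entry of $\mathbf{B}_o$. Tracing the analysis behind \cref{thm:tot:eff}, a permutation can alter $[\mathbf{B}_o]_{i,j}$ only by swapping the column of $j$ with the column of some latent $l$ satisfying $\de_o(j)=\de_o(l)$ (condition \eqref{eq:tot:ce:id:known:1}), and this swap actually affects the $i$-th entry precisely when $i\in\de^{\GG_{\setminus j}}_o(l)$ (condition \eqref{eq:tot:ce:id:known:2}); these are exactly conditions \eqref{eq:tce:unknown:1}--\eqref{eq:tce:unknown:2} of \cref{thm:tot:eff}. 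So \eqref{eq:tot:ce:id:known:1} and \eqref{eq:tot:ce:id:known:2} guarantee the existence of a column-permuted competitor $\tilde{\mathbf{B}'}$ with a different total effect; what remains is to decide whether such a competitor can be realized \emph{inside} $\R^{\GG}$.

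Next I would invoke \cref{thm:graph:id} to translate ``realizable inside $\R^{\GG}$'' into a combinatorial condition. The relevant permutation swaps $j$ with $l$, so I take $\sigma(j)=l$ in \cref{thm:graph:id}. The theorem's second clause then says the permutation fails to preserve the graph exactly when there exist $i'\in\pa(j)\cup\{j\}$ and $l$ with $\sigma(j)=l$ and $\ch(l)\setminus\ch(i')\neq\emptyset$. Negating this across all $i'\in\pa(j)\cup\{j\}$ yields the requirement $\ch(l)\setminus\ch(k)=\emptyset$ for all $k\in\pa(j)\cup\{j\}$, which is precisely the extra condition \eqref{eq:tot:ce:id:known:3}. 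Thus the effect is non-identifiable iff all three conditions \eqref{eq:tot:ce:id:known:1}--\eqref{eq:tot:ce:id:known:3} hold simultaneously for some $l\in\mathcal{L}$, giving the claimed characterization by contraposition.

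I expect the main obstacle to be the careful bookkeeping in two places. First, I must verify that restricting attention to single transposition-type permutations (swapping $j$ with one latent $l$) loses no generality: I would argue that any graph-preserving permutation decomposes into moves each of which either fixes $[\mathbf{B}_o]_{i,j}$ or is equivalent to such a swap, using the $\de_o$-equivalence structure from \cref{thm:graph:id} to rule out more exotic permutations that could change the entry only in combination. Second, I must confirm the \emph{generic} step: \cref{thm:graph:id} holds outside a Lebesgue-null set, and I need that on the same generic set the swap genuinely changes $[\mathbf{B}_o]_{i,j}$ whenever \eqref{eq:tot:ce:id:known:2} holds (otherwise a coincidental algebraic cancellation could restore identifiability). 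I would dispatch this by noting that the offending entry, as a polynomial in the free parameters of $\R^{\GG}$, is not identically zero precisely when $i$ is reachable from $l$ avoiding the incoming edges of $j$, so its vanishing locus is itself measure zero and can be absorbed into $\mathcal{B}_{\text{except}}$.
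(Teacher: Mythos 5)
Your proposal is correct and follows essentially the same route as the paper: the paper's own proof is a short reduction observing that the only change relative to \cref{thm:tot:eff} is the extra condition \eqref{eq:tot:ce:id:known:3}, which comes from \cref{thm:graph:id} because the column swap of $j$ with $l$ violating it cannot produce a matrix in $\R^{\GG}$. The additional bookkeeping you flag (reducing general permutations to the relevant transposition, and the generic nonvanishing of the perturbation polynomial) is exactly what the paper delegates to \cref{thm:graph:id} and to the proof of \cref{thm:tot:eff} via \cref{cor:measure:zero}, so no new idea is missing.
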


\begin{theorem}[Direct Causal Effect]
\label{thm:dce:known:graph}
    Consider any two observed variables  $i$ and $j$. The direct causal effect of $j$ on $i$ is \emph{generically} identifiable if and only if there are no pairs $(k,l)\in\mathcal{O}\times\mathcal{L}$ such that    
        \begin{align}
            \label{eq:dce:cond:2}
            \de_o(k)=\de_o(l),&\\
            \label{eq:dce:cond:3}
            i\in\ch(l),&\\
            \label{eq:dce:cond}
            k\in\ch(j)\cup\{j\},&\\                        
            \label{eq:dce:cond:1}
            \ch(l)\setminus\ch(k_1)=\emptyset,&\qquad\forall k_1\in\pa(k)\cup\{k\}.
        \end{align}
\end{theorem}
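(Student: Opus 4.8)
The plan is to derive this theorem by combining the unknown-graph criterion for the direct effect (\cref{thm:dce:id}) with the characterization of graph-preserving permutations (\cref{thm:graph:id}), in direct analogy to how \cref{thm:tot:eff:known:graph} refines \cref{thm:tot:eff}. The key conceptual point is that the direct causal effect $[\mathbf{A}_{o,o}]_{i,j}$ fails to be identifiable precisely when there exists an alternative pair $(\Tilde{\mathbf{B}'},\Tilde{\mathbf{N}})$ generating the same observational distribution with $[\Tilde{\mathbf{A}}_{o,o}]_{i,j}\neq[\mathbf{A}_{o,o}]_{i,j}$. Generically, by the identifiability of the mixing matrix up to column permutation and scaling, any such alternative must arise as $\Tilde{\mathbf{B}'}=\mathbf{B}'\mathbf{P}_\sigma$ for some permutation $\sigma$; and when the graph is \emph{known}, \cref{thm:graph:id} tells us exactly which $\sigma$ keep us inside $\R^\GG$. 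So the whole problem reduces to asking: which graph-preserving permutations $\sigma$ actually change the $(i,j)$ entry of the resulting $\mathbf{A}_{o,o}$?

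First I would recall, from the proof of \cref{thm:dce:id}, the structural description of the swaps that can alter a direct effect. A direct effect $[\mathbf{A}_{o,o}]_{i,j}$ is affected only by swapping an observed node $k$ with a latent node $l$ satisfying $\de_o(k)=\de_o(l)$ (so the swap is admissible in the sense of \cref{thm:B:ident}), where $i\in\ch(l)$ so that the $i$-th row is touched, and $k\in\ch(j)\cup\{j\}$ so that the relevant structural coefficient feeding into the $(i,j)$ entry is disturbed — these are exactly conditions \eqref{eq:dce:cond:2}–\eqref{eq:dce:cond}. Second, I would impose the additional requirement that the swap $\sigma=(k\,l)$ actually preserves the graph. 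Applying \cref{thm:graph:id} with the transposition of $k$ and $l$, the condition $\de_o(i)=\de_o(\sigma(i))$ holds automatically for the swapped pair by \eqref{eq:dce:cond:2} and trivially elsewhere; the remaining obstruction is the clause forbidding $i'\in\pa(k)\cup\{k\}$ with $\ch(l)\setminus\ch(i')\neq\emptyset$. Negating this obstruction to \emph{permit} the graph-preserving swap yields exactly the extra condition \eqref{eq:dce:cond:1}, namely $\ch(l)\setminus\ch(k_1)=\emptyset$ for all $k_1\in\pa(k)\cup\{k\}$.

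For the two directions: for the "if" direction (identifiability), I would argue that if no $(k,l)$ satisfies all four conditions \eqref{eq:dce:cond:2}–\eqref{eq:dce:cond:1} simultaneously, then every graph-preserving permutation either leaves the $(i,j)$ entry of $\mathbf{A}_{o,o}$ untouched or is inadmissible, so $[\mathbf{A}_{o,o}]_{i,j}$ takes a single value across all $(\Tilde{\mathbf{B}'},\Tilde{\mathbf{N}})\in\R^\GG\times\NGASS^p$ outside a measure-zero set. For the "only if" direction (non-identifiability), given such a $(k,l)$, I would exhibit the explicit transposition $\sigma=(k\,l)$, verify via \cref{thm:graph:id} that $\Tilde{\mathbf{B}'}=\mathbf{B}'\mathbf{P}_\sigma\in\R^\GG$ (this is where \eqref{eq:dce:cond:2} and \eqref{eq:dce:cond:1} are used), and then show by a generic-nonvanishing argument that $[\Tilde{\mathbf{A}}_{o,o}]_{i,j}\neq[\mathbf{A}_{o,o}]_{i,j}$ for $\mathbf{B}'$ outside a Lebesgue-null set (this is where \eqref{eq:dce:cond:3} and \eqref{eq:dce:cond} enter).

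The main obstacle I anticipate is twofold. The genuinely delicate step is showing that the induced change $[\Tilde{\mathbf{A}}_{o,o}]_{i,j}\neq[\mathbf{A}_{o,o}]_{i,j}$ is not merely possible but \emph{generic}: one must express both entries as rational functions of the underlying parameters in $\R^\GG_{\mathbf{A}}$ — recalling $\mathbf{A}_{o,o}=\mathbf{I}-\mathbf{B}_o^{-1}$ and that $\Tilde{\mathbf{B}}_o$ differs from $\mathbf{B}_o$ by moving a latent column into the observed block via the scaling convention of \cref{rem:scaling} — and verify that their difference is a nonzero rational function, whence its vanishing locus is measure zero. The subtlety is that after the swap the matrix must be re-scaled so that its columns obey the normalization of \cref{rem:scaling} before reading off $\Tilde{\mathbf{A}}_{o,o}$, and one must confirm this re-scaling does not accidentally restore the original entry. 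The second, more bookkeeping-heavy obstacle is confirming that conditions \eqref{eq:dce:cond:2}–\eqref{eq:dce:cond:1} are jointly \emph{exactly} the negation of identifiability — i.e., that it suffices to consider single transpositions rather than more complicated products of swaps; I would handle this by noting, as in the unknown-graph proofs, that admissible column permutations decompose into transpositions among classes of nodes sharing the same $\de_o$, and that a product of swaps alters $[\mathbf{A}_{o,o}]_{i,j}$ only if one of its transpositions already does.
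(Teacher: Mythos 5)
Your proposal is correct and follows essentially the same route as the paper: the paper's own proof of \cref{thm:dce:known:graph} is a two-line argument that takes the unknown-graph criterion of \cref{thm:dce:id} and adds condition \eqref{eq:dce:cond:1} as a direct consequence of \cref{thm:graph:id}, exactly as you propose. The technical points you flag as obstacles (generic non-vanishing of the entry change, the scaling convention, and the reduction to single transpositions) are handled in the paper by \cref{lem:adj:mat}, \cref{rem:scaling}, and the inductive decomposition in the proof of \cref{thm:graph:id}, respectively, so your outline matches the paper's logic step for step.
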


We conclude the section by giving the graphical condition for identification of the whole mixing matrix with knowledge of the graph.

\begin{corollary}[Mixing matrix identification]
\label{thm:mixing:matrix:known}
    For  $\mathbf{B}'\in\R^\GG$,  the entire matrix $\mathbf{B}_{o}$ is \emph{generically} identifiable with knowledge of the graph if and only if there are no $i, j\in~\mathcal{O}$, and $l\in\mathcal{L}$ such that \eqref{eq:tot:ce:id:known:1}, \eqref{eq:tot:ce:id:known:2}, and \eqref{eq:tot:ce:id:known:3} are satisfied.
\end{corollary}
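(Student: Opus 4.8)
The plan is to deduce the corollary almost directly from \cref{thm:tot:eff:known:graph}, by reducing identifiability of the whole matrix to identifiability of its individual entries. The first observation I would record is that, because $\mathbf{B}' = [\mathbf{B}_o, \mathbf{B}_l]$, every entry $[\mathbf{B}_o]_{i,j}$ with $i,j\in\mathcal{O}$ is exactly the total causal effect of $j$ on $i$, as recorded in \eqref{eq:ce:of:int}. Hence the matrix-valued parameter $\phi(\mathbf{B}')=\mathbf{B}_o$ is generically identifiable with knowledge of the graph if and only if each of its finitely many scalar entries is. For the diagonal entries this is automatic, since $\mathbf{B}_o=(\mathbf{I}-\mathbf{A}_{o,o})^{-1}$ has ones on the diagonal; consistently, for $j=i$ condition \eqref{eq:tot:ce:id:known:2} can never hold, as every directed path into $i$ uses an edge pointing to $i$, all of which are deleted in $\GG_{\setminus i}$. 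So the content lies entirely in the off-diagonal entries, for which \cref{thm:tot:eff:known:graph} gives the exact obstruction.

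The one step requiring genuine care is the reduction from simultaneous identifiability of all entries to identifiability of each entry separately, because of the \emph{generic} qualifier. For the easy direction, whenever two data-generating pairs lying outside a common null set yield the same $\mathbf{B}_o$, they yield the same value of every entry, so each entry is generically identifiable with that same exceptional set. For the converse, each entry $[\mathbf{B}_o]_{i,j}$ comes with its own Lebesgue-null exceptional set $\mathcal{B}^{(i,j)}_{\text{except}}\subseteq\R^\GG$ off which it is identifiable. Since $\mathcal{O}$ is finite there are only $p_o^2$ such sets, and their union $\bigcup_{i,j}\mathcal{B}^{(i,j)}_{\text{except}}$ is again null; outside this union every entry, and hence the whole matrix, is identifiable. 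This finite-union argument is the crux that lifts the entrywise characterization to the matrix level.

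With this reduction in hand, both implications of the corollary are immediate rephrasings of \cref{thm:tot:eff:known:graph} quantified over all pairs $(i,j)\in\mathcal{O}\times\mathcal{O}$. If no triple $(i,j,l)\in\mathcal{O}\times\mathcal{O}\times\mathcal{L}$ satisfies \eqref{eq:tot:ce:id:known:1}, \eqref{eq:tot:ce:id:known:2}, and \eqref{eq:tot:ce:id:known:3}, then for every pair $(i,j)$ the theorem guarantees that $[\mathbf{B}_o]_{i,j}$ is generically identifiable, and the finite-union argument upgrades this to identifiability of $\mathbf{B}_o$. Conversely, if some triple $(i,j,l)$ satisfies all three conditions, then the theorem shows $[\mathbf{B}_o]_{i,j}$ is not generically identifiable, so neither is $\mathbf{B}_o$. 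I do not anticipate any serious obstacle: the entire force of the statement is carried by \cref{thm:tot:eff:known:graph}, and the only bookkeeping is the measure-zero union over the finitely many observed-to-observed entries.
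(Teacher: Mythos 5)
Your proposal is correct and follows essentially the same route as the paper, which likewise observes that $\mathbf{B}_o$ is identifiable if and only if all of its entries are and then invokes \cref{thm:tot:eff:known:graph} entrywise. The only difference is that you spell out the finite-union-of-null-sets step (and the triviality of the diagonal entries) that the paper leaves implicit; this is a harmless and correct elaboration, not a different argument.
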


\subsection{Examples}
\label{subsec:examples}
We now highlight different scenarios in which the results in this section allow us to relax standard assumptions in identifying the causal effect. For the sake of simplicity in presentation, in the first two scenarios, we only consider two latent confounders in the system. Although these scenarios can be easily extended to any arbitrary number of latent confounders.
\begingroup
\setlength{\itemindent}{-10pt}
\setlength{\leftmargini}{0pt}
\begin{enumerate}
    \item \textbf{Proxy Variables}\\
    The presence of proxy variables allow the identification of the causal effect in linear models. In particular, it has been shown that the causal effect of treatment $T$ on outcome $Y$ is identifiable if the following two conditions hold (see, e.g., \citet[\S4]{kuroki:2014}, \citet[\S2]{liu:2023}):
    \begin{enumerate}
        \item There are as many proxies as there are confounders,
        \item $W\indep (T, Y)\mid L$ for every proxy variable $W$ and every latent confounder $L$.
    \end{enumerate}
 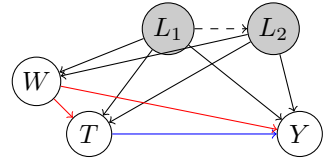
\begin{figure}[ht]
        \centering
        \begin{tikzpicture}[scale = 0.7]
            \node[draw, circle, fill=gray!40, inner sep=2pt, minimum size=0.6cm] (z1) at (1,0) {$L_1$};
            \node[draw, circle, fill=gray!40, inner sep=2pt, minimum size=0.6cm] (z2) at (3,0) {$L_2$};
            
            \node[draw, circle, inner sep=2pt, minimum size=0.6cm] (T) at (-0.5,-2) {$T$};
            \node[draw, circle, inner sep=2pt, minimum size=0.6cm] (Y) at (3.5,-2) {$Y$};
            
            \node[draw, circle, inner sep=2pt, minimum size=0.6cm] (W) at (-1.5,-1) {$W$};
            
            \draw[black, ->, dashed] (z1) to (z2);
            \draw[black, ->] (z1) to (T);
            \draw[black, ->] (z1) to (Y);
            \draw[black, ->] (z1) to (W);
            
            \draw[black, ->] (z2) to (T);
            \draw[black, ->] (z2) to (Y);
            \draw[black, ->] (z2) to (W);

            \draw[blue, ->] (T) to (Y);
            \draw[red, ->] (W) to (T);
            \draw[red, ->] (W) to (Y);
\end{tikzpicture}
        \caption{Causal graph with  proxy variable.         
        The parameter corresponding to the blue edge is generically identifiable if the graph is known, while those  corresponding to the red edges are not. The dashed edge is dropped in the corresponding canonical model.}
        \label{fig:proxy}
    \end{figure}
    
   Consider the causal graph in \cref{fig:proxy}. Both aforementioned conditions are violated as there are two latent confounders but a single proxy and there is an edge from this proxy to the treatment.
   Yet, in lvLiNGAM, we show that the causal effect from $T$ to $Y$ is identifiable.   
   For both the latent variables we have $W\in\de_o(L)\setminus\de_o(T)$, hence condition (\ref{eq:tot:ce:id:known:1}) cannot be satisfied. Using \cref{thm:tot:eff:known:graph}, this implies that the causal effect of our interest is identifiable.
   
    \item \textbf{Longitudinal Data}\\    
    The causal graph associated with a longitudinal data model \citep{imai:2017} is given in the left plot in \cref{fig:panel}, where $L$ is the latent confounder, $T$ is the time-varying treatment, and $Y$ is the time-varying outcome. The common identifiability assumptions in a linear setting are that the causal effect is constant through time, i.e., $[\mA]_{Y_1, T_1}=[\mA]_{Y_2, T_2}$ and there is no time-varying confounding, see, e.g., \citet[\S8]{cunningham:2021}. In  lvLiNGAM, these assumptions can be relaxed by having access to some covariates. In particular, suppose that there is a covariate $C_i$ such that $\de_o(L_i) = \de_o(C_i)$ for every time period $i$ (see the right plot in \cref{fig:panel}). We have $C_i\in\de_o(L_i)\setminus\de_o(T_i)$ and the condition (\ref{eq:tot:ce:id:known:1}) is not satisfied. This implies that the causal effect of $T_i$ on $Y_i$ is identifiable for every time period $i$.
        
        \begin{figure}[ht]
        \centering
        \begin{tikzpicture}[scale = 0.7]
            \node[draw, circle, fill=gray!40,inner sep=2pt, minimum size=0.6cm] (UU) at (4,0) {$L$};

            \node[draw, circle, inner sep=2pt, minimum size=0.6cm] (CC) at (4,-1) {$C$};
            
            \node[draw, circle, inner sep=2pt, minimum size=0.6cm] (TT1) at (3,-2) {$T_1$};
            \node[draw, circle, inner sep=2pt, minimum size=0.6cm] (TT2) at (5,-2) {$T_2$};

            \node[draw, circle, inner sep=2pt, minimum size=0.6cm] (YY1) at (3,-4) {$Y_1$};
            \node[draw, circle, inner sep=2pt, minimum size=0.6cm] (YY2) at (5,-4) {$Y_2$};

            \draw[black, ->] (UU) to (CC);
            
            \draw[black, ->] (UU) to (TT1);
            \draw[black, ->] (UU) to (TT2);
        
            \draw[black, ->] (UU) to (YY1);
            \draw[black, ->] (UU) to (YY2);

            \draw[red, ->] (CC) to (TT1);
            \draw[red, ->] (CC) to (TT2);
        
            \draw[red, ->] (CC) to (YY1);
            \draw[red, ->] (CC) to (YY2);
            
            \draw[blue, ->] (TT1) to (TT2);
            \draw[blue, ->] (TT1) to (YY1);
            \draw[blue, ->] (TT2) to (YY2);
            
            \node[draw, circle, fill=gray!40, inner sep=2pt, minimum size=0.6cm] (u1) at (9,0) {$L_1$};
            \node[draw, circle, fill=gray!40, inner sep=2pt, minimum size=0.6cm] (u2) at (11,0) {$L_2$};

            \node[draw, circle, inner sep=2pt, minimum size=0.6cm] (c1) at (8,-1) {$C_1$};
            \node[draw, circle, inner sep=2pt, minimum size=0.6cm] (c2) at (10,-1) {$C_2$};
            
            \node[draw, circle, inner sep=2pt, minimum size=0.6cm] (x1) at (7,-2) {$T_1$};
            \node[draw, circle, inner sep=2pt, minimum size=0.6cm] (x2) at (9,-2) {$T_2$};

            \node[draw, circle, inner sep=2pt, minimum size=0.6cm] (y1) at (7,-4) {$Y_1$};
            \node[draw, circle, inner sep=2pt, minimum size=0.6cm] (y2) at (9,-4) {$Y_2$};
            
            \draw[black, ->, dashed] (u1) to (u2);
            \draw[black, ->] (u1) to (c1);
            \draw[black, ->] (u1) to (c2);
            \draw[black, ->] (u2) to (c2);
            \draw[red, ->] (c1) to (c2);
            
            \draw[black, ->] (u1) to[bend right = 30] (x1);
            \draw[black, ->] (u1) to (x2);
            \draw[black, ->] (u2) to[bend right =30] (x2);

            \draw[black, ->] (u1) to (y1);
            \draw[black, ->] (u1) to[bend left = 20] (y2);
            \draw[black, ->] (u2) to (y2);

            \draw[red, ->] (c1) to (x1);
            \draw[red, ->] (c1) to (x2);
            \draw[red, ->] (c2) to (x2);

            \draw[red, ->] (c1) to (y1);
            \draw[red, ->] (c1) to (y2);
            \draw[red, ->] (c2) to (y2);

            \draw[blue, ->] (x1) to (x2);
            \draw[blue, ->] (x1) to (y1);
            \draw[blue, ->] (x2) to (y2);
            
        \end{tikzpicture}
        \caption{The causal graphs for longitudinal data. The graph on the right allows for a time-varying confounder. The parameters corresponding to the blue edges are generically identifiable with the knowledge of the graph, while the ones corresponding to the red edges are not. The dashed line is dropped in the corresponding canonical model.}
        \label{fig:panel}
    \end{figure}
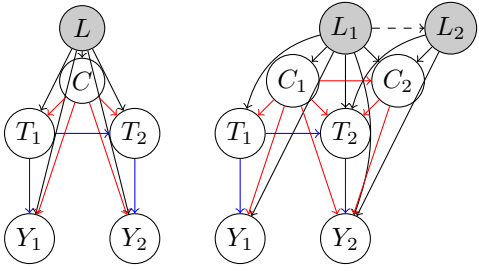
    \item\textbf{Underspecified Instruments}\\
    A standard assumption for identifying the causal effect using instrumental variables is that there are at least as many instruments as treatments \citep[Thm.~1]{brito:2002}. Recently, \citet{ailer:23} showed that in the linear underspecified setting, i.e., when the number of instruments is less than the number of treatments, one can identify the projection of the treatment on the instrument space, but this can be different from the causal effect.
    We now show that in the lvLiNGAM model, the treatment effects are identifiable in the underspecified case.
    
    We say that $I$ is a valid instrument for the treatments $T_1, \dots, T_n$ on $Y$ if the following conditions hold
    \begin{align*}
        I& \in\pa(T_i)\quad\forall\, i\in \{1,\dots,n\},\\
        I&\dsep_{\GG_{\setminus T}} Y,
    \end{align*}
    where $\dsep$ denotes d-separation \citep[\S1.2]{pearl:2009}, and $\GG_{\setminus T}$ is the graph obtained from $\GG$ by removing all the edges from $T_i$ to $Y$. See \cref{fig:IV} for an instance with two treatments and one instrument.

    For every latent variable $L_i$ such that $\de_o(T_i) = \de_o(L_i)$ and $Y\in\ch(L_i)$, we have $Y\in\ch(L_i)\setminus\ch(I)$. Hence condition \eqref{eq:tot:ce:id:known:3} does not hold. By \cref{thm:tot:eff:known:graph}, in lvLiNGAM, the causal effect of $T_i$ on $Y$ is identifiable for every $i$, even when only one instrumental variable is available. 
    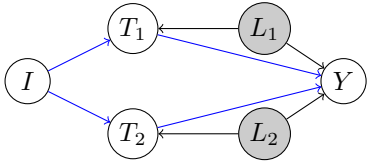
\begin{figure}[ht]
        \centering
        \begin{tikzpicture}[scale = 0.7]

        \node[draw, circle, inner sep=2pt, minimum size=0.6cm] (I) at (-1,0) {$I$};
        
        \node[draw, circle, inner sep=2pt, minimum size=0.6cm] (T1) at (1,1) {$T_1$};
        \node[draw, circle, inner sep=2pt, minimum size=0.6cm] (T2) at (1,-1) {$T_2$};

        \node[draw, circle, inner sep=2pt, minimum size=0.6cm] (Y) at (5,0) {$Y$};
        
        \node[draw, circle, fill=gray!40, inner sep=2pt, minimum size=0.6cm] (H1) at (3.5,1) {$L_1$};
        \node[draw, circle, fill=gray!40, inner sep=2pt, minimum size=0.6cm] (H2) at (3.5,-1) {$L_2$};

        \draw[blue,  ->] (I) to (T1);
        \draw[blue, ->] (I) to (T2);
        
        \draw[blue, ->] (T1) to (Y);
        \draw[blue, ->] (T2) to (Y);
        
        \draw[black, ->] (H1) to (T1);
        \draw[black, ->] (H1) to (Y);

        \draw[black, ->] (H2) to (T2);
        \draw[black, ->] (H2) to (Y);
        
        \end{tikzpicture}
        \caption{An example of causal graph for underspecified instrumental variable. The parameters corresponding to the blue edges are generically identifiable with knowledge of the graph.}
        \label{fig:IV}
    \end{figure}
    
    \end{enumerate}
\endgroup
\subsection{Certifying Identifiability}
\label{subsec:cert}
In this section, we prove that the graphical condition for identification given in \cref{thm:tot:eff:known:graph} can be certified in polynomial time in the size of the graph, and we provide a sound and complete algorithm (Algorithm \ref{alg:tce:known}) for this task.

The algorithms for the other identification criteria can be found in \cref{subsec:proof:cert}.

\begin{theorem}
\label{thm:cert}
    \cref{alg:tce:known} is sound and complete for certifying the generic identifiability of the total causal effect of $j$ on $i$ with knowledge of the graph $\GG$.
    The computational complexity of the algorithm is $\mathcal{O}(p_l(p_o^2+|E|))=\mathcal{O}(p^3)$.
\end{theorem}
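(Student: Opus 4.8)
The plan is to show that \cref{alg:tce:known} correctly decides the graphical criterion of \cref{thm:tot:eff:known:graph}---namely, whether there exists a latent node $l\in\mathcal{L}$ simultaneously satisfying \eqref{eq:tot:ce:id:known:1}, \eqref{eq:tot:ce:id:known:2}, and \eqref{eq:tot:ce:id:known:3}---and then to bound its running time. Since \cref{thm:tot:eff:known:graph} is an \emph{if and only if} characterization, soundness and completeness both collapse to a single correctness claim: the algorithm reports ``not identifiable'' exactly when such an $l$ exists, and ``identifiable'' otherwise. I would therefore organize the argument as (i) a correctness lemma establishing that the per-$l$ tests faithfully evaluate the three conditions, from which soundness and completeness follow by invoking \cref{thm:tot:eff:known:graph}, and (ii) a complexity analysis.

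For correctness I would proceed condition by condition, arguing that each is realized by a standard graph routine. Condition \eqref{eq:tot:ce:id:known:1} is the set equality $\de_o(j)=\de_o(l)$, obtained from one traversal from $j$ (computed once) and one from each $l$, followed by an $O(p_o)$ comparison. Condition \eqref{eq:tot:ce:id:known:2}, the membership $i\in\de^{\GG_{\setminus j}}_o(l)$, I would reduce to reachability in the fixed auxiliary graph $\GG_{\setminus j}$: because $l$ is latent and $i$ observed we have $i\neq l$, so $i\in\de^{\GG_{\setminus j}}_o(l)$ iff $l$ is an ancestor of $i$ in $\GG_{\setminus j}$; a single backward traversal from $i$ then precomputes $\an^{\GG_{\setminus j}}(i)$ and turns each per-$l$ test into an $O(1)$ membership check. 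Condition \eqref{eq:tot:ce:id:known:3}, that $\ch(l)\subseteq\ch(k)$ for all $k\in\pa(j)\cup\{j\}$, is the single containment $\ch(l)\subseteq\bigcap_{k\in\pa(j)\cup\{j\}}\ch(k)$, so I would precompute the intersection once and test containment for each $l$. Verifying that these routines return precisely the sets named in \eqref{eq:tot:ce:id:known:1}--\eqref{eq:tot:ce:id:known:3} yields correctness, hence soundness and completeness.

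For the running time, the outer loop ranges over the $p_l$ latent nodes. Each descendant computation for $\de_o(l)$ costs $O(p+|E|)$; since the model is canonical, every latent has at least two observed children, so $|E|\ge 2p_l$ and thus $p=p_o+p_l=O(p_o+|E|)$, which absorbs the $p$ term into $O(p_o^2+|E|)$. The only genuinely quadratic work is the children-set containment of \eqref{eq:tot:ce:id:known:3}, which is $O(p_o)$ per $l$ after the one-time $O(p_o^2)$ precomputation of the intersection, and at most $O(p_o^2)$ per $l$ even done naively inside the loop. Summing over the loop gives $O\!\left(p_l(p_o^2+|E|)\right)$, and using $p_l\le p$, $p_o\le p$, and $|E|\le\binom{p}{2}$ yields the stated $O(p^3)$.

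The main obstacle I anticipate is condition \eqref{eq:tot:ce:id:known:2}: recomputing descendants in $\GG_{\setminus j}$ separately for each latent node would threaten the budget, and one must ensure the deletion of all edges into $j$ is performed consistently across the whole loop. The clean resolution is to construct $\GG_{\setminus j}$ once and replace the family of forward-reachability queries from each $l$ by a single backward-reachability computation from $i$; the delicate step is to prove precisely that this reformulation captures $i\in\de^{\GG_{\setminus j}}_o(l)$, in particular treating the boundary behavior of the $\{l\}$ term in $\de^{\GG_{\setminus j}}_o(l)$ (which contributes no observed node, as $l$ is latent) and the impossible case $l=i$. Once the equivalence with ancestry in $\GG_{\setminus j}$ is stated carefully, the remainder is routine bookkeeping.
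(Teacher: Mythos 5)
Your overall plan (collapse soundness and completeness into a single correctness claim via the \emph{iff} in \cref{thm:tot:eff:known:graph}, then bound the running time) matches the paper, and your complexity accounting is essentially the paper's. But your correctness argument omits the one step on which the paper's proof actually spends its effort. \cref{alg:tce:known} does not test the three conditions directly for every latent node: line~7 first checks whether $j$ is the \emph{topologically first} child of $l$, i.e.\ $\ch(l)[1]=j$, and skips $l$ entirely otherwise. Correctness therefore requires showing that this filter never discards a latent node satisfying \eqref{eq:tot:ce:id:known:1}--\eqref{eq:tot:ce:id:known:3}; otherwise the algorithm could return \textsc{true} for a non-identifiable effect. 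The paper proves exactly this: if $\de_o(l)=\de_o(j)$, then every $k\in\ch(l)\setminus\{j\}$ must be a descendant of $j$ (else $k\in\de_o(l)\setminus\de_o(j)$), hence $j$ precedes all other children of $l$ in any topological order and the line-7 test succeeds. Your proposal, by describing the per-$l$ work as a faithful evaluation of the three conditions, in effect verifies a different, unfiltered algorithm; you need to add the lemma above (or drop the filter and re-verify the complexity).

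Two smaller points. Your reduction of \eqref{eq:tot:ce:id:known:2} to a single backward reachability computation from $i$ in $\GG_{\setminus j}$ is correct and is a genuine (minor) improvement over the paper, which recomputes $\de_o^{\GG_{\setminus j}}(l)$ for each latent node; but again it describes an optimized variant rather than \cref{alg:tce:known} itself, so you should state explicitly that the modified test computes the same predicate. The complexity bound is unaffected either way, since a per-$l$ forward traversal costs $\mathcal{O}(p_o+|E|)$ and fits inside the stated $\mathcal{O}(p_l(p_o^2+|E|))$ budget, exactly as in the paper.
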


\begin{algorithm}
    \caption{Total Causal Effect Identification with Known Graph}
    \label{alg:tce:known}
    \textbf{INPUT:} $\mathcal{V}=\mathcal{O}\cup\mathcal{L}, \GG, \{\ch(i) \mid i \in \mathcal{V}\} ,(j,i)$\\
    \begin{algorithmic}[1]
    \STATE $\text{ID} \gets \text{TRUE}$ 
    \STATE Sort $\mathcal{V}$ according to an ascending topological order
    \STATE Compute $\de_o(j)$    
    \WHILE{$\text{ID} == \text{TRUE}$ \AND $|\mathcal{L}|>0$}
        \STATE $l\gets \mathcal{L}[1]$\COMMENT{The first element in the list}
        \STATE Sort $\ch(l)$ according to the topological order defined in step 2
        \IF{$\ch(l)[1] = j$} 
            \STATE Compute $\de^{\GG_{\setminus j}}_o(l)$
            \IF{$i\in\de^{\GG_{\setminus j}}_o(l)$}
                \STATE Compute $\de_o(l)$
                \IF[{\color{gray}\eqref{eq:tot:ce:id:known:2}}\color{black}]{$\de_o(l) =\de_o(j)$}                       
                    \STATE $\text{ID} \gets \text{FALSE}$ \COMMENT{{\color{gray}\eqref{eq:tot:ce:id:known:1}}}
                    \STATE Compute $\pa(j)$ 
                    \FORALL{$k \in \pa(j) \cup \{j\}$}
                        \IF{$\ch(l) \setminus \ch(k) \neq \emptyset$}
                            \STATE $\text{ID} \gets \text{TRUE}$ \COMMENT{{\color{gray}\eqref{eq:tot:ce:id:known:3}}}
                        \ENDIF 
                    \ENDFOR
                \ENDIF
            \ENDIF
        \ENDIF
        \STATE $\mathcal{L}\gets \mathcal{L}\setminus\{l\}$
    \ENDWHILE
    \STATE \textbf{RETURN:} ID
    \end{algorithmic}
\end{algorithm}

\begin{remark}
\label{rem:id:algorithm}
    \cref{alg:tce:known} is simple in the sense that it
    directly checks the graphical conditions in \cref{thm:tot:eff:known:graph}. This is not the case for checking most identifiability results in linear models, which often requires building an auxiliary graph and solving a maximum-flow problem on it (which becomes prohibitive for large graphs), \citep{brito:2006,kumor:2020,barber:2022}.    
\end{remark}

\subsection{Estimation Algorithms}
\label{subsec:est:alg}
When the graph structure is unknown, \citet[Alg.~1]{salehkaleybar:2020} proposed an algorithm that first solves an overcomplete ICA problem and then post-process the estimated mixing matrix to enumerate all the possible causal effects. This usually entails solving a high-dimensional non-convex optimization problem. If the DAG structure is known, one can enforce this knowledge to reduce the problem's dimensionality from $p^2$ to $|E|$ and improve the performance. We follow this approach and propose an adaptation of the 
RICA algorithm for recovering the mixing matrix \citep{le:2011}. 

The objective function optimized by  RICA is a weighted sum of two terms; the first term is a contrast function that measures the non-Gaussianity of the exogenous noise, e.g., the $l_1$-loss, and the second term is a reconstruction loss that enforces the orthonormality of the rows of the mixing matrix. The only instance in which the rows of a matrix in $\R^{\GG}$ might be orthonormal is when all the causal effects are zero. Hence, we drop the reconstruction loss and only optimize the contrast function.

For a given DAG $\GG$, given observed data $\mathbf{X}_1,\dots,\mathbf{X}_N\in\R^{p_o}$, and a contrast function $g$, our algorithm solves the following optimization problem
\begin{equation*}
\textbf{Graphical RICA:}\argmin_{\mathbf{B}'\in\R^\GG}\frac{1}{N}\sum_{i=1}^Ng((\mathbf{B}')^T\cdot\mathbf{X}_i).
\end{equation*}
We evaluate the performance of our algorithm in comparison with existing methods in \cref{subsec:est:exp}.

\section{Related Work}
\label{sec:rel:work}
There is a rich literature on graphical criteria for the identifiability of causal effects. In the nonparametric setting, the ID algorithm is a sound and complete algorithm that solves the global identification problem given the causal graph \cite{shpitser:2006, shpitser:2023}.

In the parametric case, most results are for the semi-Markovian and linear Gaussian models; for these models, a necessary and sufficient criterion for global identifiability is known, \cite{drton:2011}, while a complete characterization for generic identifiability remains unknown, \cite{kumor:2020}. For Gaussian models with explicit linear confounders, a sufficient graphical criterion for generic identifiability was proposed in \citet{barber:2022}. 

For linear non-Gaussian models, \citet{salehkaleybar:2020} proposed necessary and sufficient graphical criterion for the identifiability of the whole mixing matrix, \citet{yang:2022} defines a notion of equivalence class for lvLiNGAM models, and \citet{cai:2023} provide sufficient graphical conditions for the identification of the mixing matrix using explicit moment equations. \citet{kivva:2023} proposed an identification formula that works for the causal graph with one proxy variable; in contrast, \citet{shuai:2023} proved that if one assumes that only the treatment is non-Gaussian and pre-treatment covariates are available, then the causal effect can be identified in the presence of latent confounders. \citet{tramontano:2024} proposed a necessary and sufficient graphical criterion for the identifiability of the direct causal effect in linear models for acyclic-directed mixed graphs.
\section{Experimental Results\protect\footnote{The code to replicate the experiments can be found at\::\:\href{https://github.com/danieletramontano/Causal-Effect-Identification-in-LiNGAM-Models-with-Latent-Confounders}{https://github.com/danieletramontano/Causal-Effect-Identification-in-LiNGAM-Models-with-Latent-Confounders}.}}
\label{sec:exp:res}
\subsection{Identification}
\label{subsec:id:exp}
We used Algorithms \ref{alg:tce:known} and \ref{alg:tce:unknown} to check the identifiability of a causal effect for randomly selected edges in random graphs. The graphs are generated according to an Erd\H{o}s-Rényi model in which we ensure that the sampled graphs are canonical. The probability of the causal effect of a randomly selected edge being identifiable versus the probability of accepting an edge in the graph generation model is plotted in \cref{fig:id:exp}.
For each setup, we randomly sample $500$ graphs. 

Interestingly, we found that for all the graphs and all the edges we sampled, the corresponding causal effects are identifiable with the knowledge of the graph. As expected, when we do not assume the graph is known, the probability of identifying the causal effect of randomly selected edge drops and it depends both on the density of the graph and the proportion of observed nodes (see \cref{fig:id:exp}).

\begin{figure}[ht]
    \centering
    \includegraphics[scale = 0.27]{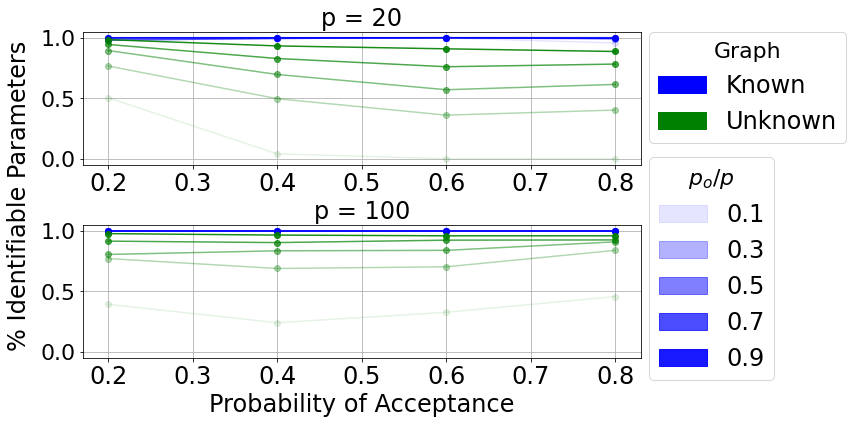}
    \caption{On the $x$-axis, the probability of acceptance of an edge. On the $y$-axis, the percentage of identifiable parameters.}
    \label{fig:id:exp}
\end{figure}
The average run time of \cref{alg:tce:known} for different graph sizes is shown in \cref{fig:running:time}. It is noteworthy that our algorithm can handle graphs with a thousand nodes in about a second; this is due to the simplicity of our identification criteria, as explained in \cref{rem:id:algorithm}.
\begin{figure}[ht]
    \centering
    \includegraphics[scale = 0.28]{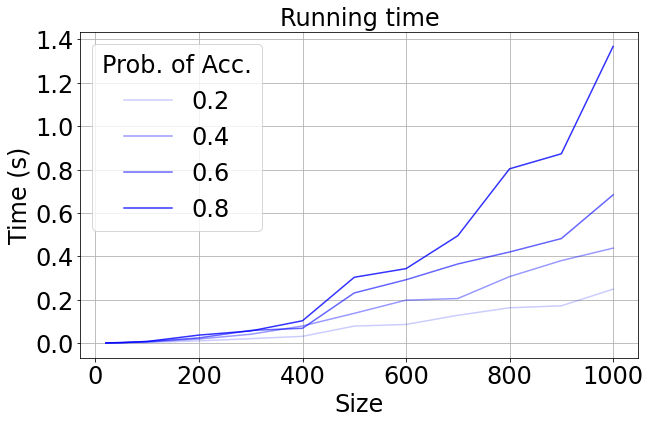}
\caption{On the $x$-axis, the size of the graph. On the $y$-axis, the average running time in seconds. $p_o/p$ is fixed to  0.5.}
    \label{fig:running:time}
\end{figure}

\subsection{Causal Effect Estimation}
\label{subsec:est:exp}
In this section, we provide experimental results for Graphical RICA (GRICA), which we introduced in \cref{subsec:est:alg}. 
We present experiments for data generated according to the causal structures in \cref{fig:PO settings}, and compare the performance of GRICA with the state-of-the-art. As a measure of performance, we used the relative error metric given by: $\textit{error} = |\text{Estimated Value} - \text{True Value}|/|\text{True Value}|$. Further experimental results are provided in \cref{app:exp}.
\begin{figure}
    \centering
    \begin{tikzpicture}[scale = 0.7]
        \node[draw, circle, fill=gray!40, inner sep=0pt, minimum size=0.5cm] (z1) at (1,0) {$L_1$};        
        
        \node[draw, circle, inner sep=0pt, minimum size=0.5cm] (T) at (0,-1) {$T$};
        \node[draw, circle, inner sep=0pt, minimum size=0.5cm] (Y) at (2,-1) {$Y$};        
        \node[draw, circle, inner sep=0pt, minimum size=0.5cm] (W) at (-0.5, 0) {$W$};
        \node at (1, -1.5) {$\GG_1$};
        
        \draw[black, ->] (z1) to (T);
        \draw[black, ->] (z1) to (Y);
        \draw[black, ->] (z1) to (W);
    
        \draw[blue, ->] (T) to (Y);

        \begin{scope}[xshift=3.5cm]
        \node[draw, circle, fill=gray!40, inner sep=0pt, minimum size=0.5cm] (z1) at (1,0) {$L_1$};
        
        \node[draw, circle, inner sep=0pt, minimum size=0.5cm] (T) at (0,-1) {$T$};
        \node[draw, circle, inner sep=0pt, minimum size=0.5cm] (Y) at (2,-1) {$Y$};
        \node[draw, circle, inner sep=0pt, minimum size=0.5cm] (W) at (-0.5, 0) {$W$};
        \node at (1, -1.5) {$\GG_2$};
        
        \draw[black, ->] (z1) to (T);
        \draw[black, ->] (z1) to (Y);
        \draw[black, ->] (z1) to (W);
        
        \draw[blue, ->] (T) to (Y);
        \draw[blue, ->] (W) to (T);
        \end{scope}

        \begin{scope}[xshift=7cm]
            \node[draw, circle, fill=gray!40, inner sep=0pt, minimum size=0.4cm] (z1) at (1,0) {$L_1$};
        
            \node[draw, circle, inner sep=0pt, minimum size=0.5cm] (z2) at (2.5, 0) {$Z$};
            
            \node[draw, circle, inner sep=0pt, minimum size=0.5cm] (T) at (0,-1) {$T$};
            \node[draw, circle, inner sep=0pt, minimum size=0.5cm] (Y) at (2,-1) {$Y$};
            
            \node[draw, circle, inner sep=0pt, minimum size=0.5cm] (W) at (-0.75,0) {$W$};
            \node at (1, -1.5) {$\GG_3$};
            
            \draw[black, ->] (z1) to (T);
            \draw[black, ->] (z1) to (Y);
            \draw[black, ->] (z1) to (W);
            \draw[black, ->] (z1) to (z2);
            \draw[blue, ->] (T) to (Y);
        \end{scope}
    \end{tikzpicture}
    \caption{The causal graphs considered in the experiments.}
    \label{fig:PO settings}
\end{figure}
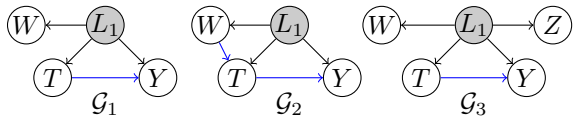

\textbf{1. Relative error vs sample size.}
\begin{figure}[t]
\centering
\begin{subfigure}[t]{0.5\textwidth}
    \centering
    \includegraphics[scale=0.4]{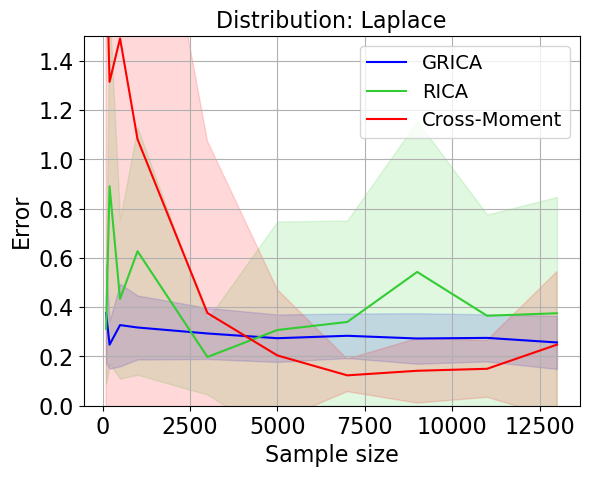}
    \caption{$\GG_1$.}
    \label{fig:Sample size vs error:g1}
\end{subfigure}

\begin{subfigure}[t]{0.5\textwidth}
    \centering
    \includegraphics[scale=0.4]{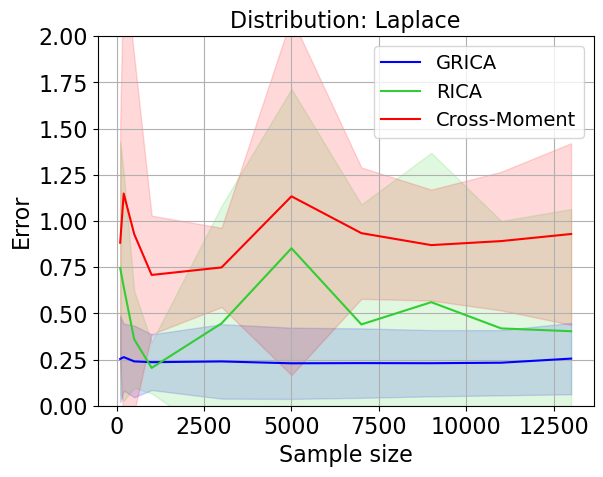}
    \caption{$\GG_2$.}
    \label{fig:Sample size vs error:g2}
\end{subfigure}
\caption{Relative error vs sample size}
\label{fig:Sample size vs error}
\end{figure}

We evaluated performance of the GRICA with respect to the observational sample size. We considered the settings that are compatible with the graphs $\GG_1$ and $\GG_2$ of \cref{fig:PO settings}
to recover the direct causal effect of variable $T$ on $Y$. We compared the performance of GRICA with the adaptation of the RICA algorithm implemented in \citet{salehkaleybar:2020} and the recently developed Cross-Moment method in \citet{kivva:2023}. 
We assumed all exogenous noises have the same distribution. Moreover, all direct causal coefficients in matrix $\mathbf{A}$ are generated uniformly at random from $[-1, -0.5]\cup[0.5, 1]$.\footnote{We consider this interval to ensure a fair comparison with RICA's implementation for causal effect estimation in \citet{salehkaleybar:2020} as it requires the absolute values of all causal coefficients to be smaller than one.} In \cref{fig:Sample size vs error}, we observe that GRICA consistently recovers the correct causal effect for both graphs, even with a few number of samples. Note that the Cross-Moment is a consistent estimator for the causal graph $\mathcal{G}_1$. It performs better than the rest when there are enough samples to compute high-order moments accurately. Furthermore, the RICA algorithm often gets stuck in bad local minima, and as a result is unstable.

\textbf{2. Relative error vs observations noise.} In \cref{fig:noise vs error}, we illustrate how the variances of certain exogenous noise impact the accuracy of the estimation. All causal coefficients in both settings are set to one. 

For the experiment over $\GG_1$, we scaled the standard deviation of exogenous noise $N_W$ corresponding to variable $W$ by a factor displayed on the $x$-axis of Figure \ref{fig:noise vs error:graph1}. We observed that GRICA performs similarly to the Cross-Moment method, which is specifically designed for the graph $\GG_1$.

The experiment for $\GG_3$ is similar to the one performed in \citet{kivva:2023}. Here, we scaled $N_W$ and $N_Z$ by $Ratio$ and $1/Ratio$, respectively, where $Ratio$ is plotted on the x-axis of Figure \ref{fig:noise vs error:graph3}. We compared the performance of GRICA with RICA and methods developed specifically for this causal graph in \cite{kivva:2023, tchetgen2020introduction}. As depicted in \cref{fig:noise vs error:graph3}, GRICA can benefit from the noiseless observation of $L_1$ through proxy $Z$, while all other algorithms are affected by the presence of noise in the observations from $W$.
\begin{figure}[t]
\centering
    \begin{subfigure}[t]{0.5\textwidth}
    \centering
        \includegraphics[scale=0.4]{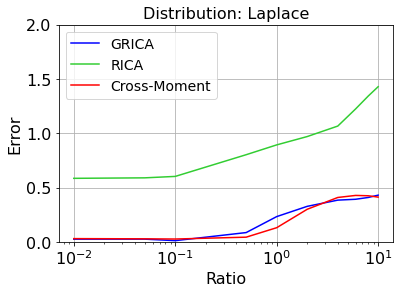}
        \caption{$\GG_1$.}
        \label{fig:noise vs error:graph1}
    \end{subfigure}

    \begin{subfigure}[b]{0.5\textwidth}
    \centering
        \includegraphics[scale=0.4]{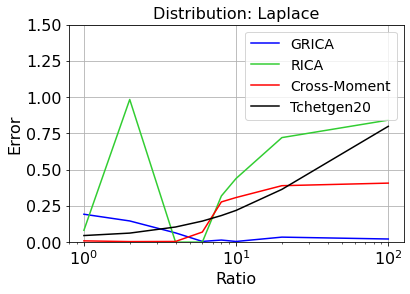}
        \caption{$\GG_3$.}
        \label{fig:noise vs error:graph3}
    \end{subfigure}
    \caption{Relative error vs noise scaling ratio.}
    \label{fig:noise vs error}
\end{figure}

\textbf{3. Causal effect estimation on random graphs.} In this section, we will present experiments for larger causal graphs. In particular, we compare the GRICA algorithm with the RICA algorithm on graphs randomly sampled from an Erd\H{o}s-Rényi model. The probability of accepting an edge is set to $1/2$. Similarly to the previous experiments, we generated all causal coefficients in the matrix $\mathbf{A}$ uniformly at random from $[-1, -0.5]\cup[0.5, 1]$. As a measure of performance, we use the normalized Frobenius loss between the estimated mixing matrix and the true one, i.e., $||\tilde{\mathbf{B}}-\mathbf{B}||_F/||\mathbf{B}||_F$.
Note that according to the experiments in \cref{subsec:id:exp}, we expect all the causal effects of interest to be identifiable. 
Figures \ref{fig:noise vs error:random graphs 1-5} and \ref{fig:noise vs error:random graphs 2-10} show the experimental results for the Erd\H{o}s-Rényi model in graphs with one latent and five observed variables and graphs with two latent and ten observed variables, respectively. The results are averaged over ten trials. As can be seen, GRICA  significantly outperforms the RICA algorithm in larger causal graphs.

\begin{figure}[htbp]
\centering
\begin{subfigure}[t]{0.5\textwidth}
\centering
    \includegraphics[scale=0.42]{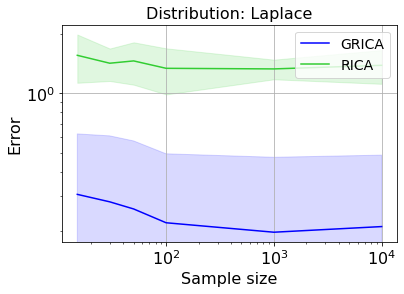}
    \caption{One latent and five observed variables.
    }
    \label{fig:noise vs error:random graphs 1-5}
\end{subfigure}

\begin{subfigure}[t]{0.5\textwidth}
\centering
    \includegraphics[scale=0.42]{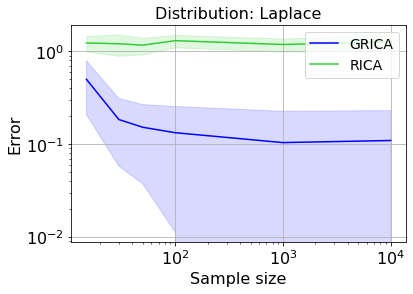}
    \caption{Two latent and ten observed variables.
    }
    \label{fig:noise vs error:random graphs 2-10}
\end{subfigure}
\caption{Results for the Erd\H{o}s-Rényi model.}
\label{fig:noise vs error:er}
\end{figure}

\subsection{Consistency Guarantees}
    The OICA problem is known to be identifiable but not separable; see \cite{eriksson:2004}. This implies that, as opposed to the complete ICA, minimizing a measure of non-Gaussianity does not necessarily lead to the identification of the mixing matrix. 
    
    In the past, various Expectation-Maximization (EM) algorithms have been proposed to work under specific parametric models for the exogenous noise model, e.g., Mixture of Gaussians \citep{olshausen:1999} or Laplace distribution \citep{lewicki:2000}. These algorithms have been proven to be consistent if the assumptions are satisfied. At the same time, they are very computationally demanding and have been shown to perform poorly in practice. For this reason, the core of the research on the topic has shifted to heuristic methods, in which the mixing matrix is found as a solution to a suitable smooth optimization problem. The RICA algorithm \citep{le:2011} is arguably the most prominent in this class of algorithms and has already been used in causality, see, e.g., \citet{yang:2022}. However, due to the complications stated above, the RICA algorithm has not yet been equipped with consistency guarantees.
    
    Being an adaptation of the RICA algorithm, also the large sample size performance of our algorithm is not well understood in rigorous mathematical terms. However, we point out that the same parametrization of the mixing matrix can be utilized in any OICA algorithm to reduce the dimensionality of the problem. What our experiments suggest is that this simple step can improve remarkably the performances of OICA algorithms, when applied to the estimation of causal effects.
\section{Conclusions}
\label{sec:conc}
We considered the problem of generic identifiability of causal effects in LiNGAM models when only observational data are available. We solved the problem by providing \emph{efficiently implementable}, graphical criteria for identification of the causal effect with and without knowledge of the graph. To estimate the effect, we proposed a flexible adaptation of the RICA algorithm, \citep{le:2011}, that incorporates the knowledge of the graph to reduce the dimension in the optimization problem.

To conclude, we highlight possible future directions.

    \textbf{\emph{Cyclic Models.}} When the acyclicity assumption is dropped, \citet{lacerda:2008} showed that, in the fully observed case, the structure of the graph can be recovered up to an equivalence class. There are no explicit graphical criteria for the identification of the causal effects in cyclic LiNGAM models. It is worth exploring whether our proof techniques could be extended to this setting.\\
    \textbf{\emph{Relaxing non-Gaussianity.}}  
    \citet{ng:2023} proved that in the complete ICA case, non-Gaussianity of the noises can be relaxed if the mixing matrix has a specific sparse structure. Extensions of these results to the overcomplete case might offer results on identification of the causal effects when some of the exogenous noises are allowed to be Gaussian.\\
    \textbf{\emph{Non-linear ICA.}} There is an active line of work that aims to exploit non-linear ICA, \citep{hyvarinen:2024}, to construct causal discovery algorithms, \citep{reizinger:2023}. There is no literature that leverages these results to understand the identifiability of the causal effects.

\section*{Acknowledgments}
This project has received funding from the European Research Council (ERC) under the European Union’s Horizon 2020 research and innovation programme (grant agreement No 883818) and supported in part by the SNF project 200021\_204355/1, Causal Reasoning Beyond Markov Equivalencies. DT's PhD scholarship is funded by the IGSSE/TUM-GS via a Technical University of Munich--Imperial College London Joint Academy of Doctoral Studies.

\section*{Impact Statement}
This paper presents work whose goal is to advance the field of 
Machine Learning. There are many potential societal consequences 
of our work, none which we feel must be specifically highlighted here.
\bibliography{ref}

\begin{thebibliography}{50}
\providecommand{\natexlab}[1]{#1}
\providecommand{\url}[1]{\texttt{#1}}
\expandafter\ifx\csname urlstyle\endcsname\relax
  \providecommand{\doi}[1]{doi: #1}\else
  \providecommand{\doi}{doi: \begingroup \urlstyle{rm}\Url}\fi

\bibitem[Adams et~al.(2021)Adams, Hansen, and Zhang]{adams:2021}
Adams, J., Hansen, N., and Zhang, K.
\newblock Identification of partially observed linear causal models: Graphical conditions for the non-gaussian and heterogeneous cases.
\newblock In \emph{Advances in Neural Information Processing Systems}, volume~34. Curran Associates, Inc., 2021.

\bibitem[Ailer et~al.(2023)Ailer, Hartford, and Kilbertus]{ailer:23}
Ailer, E., Hartford, J., and Kilbertus, N.
\newblock Sequential underspecified instrument selection for cause-effect estimation.
\newblock In \emph{Proceedings of the 40th International Conference on Machine Learning}, volume 202 of \emph{Proceedings of Machine Learning Research}. PMLR, 2023.

\bibitem[Athey \& Imbens(2017)Athey and Imbens]{athey:2017}
Athey, S. and Imbens, G.~W.
\newblock The state of applied econometrics: Causality and policy evaluation.
\newblock \emph{Journal of Economic Perspectives}, 31\penalty0 (2), 2017.

\bibitem[Axler(2015)]{sheldon:2014}
Axler, S.
\newblock \emph{Linear algebra done right}.
\newblock Springer, Cham, 3rd ed. edition, 2015.

\bibitem[Barber et~al.(2022)Barber, Drton, Sturma, and Weihs]{barber:2022}
Barber, R., Drton, M., Sturma, N., and Weihs, L.
\newblock Half-trek criterion for identifiability of latent variable models.
\newblock \emph{The Annals of Statistics}, 50, 2022.

\bibitem[Barrera \& Miljkovic(2022)Barrera and Miljkovic]{barrera:2022}
Barrera, E.~L. and Miljkovic, D.
\newblock The link between the two epidemics provides an opportunity to remedy obesity while dealing with covid-19.
\newblock \emph{Journal of Policy Modeling}, 44\penalty0 (2), 2022.

\bibitem[Brito \& Pearl(2002)Brito and Pearl]{brito:2002}
Brito, C. and Pearl, J.
\newblock Generalized instrumental variables.
\newblock In \emph{Proceedings of the Eighteenth Conference on Uncertainty in Artificial Intelligence}, UAI'02. Morgan Kaufmann Publishers Inc., 2002.

\bibitem[Brito \& Pearl(2006)Brito and Pearl]{brito:2006}
Brito, C. and Pearl, J.
\newblock Graphical condition for identification in recursive sem.
\newblock In \emph{Proceedings of the Twenty-Second Conference on Uncertainty in Artificial Intelligence}, UAI'06. AUAI Press, 2006.

\bibitem[Cai et~al.(2023)Cai, Huang, Chen, Hao, and Zhang]{cai:2023}
Cai, R., Huang, Z., Chen, W., Hao, Z., and Zhang, K.
\newblock Causal discovery with latent confounders based on higher-order cumulants.
\newblock In \emph{Proceedings of the 40th International Conference on Machine Learning}, ICML'23. JMLR.org, 2023.

\bibitem[Chiyohara et~al.(2023)Chiyohara, Furukawa, Noda, Morimoto, and Imamizu]{chiyohara:2023}
Chiyohara, S., Furukawa, J.-i., Noda, T., Morimoto, J., and Imamizu, H.
\newblock Proprioceptive short-term memory in passive motor learning.
\newblock \emph{Scientific Reports}, 13\penalty0 (1), 2023.

\bibitem[Ciarli et~al.(2023)Ciarli, Coad, and Moneta]{ciarli:2023}
Ciarli, T., Coad, A., and Moneta, A.
\newblock Does exporting cause productivity growth? evidence from chilean firms.
\newblock \emph{Structural Change and Economic Dynamics}, 66, 2023.

\bibitem[Cormen et~al.(2009)Cormen, Leiserson, Rivest, and Stein]{cormen:2009}
Cormen, T.~H., Leiserson, C.~E., Rivest, R.~L., and Stein, C.
\newblock \emph{Introduction to algorithms}.
\newblock MIT Press, Cambridge, MA, third edition, 2009.

\bibitem[Cox et~al.(2015)Cox, Little, and O'Shea]{cox:2015}
Cox, D.~A., Little, J., and O'Shea, D.
\newblock \emph{Ideals, varieties, and algorithms}.
\newblock Undergraduate Texts in Mathematics. Springer, Cham, fourth edition, 2015.
\newblock An introduction to computational algebraic geometry and commutative algebra.

\bibitem[Cunningham(2021)]{cunningham:2021}
Cunningham, S.
\newblock \emph{Causal inference: The mixtape}.
\newblock Yale university press, 2021.

\bibitem[de~Prado(2023)]{de:2023}
de~Prado, M. M.~L.
\newblock \emph{Causal Factor Investing: Can Factor Investing Become Scientific?}
\newblock Cambridge University Press, 2023.

\bibitem[Drton(2018)]{drton:2018}
Drton, M.
\newblock Algebraic problems in structural equation modeling.
\newblock In \emph{The 50th anniversary of {G}r\"{o}bner bases}, volume~77 of \emph{Adv. Stud. Pure Math.} Math. Soc. Japan, Tokyo, 2018.

\bibitem[Drton et~al.(2011)Drton, Foygel, and Sullivant]{drton:2011}
Drton, M., Foygel, R., and Sullivant, S.
\newblock {Global identifiability of linear structural equation models}.
\newblock \emph{The Annals of Statistics}, 39\penalty0 (2), 2011.

\bibitem[Eriksson \& Koivunen(2004)Eriksson and Koivunen]{eriksson:2004}
Eriksson, J. and Koivunen, V.
\newblock Identifiability, separability, and uniqueness of linear ica models.
\newblock \emph{Signal Processing Letters, IEEE}, 11, 2004.

\bibitem[Hoyer et~al.(2008)Hoyer, Shimizu, Kerminen, and Palviainen]{hoyer:2008}
Hoyer, P.~O., Shimizu, S., Kerminen, A.~J., and Palviainen, M.
\newblock Estimation of causal effects using linear non-gaussian causal models with hidden variables.
\newblock \emph{International Journal of Approximate Reasoning}, 49\penalty0 (2), 2008.
\newblock Special Section on Probabilistic Rough Sets and Special Section on PGM’06.

\bibitem[Hyv{\"a}rinen et~al.(2024)Hyv{\"a}rinen, Khemakhem, and Monti]{hyvarinen:2024}
Hyv{\"a}rinen, A., Khemakhem, I., and Monti, R.
\newblock Identifiability of latent-variable and structural-equation models: From linear to nonlinear.
\newblock \emph{Annals of the Institute of Statistical Mathematics}, 76, 2024.

\bibitem[Imai \& Kim(2019)Imai and Kim]{imai:2017}
Imai, K. and Kim, I.~S.
\newblock {When Should We Use Unit Fixed Effects Regression Models for Causal Inference with Longitudinal Data?}
\newblock \emph{American Journal of Political Science}, 63\penalty0 (2), 2019.

\bibitem[Kilbertus et~al.(2017)Kilbertus, Rojas~Carulla, Parascandolo, Hardt, Janzing, and Sch{\"o}lkopf]{kilbertus:2017}
Kilbertus, N., Rojas~Carulla, M., Parascandolo, G., Hardt, M., Janzing, D., and Sch{\"o}lkopf, B.
\newblock Avoiding discrimination through causal reasoning.
\newblock \emph{Advances in neural information processing systems}, 30, 2017.

\bibitem[Kivva et~al.(2023)Kivva, Salehkaleybar, and Kiyavash]{kivva:2023}
Kivva, Y., Salehkaleybar, S., and Kiyavash, N.
\newblock A cross-moment approach for causal effect estimation.
\newblock In \emph{Thirty-seventh Conference on Neural Information Processing Systems}, 2023.

\bibitem[Kumor et~al.(2020)Kumor, Cinelli, and Bareinboim]{kumor:2020}
Kumor, D., Cinelli, C., and Bareinboim, E.
\newblock Efficient identification in linear structural causal models with auxiliary cutsets.
\newblock In \emph{Proceedings of the 37th International Conference on Machine Learning}, volume 119 of \emph{Proceedings of Machine Learning Research}. PMLR, 2020.

\bibitem[Kuroki \& Pearl(2014)Kuroki and Pearl]{kuroki:2014}
Kuroki, M. and Pearl, J.
\newblock {Measurement bias and effect restoration in causal inference}.
\newblock \emph{Biometrika}, 101\penalty0 (2), 2014.

\bibitem[Lacerda et~al.(2008)Lacerda, Spirtes, Ramsey, and Hoyer]{lacerda:2008}
Lacerda, G., Spirtes, P., Ramsey, J., and Hoyer, P.~O.
\newblock Discovering cyclic causal models by independent components analysis.
\newblock In \emph{Proceedings of the Twenty-Fourth Conference on Uncertainty in Artificial Intelligence}, UAI'08. AUAI Press, 2008.

\bibitem[Le et~al.(2011)Le, Karpenko, Ngiam, and Ng]{le:2011}
Le, Q., Karpenko, A., Ngiam, J., and Ng, A.
\newblock Ica with reconstruction cost for efficient overcomplete feature learning.
\newblock In \emph{Advances in Neural Information Processing Systems}, volume~24. Curran Associates, Inc., 2011.

\bibitem[Lewicki \& Sejnowski(2000)Lewicki and Sejnowski]{lewicki:2000}
Lewicki, M. and Sejnowski, T.
\newblock Learning overcomplete representations.
\newblock \emph{Neural Computation}, 12:\penalty0 337--365, 02 2000.

\bibitem[Liu et~al.(2023)Liu, Tchetgen, and Varjão]{liu:2023}
Liu, J., Tchetgen, E. J.~T., and Varjão, C.
\newblock Proximal causal inference for synthetic control with surrogates.
\newblock \emph{arXiv:2308.09527}, 2023.

\bibitem[Maathuis et~al.(2019)Maathuis, Drton, Lauritzen, and Wainwright]{handbook}
Maathuis, M., Drton, M., Lauritzen, S., and Wainwright, M. (eds.).
\newblock \emph{Handbook of Graphical Models}.
\newblock Chapman \& Hall/CRC Handbooks of Modern Statistical Methods. CRC Press, Boca Raton, FL, 2019.

\bibitem[Micha{\l}ek \& Sturmfels(2021)Micha{\l}ek and Sturmfels]{michalek:2021}
Micha{\l}ek, M. and Sturmfels, B.
\newblock \emph{Invitation to nonlinear algebra}, volume 211 of \emph{Graduate Studies in Mathematics}.
\newblock American Mathematical Society, Providence, RI, 2021.

\bibitem[Michoel \& Zhang(2023)Michoel and Zhang]{michoel:2023}
Michoel, T. and Zhang, J.~D.
\newblock Causal inference in drug discovery and development.
\newblock \emph{Drug Discovery Today}, 28\penalty0 (10), 2023.

\bibitem[Ng et~al.(2023)Ng, Zheng, Dong, and Zhang]{ng:2023}
Ng, I., Zheng, Y., Dong, X., and Zhang, K.
\newblock On the identifiability of sparse ica without assuming non-gaussianity.
\newblock In \emph{Thirty-seventh Conference on Neural Information Processing Systems}, 2023.

\bibitem[Okamoto(1973)]{okamoto:1973}
Okamoto, M.
\newblock Distinctness of the eigenvalues of a quadratic form in a multivariate sample.
\newblock \emph{The Annals of Statistics}, 1\penalty0 (4), 1973.

\bibitem[Olshausen \& Millman(1999)Olshausen and Millman]{olshausen:1999}
Olshausen, B.~A. and Millman, K.~J.
\newblock Learning sparse codes with a mixture-of-gaussians prior.
\newblock In Solla, S.~A., Leen, T.~K., and M{\"{u}}ller, K. (eds.), \emph{Advances in Neural Information Processing Systems 12, {[NIPS} Conference, Denver, Colorado, USA, November 29 - December 4, 1999]}, pp.\  841--847. The {MIT} Press, 1999.

\bibitem[Pearl(2009)]{pearl:2009}
Pearl, J.
\newblock \emph{Causality}.
\newblock Cambridge University Press, Cambridge, second edition, 2009.
\newblock Models, reasoning, and inference.

\bibitem[Pe'er \& Hacohen(2011)Pe'er and Hacohen]{pe:2011}
Pe'er, D. and Hacohen, N.
\newblock Principles and strategies for developing network models in cancer.
\newblock \emph{Cell}, 144\penalty0 (6), 2011.

\bibitem[Reizinger et~al.(2023)Reizinger, Sharma, Bethge, Sch{\"o}lkopf, Husz{\'a}r, and Brendel]{reizinger:2023}
Reizinger, P., Sharma, Y., Bethge, M., Sch{\"o}lkopf, B., Husz{\'a}r, F., and Brendel, W.
\newblock Jacobian-based causal discovery with nonlinear {ICA}.
\newblock \emph{Transactions on Machine Learning Research}, 2023.

\bibitem[Salehkaleybar et~al.(2020)Salehkaleybar, Ghassami, Kiyavash, and Zhang]{salehkaleybar:2020}
Salehkaleybar, S., Ghassami, A., Kiyavash, N., and Zhang, K.
\newblock Learning linear non-gaussian causal models in the presence of latent variables.
\newblock \emph{Journal of Machine Learning Research}, 21\penalty0 (39), 2020.

\bibitem[Sanchez et~al.(2022)Sanchez, Voisey, Xia, Watson, O’Neil, and Tsaftaris]{sanchez:2022}
Sanchez, P., Voisey, J., Xia, T., Watson, H., O’Neil, A., and Tsaftaris, S.
\newblock Causal machine learning for healthcare and precision medicine.
\newblock \emph{Royal Society Open Science}, 9, 2022.

\bibitem[Shimizu(2022)]{shimizu:2022}
Shimizu, S.
\newblock \emph{Statistical Causal Discovery: LiNGAM Approach}.
\newblock Springer, 2022.

\bibitem[Shimizu et~al.(2006)Shimizu, Hoyer, Hyv\"{a}rinen, and Kerminen]{shimizu:2006}
Shimizu, S., Hoyer, P.~O., Hyv\"{a}rinen, A., and Kerminen, A.
\newblock A linear non-gaussian acyclic model for causal discovery.
\newblock \emph{Journal of Machine Learning Research}, 7, 2006.

\bibitem[Shpitser(2023)]{shpitser:2023}
Shpitser, I.
\newblock When does the id algorithm fail?
\newblock \emph{arXiv:2307.03750}, 2023.

\bibitem[Shpitser \& Pearl(2006)Shpitser and Pearl]{shpitser:2006}
Shpitser, I. and Pearl, J.
\newblock Identification of joint interventional distributions in recursive semi-markovian causal models.
\newblock In \emph{Proceedings of the 21st National Conference on Artificial Intelligence - Volume 2}, AAAI'06. AAAI Press, 2006.

\bibitem[Shuai et~al.(2023)Shuai, Luo, Zhang, Xie, and He]{shuai:2023}
Shuai, K., Luo, S., Zhang, Y., Xie, F., and He, Y.
\newblock Identification and estimation of causal effects using non-gaussianity and auxiliary covariates.
\newblock \emph{arXiv:2304.14895}, 2023.

\bibitem[Tchetgen et~al.(2020)Tchetgen, Ying, Cui, Shi, and Miao]{tchetgen2020introduction}
Tchetgen, E. J.~T., Ying, A., Cui, Y., Shi, X., and Miao, W.
\newblock An introduction to proximal causal learning.
\newblock \emph{arXiv preprint arXiv:2009.10982}, 2020.

\bibitem[Tramontano et~al.(2024)Tramontano, Drton, and Etesami]{tramontano:2024}
Tramontano, D., Drton, M., and Etesami, J.
\newblock Parameter identification in linear non-gaussian causal models under general confounding.
\newblock \emph{arXiv:2405.20856}, 2024.

\bibitem[Wang \& Drton(2023)Wang and Drton]{wang:cd:2023}
Wang, Y.~S. and Drton, M.
\newblock Causal discovery with unobserved confounding and non-gaussian data.
\newblock \emph{Journal of Machine Learning Research}, 24\penalty0 (271), 2023.

\bibitem[Wang et~al.(2023)Wang, Kolar, and Drton]{wang:2023}
Wang, Y.~S., Kolar, M., and Drton, M.
\newblock Confidence sets for causal orderings.
\newblock \emph{arXiv:2305.14506}, 2023.

\bibitem[Yang et~al.(2022)Yang, Ghassami, Nafea, Kiyavash, Zhang, and Shpitser]{yang:2022}
Yang, Y., Ghassami, A., Nafea, M., Kiyavash, N., Zhang, K., and Shpitser, I.
\newblock Causal discovery in linear latent variable models subject to measurement error.
\newblock \emph{Advances in Neural Information Processing Systems}, 35, 2022.

\end{thebibliography}
\bibliographystyle{icml2024}

%%%%%%%%%%%%%%%%%%%%%%%%%%%%%%%%%%%%%%%%%%%%%%%%%%%%%%%%%%%%%%%%%%%%%%%%%%%%%%%
%%%%%%%%%%%%%%%%%%%%%%%%%%%%%%%%%%%%%%%%%%%%%%%%%%%%%%%%%%%%%%%%%%%%%%%%%%%%%%%
% APPENDIX
%%%%%%%%%%%%%%%%%%%%%%%%%%%%%%%%%%%%%%%%%%%%%%%%%%%%%%%%%%%%%%%%%%%%%%%%%%%%%%%
%%%%%%%%%%%%%%%%%%%%%%%%%%%%%%%%%%%%%%%%%%%%%%%%%%%%%%%%%%%%%%%%%%%%%%%%%%%%%%%
\newpage
\appendix
\onecolumn
\label{appendix}
\section{Notions of Non-Linear Algebra}
In this section, we give the basic definitions of \emph{non-linear} algebra we will need for the proofs; we refer the interested reader to \citet{cox:2015,michalek:2021} for more details.
\begin{definition}
\label{def:ag:basics}
    For every natural number $n$, we denote the ring of polynomials in $n$ variables $x_1,\dots,x_n$ by $\R[x_1,\dots,x_n]$. Let $S$ be a, possibly infinite, subset of $\R[x_1,\dots,x_n]$. The affine variety associated to it is defined as  $\mathcal{V}(S)=\{x\in\R^n\mid f(x)=0,\,\forall f\in S\}$. The vanishing ideal associated to a variety $\mathcal{V}$ is $\mathcal{I}(\mathcal{V})=\{f\in\R[x_1,\dots,x_n]\mid f(x)=0\,\forall x\in\mathcal{V}\}$. The coordinate ring of $\mathcal{V}$ is defined as $\R[\mathcal{V}]=\R[x_1,\dots,x_n]/\mathcal{I}(\mathcal{V})$.
\end{definition}
\begin{lemma}[Lemma~\citep{okamoto:1973}]
\label{lem:okamoto}
Let $f(x_1,\dots,x_n)$ be a polynomial in real variables $x_1,\dots,x_n$, which is not identically zero. The set of zeros of the polynomial is a Lebesgue measure zero subset of $\R^n$.
\end{lemma}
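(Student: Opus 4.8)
The plan is to prove the statement by induction on the number of variables $n$, using the Tonelli theorem to pass from one-dimensional slices to the full $n$-dimensional measure.

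For the base case $n=1$, a nonzero univariate polynomial $f(x_1)$ has at most $\deg f$ real roots, so its zero set is finite and hence of Lebesgue measure zero in $\R$. For the inductive step I would assume the claim for polynomials in $n-1$ variables and write $f$ as a polynomial in the last variable, $f(x_1,\dots,x_n)=\sum_{k=0}^{d} c_k(x_1,\dots,x_{n-1})\,x_n^k$, with coefficients $c_k\in\R[x_1,\dots,x_{n-1}]$. Since $f\not\equiv 0$, at least one coefficient $c_{k_0}$ is not the zero polynomial. Let $Z=\mathcal{V}(f)$; as the preimage of $\{0\}$ under a continuous map it is closed, hence Borel measurable. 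For a base point $a=(a_1,\dots,a_{n-1})$ consider the slice $Z_a=\{t\in\R : (a,t)\in Z\}$. If not all coefficients vanish at $a$, then $f(a,\cdot)$ is a nonzero univariate polynomial, so $Z_a$ is finite and the one-dimensional measure of the slice is zero. The only base points at which $Z_a$ can be all of $\R$ lie in the set $A=\{a : c_k(a)=0 \text{ for all } k\}\subseteq\mathcal{V}(c_{k_0})$, which has $(n-1)$-dimensional measure zero by the induction hypothesis.

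Writing $\lambda_k$ for Lebesgue measure on $\R^k$, I would then apply Tonelli's theorem to the nonnegative measurable slice-length function to obtain $\lambda_n(Z)=\int_{\R^{n-1}}\lambda_1(Z_a)\,da$. Splitting the integral over $A$ and its complement, the contribution from $\R^{n-1}\setminus A$ vanishes because $\lambda_1(Z_a)=0$ there, while the contribution from $A$ vanishes because we integrate over a set of $\lambda_{n-1}$-measure zero. Hence $\lambda_n(Z)=0$, which closes the induction.

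The main obstacle to watch is the degenerate slice where $f(a,\cdot)\equiv 0$, whose one-dimensional measure is infinite. The argument goes through only because such slices are confined to the base set $A$ of measure zero; accordingly I must invoke Tonelli (valid for nonnegative integrands irrespective of finiteness) rather than Fubini, and confirm the Borel measurability of $Z$ so that the iterated-integral identity is legitimate even when the inner integrand takes the value $+\infty$ on a null set.
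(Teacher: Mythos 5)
Your proof is correct. The paper itself does not prove this lemma --- it simply cites \citet{okamoto:1973} --- so there is no internal argument to compare against; your induction on $n$ combined with Tonelli's theorem is precisely the classical proof of the cited result. The two points that usually trip people up are both handled properly: you establish measurability of $Z$ (closed, hence Borel, hence product-measurable) before invoking the iterated integral, and you use Tonelli rather than Fubini so that the slices $Z_a$ with $\lambda_1(Z_a)=+\infty$ cause no trouble, since they are confined to the null set $A\subseteq\mathcal{V}(c_{k_0})$ supplied by the induction hypothesis and the integral of a function that is $+\infty$ on a null set and $0$ elsewhere is $0$. No gaps.
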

\begin{remark}[Notation]
 For a given matrix $\mathbf{A}$, we denote the submatrix in which the $i$-th row and the $j$-th column are excluded by $[\mA]_{\setminus i,\setminus j}$.    
\end{remark}
\begin{lemma}
\label{lem:isomorphism}
    Let $\R^\GG_\mA$ and $\R^\GG$ defines as in \cref{subsec:model}. Then we have $\R^\GG\sim \R^\GG_\mA\sim \R^{|e|}$, where with the symbol $\sim$, we denote an isomorphism of affine varieties, see, e.g., \citet[Def.~6, \S5]{cox:2015} for a definition. Moreover $\R[\GG]$, $\R[\GG_\mA]$, and $\R[a_{i,j}\mid j\to i\in\GG]$ are isomorphic as rings.
\end{lemma}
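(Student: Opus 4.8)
The plan is to exhibit an explicit pair of mutually inverse polynomial (regular) maps between $\R^\GG_\mA$ and $\R^\GG$, to identify $\R^\GG_\mA$ with an affine space by projecting onto the edge coordinates, and then to deduce the isomorphism of coordinate rings formally from the isomorphism of the underlying varieties. First I would dispatch the identification $\R^\GG_\mA\sim\R^{|E|}$: by definition $\R^\GG_\mA$ is the linear subspace of $\R^{p\times p}$ cut out by the equations $[\mathbf{A}]_{i,j}=0$ for $j\to i\notin\GG$, so the linear projection $\mathbf{A}\mapsto([\mathbf{A}]_{i,j})_{j\to i\in\GG}$ onto the edge entries is a linear isomorphism onto the affine space of dimension $|E|$, whose coordinate ring is by construction the polynomial ring $\R[a_{i,j}\mid j\to i\in\GG]$.

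The heart of the argument is the isomorphism $\R^\GG_\mA\sim\R^\GG$ realized by the map $\psi$ given in \eqref{eq:b:can:matrix}, i.e. $\mathbf{A}\mapsto\mathbf{B}'=[\mathbf{B}_o,\mathbf{B}_l]$ with $\mathbf{B}_o=(\mathbf{I}-\mathbf{A}_{o,o})^{-1}$ and $\mathbf{B}_l=\mathbf{B}_o\mathbf{A}_{o,l}$. Since $\GG$ is acyclic, ordering the observed nodes by a topological order makes $\mathbf{A}_{o,o}$ strictly lower triangular, hence nilpotent with $\mathbf{A}_{o,o}^{p_o}=\mathbf{0}$. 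Therefore $(\mathbf{I}-\mathbf{A}_{o,o})^{-1}=\sum_{k=0}^{p_o-1}\mathbf{A}_{o,o}^{k}$ is a \emph{finite} sum, and every entry of $\mathbf{B}_o$ and of $\mathbf{B}_l$ is a polynomial in the entries of $\mathbf{A}$; thus $\psi$ is a morphism of affine varieties.

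For the inverse I would invert \eqref{eq:b:can:matrix} to obtain $\mathbf{A}_{o,o}=\mathbf{I}-\mathbf{B}_o^{-1}$ and $\mathbf{A}_{o,l}=\mathbf{B}_o^{-1}\mathbf{B}_l$. The key observation is that on $\R^\GG$ the matrix $\mathbf{B}_o$ is unipotent lower triangular, so $\mathbf{I}-\mathbf{B}_o$ is strictly lower triangular and nilpotent, whence $\mathbf{B}_o^{-1}=\sum_{k=0}^{p_o-1}(\mathbf{I}-\mathbf{B}_o)^{k}$ is again a finite sum and a polynomial in the entries of $\mathbf{B}_o$. Consequently the inverse assignment $\chi:\mathbf{B}'\mapsto\mathbf{A}$ extends to a polynomial map on the ambient matrix space and is a morphism. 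A direct substitution then gives $\chi\circ\psi=\mathrm{id}$ and $\psi\circ\chi=\mathrm{id}$ on the respective domains, so $\psi$ is an isomorphism of affine varieties; in passing this also exhibits $\R^\GG$ as a Zariski-closed set, cut out by the polynomial conditions that $\mathbf{B}_o$ be unipotent lower triangular and that $\chi(\mathbf{B}')$ vanish off the edges of $\GG$.

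Finally, an isomorphism of affine varieties induces an isomorphism of coordinate rings, so $\R[\GG]\cong\R[\GG_\mA]\cong\R[a_{i,j}\mid j\to i\in\GG]$, the last being the coordinate ring of $\R^{|E|}$. The only genuinely delicate point, and the one I expect to be the main obstacle, is establishing regularity of the inverse map $\chi$: one must invoke acyclicity twice — once to make $\mathbf{A}_{o,o}$ nilpotent, so that $\psi$ is polynomial, and once to make $\mathbf{I}-\mathbf{B}_o$ nilpotent, so that matrix inversion in $\chi$ is polynomial rather than merely rational. The remaining steps are routine matrix manipulations.
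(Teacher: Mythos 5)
Your proof is correct and follows essentially the same route as the paper's: the same polynomial map $\psi$ from \eqref{eq:b:can:matrix} identifying $\R^\GG_\mA$ with $\R^\GG$, the same linear identification of $\R^\GG_\mA$ with $\R^{|E|}$ via the edge coordinates, and the ring isomorphisms deduced formally at the end. If anything, your version is slightly more complete: the paper only checks that $\psi$ is an injective polynomial map (using the adjugate formula together with $\det(\mathbf{I}-\mathbf{A}_{o,o})=1$ for polynomiality), whereas you explicitly construct the regular inverse $\chi$ via the nilpotency of $\mathbf{I}-\mathbf{B}_o$ --- which is what the definition of an isomorphism of affine varieties actually requires, since a bijective morphism need not be an isomorphism in general.
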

\begin{proof}
    The isomorphism $\R^\GG_\mA\sim \R^{|e|}$ comes directly from its definition. Indeed it is easy to see that $\R^\GG_\mA$ is an $|e|$-dimensional linear subspace of $\R^{p\times p}=(a_{i,j})_{i,j\in p\times p}$, defined by the linear equations $a_{i, i}=1$, and $a_{i,j}=0$, $\forall i,j\in\mathcal{V}$ such that $j\to i\notin\GG$.

    To prove the isomorphism $\R^\GG\sim \R^\GG_\mA$, we need to prove that there is a polynomial bijective map between the two spaces. From \eqref{eq:b:can:matrix}, and using 
    $[\mathbf{B}_{o}]_{i,j}=[(\mathbf{A}_{o,o})^{-1}]_{i,j}=(-1)^{i+j}\det([\mathbf{A}_{o,o}]_{\setminus j,\setminus i}),$ where we used that $\det(\mathbf{A}_{o,o})=1$. It is clear that $\R^\GG$ is the image of polynomial map of $\R^\GG_{\mA}$. Let us call this polynomial map $\psi$ and assume $\psi(\mA)=\psi(\tmA)$. Then from the definition of $\psi$ we have $(I-\mA_{o,o})^{-1}=(I-\tmA_{o,o})^{-1}$ that implies $\mA_{o,o}=\tmA_{o,o}$. Moreover, $(I-\mA_{o,o})^{-1}\mA_{o,l}=(I-\tmA_{o,o})^{-1}\tmA_{o,l}$ that implies $\mA_{o,l}=\tmA_{o,l}$ and so $\mA=\tmA$.

    The isomorphisms between the rings come from \citet[\S5, Thm.~9]{cox:2015}.
\end{proof}

\begin{corollary}
    \label{cor:measure:zero}
    Let $f\in\R[\GG]$ be a non-zero polynomial. Then the subset of $\R^\GG$ on which $f$ vanishes is a Lebesgue measure 0 subset of $\R^\GG$.
\end{corollary}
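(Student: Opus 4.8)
The plan is to push the statement through the two isomorphisms supplied by \cref{lem:isomorphism} and then invoke Okamoto's lemma (\cref{lem:okamoto}) on the affine space of edge parameters $\R^{|e|}$, where Lebesgue measure has its elementary meaning. Concretely, I would let $\psi\colon\R^\GG_\mA\to\R^\GG$ denote the polynomial parametrization $\mA\mapsto\mB'$ of \eqref{eq:b:can:matrix}, which \cref{lem:isomorphism} shows to be an isomorphism of affine varieties, and recall that $\R^\GG_\mA$ is just the affine space $\R^{|e|}$ in the edge coordinates $(a_{i,j})_{j\to i\in\GG}$. The associated ring isomorphism $\psi^*\colon\R[\GG]\to\R[a_{i,j}\mid j\to i\in\GG]$, $g\mapsto g\circ\psi$ (\citet[\S5, Thm.~9]{cox:2015}), would carry the given $f$ to $\tilde f:=f\circ\psi$.

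Next I would check that $\tilde f$ is a genuine non-zero polynomial in the $|e|$ edge variables. Since $f$ is a non-zero element of the coordinate ring $\R[\GG]$, it does not vanish identically on $\R^\GG$; surjectivity of $\psi$ then forces $\tilde f$ not to vanish identically on $\R^\GG_\mA=\R^{|e|}$, and because $\R[a_{i,j}\mid j\to i\in\GG]$ is an honest polynomial ring (the edge weights are algebraically independent, as $\R^\GG_\mA\sim\R^{|e|}$), $\tilde f$ is a non-zero polynomial. \cref{lem:okamoto} then yields that its zero set $\{a\in\R^{|e|}\mid\tilde f(a)=0\}$ is Lebesgue-null in $\R^{|e|}$.

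It remains to transfer this conclusion back to $\R^\GG$. Using $f(\psi(a))=\tilde f(a)$ together with the bijectivity of $\psi$, the vanishing locus of $f$ on $\R^\GG$ is exactly the image under $\psi$ of the zero set of $\tilde f$; equivalently, its $\psi$-preimage equals $\{a\mid\tilde f(a)=0\}$. I expect the only delicate point to be the meaning of ``Lebesgue measure zero in $\R^\GG$'': since $\R^\GG$ embeds in $\R^{p_o\times p}$ as a lower-dimensional variety it carries no ambient Lebesgue measure, so one must fix the chart in which measure is computed. This is resolved by observing that both $\psi$ and $\psi^{-1}$ are polynomial maps (\cref{lem:isomorphism}), hence locally Lipschitz, so each sends null sets to null sets; consequently the edge-parameter chart defines an unambiguous notion of a null set on $\R^\GG$, and the Lebesgue-null set $\{a\mid\tilde f(a)=0\}$ corresponds to a Lebesgue-null vanishing locus of $f$, which is the claim.
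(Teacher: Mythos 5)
Your proposal is correct and follows the same route as the paper's (one-line) proof: transport $f$ through the isomorphism of \cref{lem:isomorphism} to a genuine non-zero polynomial on the edge-parameter space $\R^{|e|}$, apply \cref{lem:okamoto} there, and pull the null set back. You simply spell out the details the paper leaves implicit, including the worthwhile observation that ``measure zero in $\R^\GG$'' must be read in the edge-coordinate chart, which the polynomial bijection $\psi$ makes unambiguous.
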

\begin{proof}
    Thanks to the isomorphism in \cref{lem:isomorphism}, we can apply \cref{lem:okamoto} to $\R^\GG$.
\end{proof}
\begin{definition}[Leibniz Expansion of the Determinant]
    For any $M\in\R^{n\times n}$, the determinant of $M$ can be computed using the following formula:
    \begin{equation}
        \label{eq:leib:exp}
        \det(M)=\sum_{\sigma_P\in\mathcal{S}(n)}(-1)^{\sigma_P}\prod_{i=1}^nM_{\sigma_P(i),i},        
    \end{equation}
    where $\mathcal{S}(n)$ is the set of all the permutations of $n$ elements, and $(-1)^{\sigma_P}$ is the sign of the permutation. See, e.g., \citet[Def.~10.33]{sheldon:2014} for more details.
    \end{definition}

\begin{definition}
    Let $\pi\in\PP(j,i)$. The path monomial associated to it is defined as $$a^\pi=a_{i_1,i_2}\cdot\dots\cdot a_{i_{k},i_{k+1}}\in\R[\GG_{\mA}].$$
\end{definition}

    \begin{lemma}
    \label{lem:inv}
    Let $\mathbf{A}$ defined as in \eqref{eq:sem:1}. We have
        $$\mathbf{B}=(I-\mathbf{A})^{-1}=\displaystyle\sum_{i=0}^\infty\mathbf{A}^i=I+\mathbf{A}+\mathbf{A}^2+\dots+\mathbf{A}^p,$$
        $$[\mathbf{B}]_{i,j}=\sum_{P\in\mathcal{P}(i,j)}a^P.$$
        In particular $[\mathbf{B}]_{i,j}=0\in\R[\GG_\mA]$ if and only if $P\in\mathcal{P}(i,j)=\emptyset$.
    \end{lemma}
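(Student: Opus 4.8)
The plan is to prove the three assertions in order, each one feeding into the next.

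First I would establish nilpotency of $\mathbf{A}$ so that the Neumann series terminates. Because $\GG$ is a DAG, its nodes admit a topological ordering, and relabelling the nodes by this order makes $\mathbf{A}$ strictly triangular: by \eqref{eq:sem:1}, $[\mathbf{A}]_{j,i}\neq 0$ forces $i\to j\in E$, so a nonzero entry only occurs when $i$ precedes $j$. A strictly triangular $p\times p$ matrix satisfies $\mathbf{A}^{p}=\mathbf{0}$; equivalently, any directed walk with $p$ or more edges must revisit a node and thereby close a cycle, which is impossible in a DAG, so $\mathbf{A}^{k}=\mathbf{0}$ for all $k\geq p$. The telescoping identity $(I-\mathbf{A})\sum_{i=0}^{p}\mathbf{A}^{i}=I-\mathbf{A}^{p+1}=I$ then holds, which simultaneously shows that $I-\mathbf{A}$ is invertible and that $\mathbf{B}=(I-\mathbf{A})^{-1}=\sum_{i=0}^{p}\mathbf{A}^{i}=I+\mathbf{A}+\dots+\mathbf{A}^{p}$ (the final term $\mathbf{A}^{p}$ itself being zero). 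This gives the first displayed equation.

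Next, for the path formula I would show by induction on $k$ that $[\mathbf{A}^{k}]_{i,j}=\sum a^{P}$, where the sum ranges over directed paths $P$ of length exactly $k$ joining the relevant endpoints. The cases $k=0,1$ are immediate from the definitions of $\mathbf{A}$ and of the path monomial $a^{P}$. For the inductive step I would expand $[\mathbf{A}^{k+1}]_{i,j}=\sum_{m}[\mathbf{A}]_{i,m}[\mathbf{A}^{k}]_{m,j}$, substitute the inductive hypothesis, and use the defining DAG feature that every walk is automatically a \emph{simple} path, since a repeated node would produce a cycle. Summing this identity over $k=0,\dots,p$ and invoking the first part collapses the powers into $[\mathbf{B}]_{i,j}=\sum_{P\in\mathcal{P}(i,j)}a^{P}$, where $\mathcal{P}(i,j)$ is the set of all directed paths between the endpoints; here the trivial length-zero path contributes the empty product $1$, which accounts for the diagonal identity term when $i=j$.

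Finally, for the ``in particular'' statement I would argue that $P\mapsto a^{P}$ sends distinct paths to distinct monomials in $\R[\GG_\mA]$. Each simple path is determined by the \emph{set} of edges it traverses, since within a path every intermediate node has in-degree and out-degree one, so the edge set reconstructs the ordered vertex sequence uniquely; moreover, paths of different lengths give monomials of different degrees. Hence distinct paths yield distinct squarefree monomials in the edge indeterminates $\{a_{i,j}\mid j\to i\in\GG\}$ (recall these are the coordinates of $\R[\GG_\mA]$ by \cref{lem:isomorphism}). Since monomials are linearly independent in the polynomial ring, the polynomial $\sum_{P\in\mathcal{P}(i,j)}a^{P}$ is the zero element precisely when its index set is empty, i.e. $\mathcal{P}(i,j)=\emptyset$. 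I expect this last injectivity step to be the main obstacle: one must verify carefully that two genuinely different directed paths with the same endpoints can never produce the same monomial, which is exactly where the simplicity of paths in a DAG (no repeated edges, edge set determining the path) is needed. The nilpotency and induction steps are routine once the topological order is fixed.
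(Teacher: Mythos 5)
Your proof is correct. The paper states \cref{lem:inv} without proof, treating it as a standard fact about linear SEMs on DAGs, and your argument—nilpotency of $\mathbf{A}$ via a topological ordering, induction on $k$ to identify $[\mathbf{A}^k]_{i,j}$ with length-$k$ paths, and the observation that the resulting path monomials are distinct (or at least enter with positive coefficients, so no cancellation can occur)—is exactly the standard derivation one would supply; no gaps.
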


    \begin{definition}
        Let $I=\{i_1,\dots,i_n\},J=\{j_1,\dots,j_n\}\subset \mathcal{V}$. $P={P_1,\dots,P_n}$ is system of paths between $I,J$, if there exists a permutation $\sigma_P\in S_n$, such that $P_k\in\PP(i_k,j_{\sigma_P(k)})$. We denote the set of all such systems by $\PP(I, J)$. Moreover, a system of paths is non-intersecting if $P_k\cap P_l=\emptyset$ for $k\neq l$. The set of all such systems is denoted by $\tilde{\PP}(I, J)$. The path monomial associated to $P$, is defined as:
        $$ 
            a^P=a^{P_1}\cdot\dots\cdot a^{P_n}.
        $$
        \begin{figure}[ht]
        \centering
                \begin{tikzpicture}
           \node[] (1) at (0,0.5) {$i_1$};
           \node[] (2) at (0,-0.5)   {$i_2$};
           \node[] (3) at (1.5,0.5)   {$j_1$};
           \node[] (4) at (1.5,-0.5)  {$j_2$};
           
           \draw[->] (1) to (3);
           \draw[->] (2) to (4);
        \end{tikzpicture}
            \hspace{1cm}
        \begin{tikzpicture}
           \node[] (0) at (0,0) {$c$};
           \node[] (1) at (-1,0.5) {$i_1$};
           \node[] (2) at (-1,-0.5)   {$i_2$};
           \node[] (3) at (1,0.5)   {$j_1$};
           \node[] (4) at (1,-0.5)  {$j_2$};
           
           \draw[->] (1) to (0);
           \draw[->] (2) to (0);
           \draw[->] (0) to (3);
           \draw[->] (0) to (4);
        \end{tikzpicture}
        \caption{The system on the left has no sided intersection while the system on the right has.}
    \end{figure}
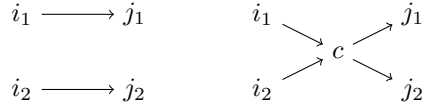
    \end{definition}
    
\begin{theorem}[Gessel-Viennot-Lindström Lemma]
Let $I, J\subset\mathcal{V}$ having the same size; then it holds that
\label{thm:gvl}
    $$\det(I-\mathbf{A})^{-1}_{I,J}=\sum_{P\in\mathcal{P}(J,I)}(-1)^{\sigma_P}a^P=\sum_{P\in\tilde{\mathcal{P}}(J,I)}(-1)^{\sigma_P}a^P.$$
In particular, $\det(I-\mathbf{A})^{-1}_{I,J}=0\in\R[\GG_\mA]$ if and only if $\tilde{\mathcal{P}}(J,I)=\emptyset$.
\end{theorem}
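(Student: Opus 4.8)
The plan is to combine the Leibniz expansion \eqref{eq:leib:exp} of the determinant with the path-series representation of the entries of $\mathbf{B}=(\mathbf{I}-\mathbf{A})^{-1}$ supplied by \cref{lem:inv}, and then to discard the self-intersecting path systems by means of a sign-reversing involution. The first equality will be essentially bookkeeping; the second is where the real work lies.

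First I would write $I=\{i_1,\dots,i_n\}$, $J=\{j_1,\dots,j_n\}$ and expand the submatrix via \eqref{eq:leib:exp},
$$\det[\mathbf{B}]_{I,J}=\sum_{\sigma\in S_n}(-1)^{\sigma}\prod_{k=1}^n[\mathbf{B}]_{i_k,j_{\sigma(k)}}.$$
By \cref{lem:inv} each factor equals $\sum_{P_k}a^{P_k}$, the sum ranging over single paths joining the corresponding row and column nodes. Distributing the product over these sums turns each $\sigma$-term into a sum of path monomials $a^{P_1}\cdots a^{P_n}=a^{P}$ indexed by tuples $(P_1,\dots,P_n)$; by the definition of a system of paths, collecting these tuples over all $\sigma$ reproduces exactly the set $\mathcal{P}(J,I)$, and the Leibniz sign $(-1)^{\sigma}$ of a tuple is precisely $(-1)^{\sigma_P}$. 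This yields the first equality $\det[\mathbf{B}]_{I,J}=\sum_{P\in\mathcal{P}(J,I)}(-1)^{\sigma_P}a^P$.

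The substantive step is the second equality, asserting that the intersecting systems contribute zero. I would construct a sign-reversing involution $\iota$ on $\mathcal{P}(J,I)\setminus\tilde{\mathcal{P}}(J,I)$: after fixing a total order on $\mathcal{V}$ and on the indices, select within an intersecting system the first two paths sharing a vertex and, at the first such shared vertex in the chosen order, exchange the two segments following it. Two properties must be checked. First, $\iota$ preserves the monomial, since swapping the tails only repartitions the same multiset of edges, so $a^{\iota(P)}=a^P$. Second, $\iota$ reverses the sign, since exchanging the two tails swaps the two endpoints, whence $\sigma_{\iota(P)}=\sigma_P\circ\tau$ for a transposition $\tau$ and $(-1)^{\sigma_{\iota(P)}}=-(-1)^{\sigma_P}$. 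Then all intersecting systems cancel in pairs and only $\tilde{\mathcal{P}}(J,I)$ survives. I expect the main obstacle to be making $\iota$ genuinely canonical—choosing a rule for the ``first'' intersection that is provably invariant under a second application of $\iota$; the rest is mechanical.

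Finally, for the ``in particular'' claim I would show that $P\mapsto a^P$ is injective on $\tilde{\mathcal{P}}(J,I)$. A non-intersecting system is a vertex-disjoint union of directed paths, so every vertex has in- and out-degree at most one within the system, and hence the edge set $E(P)$ decomposes uniquely back into these paths (together with their source–sink pairing, i.e.\ $\sigma_P$). Since distinct edges correspond to distinct indeterminates $a_{\cdot,\cdot}$ of $\R[\GG_\mA]$, distinct non-intersecting systems yield distinct monomials, so no further cancellation occurs and each surviving monomial carries coefficient $\pm1$. Therefore $\det[\mathbf{B}]_{I,J}=0$ in $\R[\GG_\mA]$ if and only if $\tilde{\mathcal{P}}(J,I)=\emptyset$.
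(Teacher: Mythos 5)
The paper itself gives no proof of \cref{thm:gvl}: it imports the Gessel--Viennot--Lindstr\"om lemma as a known classical result, so there is nothing to compare against line by line. Your argument is the standard proof of that lemma --- Leibniz expansion of the minor, the path-sum expansion of the entries of $(\mathbf{I}-\mathbf{A})^{-1}$ from \cref{lem:inv}, and a tail-swapping sign-reversing involution to cancel the intersecting systems --- and it is correct; in the acyclic setting the swapped tails are automatically genuine paths (a directed walk in a DAG cannot revisit a vertex), the canonical choice of ``first intersection'' you flag is the one standard technical point and is handled exactly as you describe in the literature, and your injectivity argument for the ``in particular'' clause (the edge set of a vertex-disjoint path system determines the system) is sound.
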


\section{Proofs and Lemmas}
\label{sec:proofs}
\subsection{Proofs of \cref{subsec:ceid:unknown:graph}}
\label{subsec:proofs:ceid:unknown:graph}
\begin{proof}[Proof of \cref{thm:tot:eff}]
From \citet[Theo.~15]{salehkaleybar:2020}, we know that the entire $j$-th column $\mathbf{B}'$ is identifiable if and only if there is no latent variable $l$ such that $\de_o(j)=\de_o(l)$. Thus, we know that if this condition is satisfied, the entry $[\mathbf{B}']_{i,j}$ is identifiable. Hence, we can assume that such an $l$ exists to conclude the proof. 

We will prove that $[\mathbf{P}_\sigma\mathbf{B}']_{i,j}=[\mathbf{B}']_{i,l}\neq [\mathbf{B}']_{i,j}$, \emph{in general}, if and only if $i\in\de^{\GG_{\setminus j}}_o(l)$, where $\sigma$ is the transposition that swaps $j$ and $l$. In particular, we will show that $[\mathbf{B}']_{i,j}-[\mathbf{B}']_{i,l}\in\R[\GG]$ where $\R[\GG]$ (the coordinate ring associate to $\GG$ as defined in \cref{def:ag:basics}), is a  \emph{non-zero} polynomial if and only if the graphical condition is satisfied. Notice that from \cref{cor:measure:zero}, we know this is enough to show that the parameters are generically identifiable.

Using \cref{lem:inv}, we can write the entries of $\mathbf{B}'$ as
$$[\mathbf{B}']_{i,l}=(I-\mathbf{A})^{-1}_{i,l}=\sum_{P\in\mathcal{P}(l,i)}a^P,$$
where $\mathcal{P}(l,i)$ denotes the set of all the paths from $l$ to $i$.
Now let $\mathcal{P}_{j}(l,i)$ be the set of directed path from $l$ to $i$ that passes through $j$, and $\mathcal{P}_{\setminus j}(l,i)=\mathcal{P}(l,i)\setminus\mathcal{P}_{j}(l,i)$. We can rewrite the formula above as follows
$$
[\mathbf{B}']_{i,l}=\sum_{P\in\mathcal{P}(l,i)}a^P=\sum_{P\in\mathcal{P}_j(l,i)}a^P+\sum_{Q\in\mathcal{P}_{\setminus j}(l,i)}a^Q.
$$
Note that $P\in\mathcal{P}_j(l,i)$ if and only if there are $P_j\in\mathcal{P}(l,j)$ and $P_i\in\mathcal{P}(j,i)$ such that $a^P=a^{P_j}a^{P_i}$. Finally, we can write $[\mathbf{B}']_{i,l}$ as:
\begin{equation}
\label{eq:b:ident}
    \begin{aligned}
        & \sum_{P\in\mathcal{P}_j(l,i)}a^P+\sum_{Q\in\mathcal{P}_{\setminus j}(l,i)}a^Q=\sum_{\substack{P_i\in\mathcal{P}(j,i)\\ P_j\in\mathcal{P}(l,j)}}a^{P_j}a^{P_i}+\sum_{Q\in\mathcal{P}_{\setminus j}(l,i)}a^Q\\
        =& \underbrace{\sum_{P_i\in\mathcal{P}(j,i)}a^{P_i}}_{[\mathbf{B}']_{i,j}}\underbrace{\sum_{P_j\in\mathcal{P}(l,j)}a^{P_j}}_{[\mathbf{B}']_{i,l}}+\sum_{Q\in\mathcal{P}_{\setminus j}(l,i)}a^Q
        = [\mathbf{B}']_{i,j}[\mathbf{B}']_{j,l}+\sum_{Q\in\mathcal{P}_{\setminus j}(l,i)}a^Q\\
        =&[\mathbf{B}']_{i,j}+\sum_{Q\in\mathcal{P}_{\setminus j}(l,i)}a^Q,
    \end{aligned}
\end{equation}
where we used $[\mB']_{j,l}=1$ from \cref{rem:scaling} that we can assume without loss of generality. Finally, we have $[\mathbf{B}']_{i_0,j}-[\mathbf{B}']_{i_0,l}=\sum_{Q\in\mathcal{P}_{\setminus j}(l,i_0)}a^Q\in\R[\GG_\mA]$. From \cref{lem:inv}, we know it is the zero polynomial if and only if $\mathcal{P}_{\setminus j}(l,i_0)=\emptyset$, i.e., $i_0\notin\de^{\GG_{\setminus j}}_o(l)$. Notice that we proved that $[\mathbf{B}']_{i,j}-[\mathbf{B}']_{i_0,l}$ is non zero in $\R[\GG_\mA]$, that is enough to show thanks to \cref{lem:isomorphism}.
\end{proof}

\begin{lemma}
\label{lem:adj:mat}
    Let $j$ and $l$ be observed and latent variables, respectively, such that $\de_o(j)=\de_o(l)$. Let $\tilde{\mathbf{B}}'=\mB'\mathbf{P}_\sigma$, where $\sigma$ is the transposition swapping the columns corresponding to $l$ and $j$.
    For every $i, k\in\mathcal{O}$, and $h\in\mathcal{L}$ we have:
    \begin{align}
    \label{eq:adj:mat}
        [\mathbf{\Tilde{A}}_{o,o}]_{i,k} &=
            [\mathbf{A}_{o,o}]_{i,k}+\mathbf{1}_{\ch(l)}(i)c^{j}_{l,i}[I-\mathbf{A}_{o,o}]_{j,k},\\
        \label{eq:adj:mat:lat}
        [\mathbf{\Tilde{A}}_{o,l}]_{i,h} &= [\mathbf{A}_{o,l}]_{i,h}-\mathbf{1}_{\ch(l)}(i)c^{j}_{l,i}[\mathbf{A}_{o,l}]_{j,h},
    \end{align}
    where $c^{j}_{l,i}=(-1)^{i+j}\sum_{Q\in\mathcal{P}_{\setminus j}(l,i)}a^Q$, and $\mathbf{1}_X$ represent the indicator function of a set $X$.
    
    In particular, $[\mathbf{\Tilde{A}}_{o,o}]_{i,k}=[\mathbf{A}_{o,o}]_{i,k}$, if $i\notin\ch(l)$ or $k\notin\pa(j)\cup\{j\}$.
\end{lemma}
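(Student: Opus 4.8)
The plan is to exploit that right-multiplying $\mB'$ by the transposition matrix $\mathbf{P}_\sigma$ alters only two columns: the observed block $\tmB_o$ is $\mB_o$ with its $j$-th column replaced by the $l$-th column $[\mB']_{\cdot,l}$ of the latent block, and the latent block $\tmB_l$ is $\mB_l$ with its $l$-th column replaced by $[\mB']_{\cdot,j}$. Writing $\mathbf{r}:=[\mB']_{\cdot,l}-[\mB']_{\cdot,j}$, this is a rank-one update of each block, $\tmB_o=\mB_o+\mathbf{r}\,\mathbf{e}_j^{\top}$ and $\tmB_l=\mB_l-\mathbf{r}\,\mathbf{e}_l^{\top}$. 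Since \eqref{eq:b:can:matrix} gives $\tmA_{o,o}=\mathbf{I}-\tmB_o^{-1}$ and $\tmA_{o,l}=\tmB_o^{-1}\tmB_l$, the whole statement reduces to inverting the rank-one perturbation $\tmB_o$ and reading off the entries.

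The crux is to show that this perturbation is benign, i.e.\ its Sherman--Morrison denominator equals $1$. Using $(\mathbf{I}-\mA_{o,o})\mB_o=\mathbf{I}$ and $(\mathbf{I}-\mA_{o,o})\mB_l=\mA_{o,l}$ from \eqref{eq:b:can:matrix}, one computes $\mB_o^{-1}\mathbf{r}=(\mathbf{I}-\mA_{o,o})\mathbf{r}=[\mA_{o,l}]_{\cdot,l}-\mathbf{e}_j$, so that $\mathbf{e}_j^{\top}\mB_o^{-1}\mathbf{r}=[\mA_{o,l}]_{j,l}-1$. Here the hypothesis $\de_o(j)=\de_o(l)$ enters decisively: it forces $j$ to be the topologically first observed descendant of $l$, hence (in a canonical graph, where $l$ points only to observed children) the first child of $l$, so \cref{rem:scaling} gives $[\mA_{o,l}]_{j,l}=1$ and the denominator is $0+1=1$. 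Equivalently $\det(\tmB_o)=\det(\mB_o)=1$, and therefore $\tmB_o^{-1}=\mB_o^{-1}-(\mB_o^{-1}\mathbf{r})(\mathbf{e}_j^{\top}\mB_o^{-1})$ with no denominator, keeping everything polynomial.

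Substituting this into $\tmA_{o,o}=\mathbf{I}-\tmB_o^{-1}=\mA_{o,o}+(\mB_o^{-1}\mathbf{r})(\mathbf{e}_j^{\top}\mB_o^{-1})$ gives \eqref{eq:adj:mat} entrywise: the right factor is $\mathbf{e}_j^{\top}\mB_o^{-1}=[\mathbf{I}-\mA_{o,o}]_{j,\cdot}$, supplying the $[\mathbf{I}-\mA_{o,o}]_{j,k}$ term, while the left factor $\mB_o^{-1}\mathbf{r}=[\mA_{o,l}]_{\cdot,l}-\mathbf{e}_j$ is supported on $\ch(l)$ and supplies the coefficient $\mathbf{1}_{\ch(l)}(i)\,c^{j}_{l,i}$; the analogous manipulation of $\tmA_{o,l}=\tmB_o^{-1}\tmB_l$, using $\mathbf{e}_j^{\top}\mB_o^{-1}\mB_l=[\mA_{o,l}]_{j,\cdot}$ and $\tmB_o^{-1}\mathbf{r}=\mB_o^{-1}\mathbf{r}$, yields \eqref{eq:adj:mat:lat}. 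To render the coefficient in the stated signed path-monomial form $c^{j}_{l,i}=(-1)^{i+j}\sum_{Q\in\mathcal{P}_{\setminus j}(l,i)}a^Q$, I would invoke the representation $\mathbf{r}_i=[\mB']_{i,l}-[\mB']_{i,j}=\sum_{Q\in\mathcal{P}_{\setminus j}(l,i)}a^Q$ established in the proof of \cref{thm:tot:eff} via \cref{lem:inv}, together with the cofactor expansion of $\tmB_o^{-1}$ through the Gessel--Viennot--Lindstr\"om lemma (\cref{thm:gvl}), which is where the $(-1)^{i+j}$ sign originates. Finally, the ``in particular'' clause is immediate from the two support conditions: $[\mathbf{I}-\mA_{o,o}]_{j,k}=0$ unless $k=j$ or $k\to j\in E$, i.e.\ $k\in\pa(j)\cup\{j\}$, and the coefficient carries the factor $\mathbf{1}_{\ch(l)}(i)$, so the entry is unchanged whenever $i\notin\ch(l)$ or $k\notin\pa(j)\cup\{j\}$.

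The main obstacle is the second step, i.e.\ proving the rank-one denominator is exactly $1$; this rests entirely on the normalization of \cref{rem:scaling} and on the combinatorial fact that $\de_o(j)=\de_o(l)$ makes $j$ the first child of $l$, so that the unique $l\!\to\!j$ contribution is the scaled unit edge. Without it the update would carry a nontrivial rational denominator and the correction would cease to be polynomial, breaking the downstream appeal to \cref{cor:measure:zero}. The remaining difficulty is purely bookkeeping: matching the clean rank-one coefficient with the signed cofactor form of $c^{j}_{l,i}$, which is where the Gessel--Viennot translation and the sign conventions must be handled with care.
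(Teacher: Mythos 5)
Your route is genuinely different from the paper's: you treat the column swap as a rank-one update $\tmB_o=\mB_o+\mathbf{r}\mathbf{e}_j^{\top}$ and invert it via Sherman--Morrison, whereas the paper expands $\det([\mB_o+C]_{\setminus k,\setminus i})$ along the $j$-th column and runs a Gessel--Viennot/Leibniz sign-cancellation argument. Your key observation --- that $\mathbf{e}_j^{\top}\mB_o^{-1}\mathbf{r}=[\mA_{o,l}]_{j,l}-1=0$ because $\de_o(j)=\de_o(l)$ forces $j$ to be the first child of $l$ and \cref{rem:scaling} normalizes that edge to $1$ --- is correct and gives a much more transparent explanation of why the update stays polynomial. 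The identity $\mB_o^{-1}\mathbf{r}=(\mathbf{I}-\mA_{o,o})\mathbf{r}=[\mA_{o,l}]_{\cdot,l}-\mathbf{e}_j$ and the resulting support claim (the ``in particular'' clause) are also correct.

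However, there is a genuine gap in your final step. You assert that the left factor $\mB_o^{-1}\mathbf{r}=[\mA_{o,l}]_{\cdot,l}-\mathbf{e}_j$ ``supplies the coefficient $\mathbf{1}_{\ch(l)}(i)\,c^{j}_{l,i}$'' and that the remaining work is bookkeeping via \cref{thm:gvl}. But $[\mB_o^{-1}\mathbf{r}]_i=[\mA_{o,l}]_{i,l}-\delta_{ij}$ is the \emph{single-edge} weight $a_{i,l}$ (for $i\in\ch(l)\setminus\{j\}$), while $c^{j}_{l,i}=(-1)^{i+j}\sum_{Q\in\mathcal{P}_{\setminus j}(l,i)}a^Q$ is a signed sum over \emph{all} $j$-avoiding paths from $l$ to $i$; you have silently identified $\mB_o^{-1}\mathbf{r}$ with $\mathbf{r}$ itself (whose entries are the path sums). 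These are different polynomials whenever some $l$-to-$i$ path avoiding $j$ has length $\ge 2$: e.g.\ for $l\to j$, $l\to m$, $l\to i$, $j\to m$, $m\to i$ with $j,m,i$ observed, a direct computation gives $[\tmA_{o,o}]_{i,j}=a_{i,l}$, not $a_{i,l}+a_{m,l}a_{i,m}$. So the proposed reconciliation cannot be carried out; you would be proving a false identity. (Your closed form $[\mA_{o,l}]_{i,l}-\delta_{ij}$ is in fact the one consistent with the worked IV example in \cref{app:example:iv}, and the discrepancy does not affect the support statement or the downstream genericity arguments, since both candidate coefficients are nonzero polynomials exactly when $i\in\ch(l)\setminus\{j\}$ --- but as a proof of the lemma \emph{as stated}, the last step fails and needs to be confronted rather than deferred to ``sign conventions.'')
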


\begin{proof}[Proof of \cref{lem:adj:mat}]
    We first show the equality in \eqref{eq:adj:mat}. According to \eqref{eq:b:ident}, $\Tilde{\mathbf{B}}_o$ can be written as $\mathbf{B}_o+C$:
    \begin{equation}
    \label{eq:Cij}
        C_{i,k}=
            \begin{cases}
                \displaystyle\sum_{Q\in\mathcal{P}_{\setminus j}(l,i)}a^Q         \quad&\emph{if }\,k = j\,\emph{ and }\, i\in \de^{\GG_{\setminus j}}_o(l),\\
                0          \quad&\emph{otherwise.}\\
            \end{cases}
    \end{equation}
    From the formula for the inverse of a matrix, we know that
    $$
        [\mathbf{B}_o+C]^{-1}_{i,k} = (-1)^{i+k}\det([\mathbf{B}_o+C]_{\setminus k,\setminus i}).
    $$
    Note that $\det(\mathbf{B}_o+C)=1$ since it is a lower triangular matrix with one on the diagonal. Thus, we can imply that $[\Tilde{\mathbf{A}}_{o,o}]_{j,k}= [(\mathbf{B}_o+C)^{-1}]_{j,k} =  [\mathbf{B}_o]^{-1}_{j,k} = [\mathbf{A}_{o,o}]_{j,k}$. Hence, we only need to consider the entries of $[\Tilde{\mathbf{A}}_{o,o}]_{i,k}$ where $i\neq j$.

   To compute $\det([\mathbf{B}_o+C]_{\setminus k,\setminus i})$ with respect to the $j$-th column, we have:
        \begin{align*}
            &\det([\mathbf{B}_o+C]_{\setminus k,\setminus i})=\displaystyle\sum(-1)^{s+j}[\mathbf{B}_o]_{s,j}\det([\mathbf{B}_o]_{\setminus(j,s),\setminus(i,j)})+\displaystyle\sum(-1)^{s+j}C_{s,j}\det([\mathbf{B}_o]_{\setminus(j,s),\setminus(i,j)})\\
            &=\det([\mathbf{B}_o]_{\setminus k,\setminus i})+\displaystyle\sum(-1)^{s+j}C_{s,j}\det([\mathbf{B}_o]_{\setminus(j,s),\setminus(i,j)})\\
            &=(-1)^{i+j}[\mathbf{A}_{o,o}]_{i,k}+\displaystyle\sum_{s\in \de^{\GG_{\setminus k}}_o(l)}(-1)^{s+j}\sum_{Q\in\mathcal{P}_{\setminus j}(l,s)}a^Q \det([\mathbf{B}_o]_{\setminus(k,s),\setminus(i,j)}),
        \end{align*}
        where in the last equality, we plug in $C_{s,j}$ according to \eqref{eq:Cij}.
        
        Based on Theorem \ref{thm:gvl}, 
        \begin{equation*}
        \begin{aligned}
            &\det([\mathbf{B}_o]_{\setminus(k,s),\setminus(i,j)})=\det([(I-\mathbf{A}_{o,o})^{-1}]_{\setminus(k,s),\setminus(i,j)})=\sum_{P\in\tilde{\mathcal{P}}(\mathcal{O}\setminus\{i,j\},\mathcal{O}\setminus\{k,s\})}(-1)^{\sigma_P}a^P.
        \end{aligned}
        \end{equation*}
        In the following, we show that that $\tilde{\mathcal{P}}(\mathcal{O}\setminus\{i,j\},\mathcal{O}\setminus\{k,s\})$ can be non-empty only if: 
        \begin{equation} 
        \label{eq:int:paths}
            \begin{cases}
                &s=i \emph{ or } s\to i\in\GG,\\
                &k=j \emph{ or } k\to j\in\GG.         
            \end{cases}
        \end{equation}
        To show this, let $$P=\{\pi_1,\dots,\pi_{j-1},\pi_{j+1},\dots,\pi_{i-1},\pi_{i+1},\dots,\pi_p\}$$ be a system of paths in $\tilde{\mathcal{P}}(\mathcal{O}\setminus\{i,j\},\mathcal{O}\setminus\{k,s\})$, where with $\pi_a$ we denote the path starting at the node $a$. Consider the path $\pi_s=sa_0\dots a_t$. If there is a $x\in\{0,\dots,t\}$ such that $a_x\notin\{i, j\}$, then $\pi_s$ would intersect $\pi_{a_x}$ and thus $P$ would have an intersection. The same argument applies for $\pi_k$. Therefore, 
        $$
            \pi_s=s a_0, \qquad \pi_k=k b_0, \qquad \{a_0, b_0\} = \{i, j\}.        
        $$
        To conclude the argument, please note that, since from \eqref{eq:Cij} we see that  $C_{s,j}=0$ if $s\notin\de^{\GG_{\setminus j}}_o(l)$, we can restrict ourselves to the case in which   $s\in\de^{\GG_{\setminus j}}_o(l)\subseteq\de(j)$. Since the graph is acyclic, this implies $a_0=i$ and $b_0 = j$.
        
        We now prove that if for all the paths from $l$ to $i$, there is an observed variable $s_0\neq i$ on the path, the quantity
        $$
            \displaystyle\sum_{s\in\de^{\GG_{\setminus k}}_o(l)}(-1)^{s+j}\sum_{Q\in\mathcal{P}_{\setminus j}(l,s)}a^Q \det([\mathbf{B}_o]_{\setminus(k,s),\setminus(i,j)}),
        $$
        is equal to 0.
        We can rewrite the sum above as follows:
        \begin{equation}
        \label{eq:path:can}
            \displaystyle\sum_{s\in\de^{\GG_{\setminus k}}_o(l)}\sum_{\substack{Q\in\mathcal{P}_{\setminus j}(l,s)\\P\in\tilde{\mathcal{P}}(\mathcal{O}\setminus\{i,j\},\mathcal{O}\setminus\{k,s\})}}(-1)^{s+j+\sigma_P}a^Q a^P.
        \end{equation}
        Let $\pi_0$ be a path from $l$ to $i$ that can be decomposed in the following way:
        $$
            \overbrace{\underbrace{l\to\dots\to s_0}_{Q_0}\to i}^{\pi_0},
        $$
        for some $s_0\in\mathcal{V}$. If $s_0$ is observed, the monomial associated with this path will appear in the sum twice in \eqref{eq:path:can}; the first time when considering $Q=Q_0$ and $s=s_0$ while the second time when considering $Q=\pi_0$ and $s=i$. The last thing to prove is that the sign will differ in the two cases. This comes from the Leibniz expansion for the determinant, \eqref{eq:leib:exp}. Indeed, the sign associated with each monomial is the sign associated with the permutation in the Leibniz formula. In the first case, the associated permutation is the following: 
        $$
            \sigma_1(a)=
            \begin{cases}
                a &\emph{if } a\notin\{s_0,i,j\},\\
                j &\emph{if } a=s_0,\\
                s_0 &\emph{if } a=i,\\
                k &\emph{if } a=j,
            \end{cases}
        $$
        while for the second case, the associated permutation is:
        $$
            \sigma_2(a)=
            \begin{cases}
                a &\emph{if } a\notin\{j,i\},\\
                j &\emph{if } a=i,\\
                k &\emph{if } a=j.                
            \end{cases}
        $$
        As the two permutations can be obtained one from the other via the transposition corresponding to swapping the columns corresponding to $j$ and $s_0$, they have different signs. 
        
        The same argument also implies that all the elements in \eqref{eq:path:can}, involving $s\in\pa(i)$ cancel out allowing us to rewrite the sum as:
        $$
            \sum_{\substack{Q\in\mathcal{P}_{\setminus j}(l,i)\\P\in\tilde{\mathcal{P}}(\mathcal{O}\setminus\{i,j\},\mathcal{O}\setminus\{i,k\})}}(-1)^{i+j+\sigma_P}a^Q a^P.
        $$
        With the same argument used to prove \eqref{eq:int:paths}, one can see that the only element in $\tilde{\mathcal{P}}(\mathcal{O}\setminus\{i,j\},\mathcal{O}\setminus\{i,k\})$, is the system of paths that sends $k$ to $j$ directly and all the other elements remain fixed. This system of paths has a negative sign if $k\neq j$ and a positive sign otherwise, implying that the sum is equal to:
        $$
            (-1)^{i+j}[I-\mathbf{A}_{o,o}]_{j,k}\sum_{Q\in\mathcal{P}_{\setminus j}(l,i)}a^Q ,
        $$
        which concludes the first part of the proof.
        We now prove \eqref{eq:adj:mat:lat}. Using \eqref{eq:b:can:matrix} we can write $[\mathbf{\Tilde{A}}_{o,l}]_{i,h}=[(\mathbf{\Tilde{B}}_{o,o})^{-1}(\mathbf{\Tilde{B}}_{o,l})]_{i,h}=\sum_{s\in \mathcal{O}}[(\mathbf{\Tilde{B}}_{o,o})^{-1}]_{i,s}[(\mathbf{\Tilde{B}}_{o,l})]_{s,l}$.
    
    We first prove the case $h=l$. In this case, by definition of $\Tilde{\mathbf{B}}'$, we have $[\mathbf{\Tilde{B}}_{o,l}]_{s,l}=[\mathbf{B}_{o}]_{s,j}$. Thus, plugging in \eqref{eq:adj:mat} we can write $[\mathbf{\Tilde{A}}_{o,l}]_{i,l}$ as
    \begin{equation*}
        \begin{aligned}
            &\sum_{s\in\mathcal{O}}[(\mathbf{B}_{o})^{-1}]_{i,s}[\mathbf{B}_{o}]_{s,j}-c^{j}_{l,i}\sum_{s\in\mathcal{O}}[(\mathbf{B}_{o})^{-1}]_{j,s}[\mathbf{B}_{o}]_{s,j}\\
            &=\delta_{i,j}-c^{j}_{l,i}=\delta_{i,j}-c^{j}_{l,i}[\mA]_{j,l},
        \end{aligned}
    \end{equation*}
where, again, we used $[\mA]_{j,l}=1$, that comes from Remark \ref{rem:scaling}.
    For the case $h\neq l$ we use again \eqref{eq:adj:mat}, and write $[\mathbf{\Tilde{A}}_{o,l}]_{i,h}$ as 
    \begin{equation*}
        \begin{aligned}
            &\sum_{s\in\mathcal{O}}[(\mathbf{B}_{o})^{-1}]_{i,s}[\mathbf{B}_{l}]_{s,h}-c^{j}_{l,i}\sum_{s\in\mathcal{O}}[(\mathbf{B}_{o})^{-1}]_{j,s}[\mathbf{B}_{l}]_{s,h}\\
            &=[\mathbf{A}_{o,l}]_{i,h}-c^{j}_{l,i}[\mathbf{A}_{o,l}]_{j,h},
        \end{aligned}
    \end{equation*}
    where for the last equality we only used $\mathbf{A}_{o,l}=(\mathbf{B}_{o})^{-1}\mathbf{B}_{l}$ that comes from \eqref{eq:b:can:matrix}.
\end{proof}

\begin{proof}[Proof of \cref{thm:dce:id}]
     From \cref{lem:adj:mat}, we can conclude that if there are no $k$ and $l$ satisfying the conditions in Equations \eqref{eq:dce:id:cond}-\eqref{eq:dce:id:cond:2}, then the entry $[\mathbf{A_{o,o}}]_{i,j}$ remains unchanged when swapping the columns of $\mathbf{B}'$, proving that the condition is sufficient. To prove the necessity, it is enough to show that $\sum_{Q\in\mathcal{P}_{\setminus j}(l, i)}a^Q\neq0\in\R[\GG_\mA]$, which is equivalent to proving that $\mathcal{P}_{\setminus j}(l, i)\neq\emptyset$. This is true since $i\in\ch(l)$ from \cref{eq:dce:id:cond:1}.
\end{proof}
\subsection{Proofs of \cref{subsec:ceid:known:graph}}
\label{subsec:proofs:ceid:known:graph}

\begin{lemma}
\label{lem:graph:id}
    For every $\mathbf{B}'$ outside of a Lebesgue zero subset of $\R^\GG$, let $\Tilde{\mathbf{B}'} = \mathbf{B}'\cdot \mathbf{P}_{\sigma}$, where $\sigma$ is the transposition that swaps $j$ and $l$, and $\mathbf{P}_{\sigma}$ is the associated permutation matrix.
    We have $\Tilde{\mathbf{B}'}\in\R^{\GG}$ if and only if  
    $$\de_o(I) = \de_o(\sigma(I)), \qquad\forall i\in\mathcal{V},$$ and there is no $k\in\pa(j)\cup\{j\}$, such that $\ch(l)\setminus\ch(k)\neq\emptyset$.
\end{lemma}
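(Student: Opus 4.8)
The plan is to test membership $\tilde{\mathbf{B}}' \in \R^\GG$ through the adjacency matrix it induces. By \eqref{eq:b:can:matrix}, a $p_o\times p$ matrix lies in $\R^\GG$ exactly when the matrix $\tmA$ recovered from it, namely $\tilde{\mathbf{A}}_{o,o}=\mathbf{I}-\tilde{\mathbf{B}}_o^{-1}$ and $\tilde{\mathbf{A}}_{o,l}=\tilde{\mathbf{B}}_o^{-1}\tilde{\mathbf{B}}_l$, satisfies $[\tmA]_{a,b}=0$ whenever $b\to a\notin\GG$. Each such entry is a polynomial in the free parameters $a_{i,j}$ (via \cref{lem:inv} and the cofactor formula for the inverse), so by \cref{cor:measure:zero} I may replace ``vanishes generically'' by ``is the zero polynomial in $\R[\GG_\mA]$''. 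Thus the whole statement reduces to deciding which of these polynomial entries vanish identically.

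First I would establish that $\de_o(j)=\de_o(l)$ is necessary, arguing directly on the two swapped columns. After the transposition, column $j$ of $\tilde{\mathbf{B}}_o$ is the old latent column $[\mathbf{B}_l]_{\cdot,l}$, whose generic support is $\de_o(l)$, while column $l$ of $\tilde{\mathbf{B}}_l$ is the old observed column $[\mathbf{B}_o]_{\cdot,j}$, whose generic support is $\de_o(j)$. On the other hand, for any matrix of $\R^\GG$ the path expansion of \cref{lem:inv} forces $[\tilde{\mathbf{B}}_o]_{i,j}\neq0\Rightarrow i\in\de_o(j)$ and $[\tilde{\mathbf{B}}_l]_{i,l}\neq0\Rightarrow i\in\de_o(l)$. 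Comparing the two supports against these containments yields $\de_o(l)\subseteq\de_o(j)$ and $\de_o(j)\subseteq\de_o(l)$, hence equality; this is the transposition instance of the $\de_o$-preservation requirement in the claim.

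Assuming now $\de_o(j)=\de_o(l)$, I would invoke \cref{lem:adj:mat}, which supplies the exact entries of $\tmA$. From \eqref{eq:adj:mat}, $[\tilde{\mathbf{A}}_{o,o}]_{i,k}$ differs from the admissible value $[\mathbf{A}_{o,o}]_{i,k}$ only when $i\in\ch(l)$ (with $i\neq j$) and $[\mathbf{I}-\mathbf{A}_{o,o}]_{j,k}\neq0$, i.e.\ when $k\in\pa(j)\cup\{j\}$; the change is $c^{j}_{l,i}[\mathbf{I}-\mathbf{A}_{o,o}]_{j,k}$, and $c^{j}_{l,i}\neq0$ in $\R[\GG_\mA]$ because $i\in\ch(l)$ makes the edge $l\to i$ a path in $\mathcal{P}_{\setminus j}(l,i)$. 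Such a perturbed entry breaks membership in $\R^\GG_\mA$ precisely when the slot $(i,k)$ is forbidden, that is $i\notin\ch(k)$. Hence an obstruction in the observed block appears if and only if some $k\in\pa(j)\cup\{j\}$ admits $i\in\ch(l)\setminus\ch(k)$, which is the stated condition $\ch(l)\setminus\ch(k)\neq\emptyset$. To conclude I would use \eqref{eq:adj:mat:lat}, together with $[\mathbf{A}]_{j,l}=1$ from \cref{rem:scaling}, to verify that the latent block $\tilde{\mathbf{A}}_{o,l}$ produces no further forbidden nonzero entries once this condition holds, so the observed block is the only possible source of obstructions.

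I expect the last step to be the main obstacle: showing that \eqref{eq:adj:mat:lat} never creates a forbidden nonzero entry in $\tilde{\mathbf{A}}_{o,l}$ beyond those already governed by the children condition, and carefully handling the bookkeeping around the index $j$ itself. Since $\de_o(j)=\de_o(l)$ forces $j\in\ch(l)$, the diagonal position and the swapped row and column must be treated separately (exactly as the proof of \cref{lem:adj:mat} singles out the row $i=j$, which is left unperturbed), and these are the cases where a naive reading of the set difference $\ch(l)\setminus\ch(k)$ must be interpreted with care. Everything else amounts to reading supports off the path expansions and a single appeal to genericity.
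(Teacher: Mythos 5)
Your proposal is correct and follows essentially the same route as the paper: reduce membership in $\R^\GG$ to the vanishing pattern of the recovered adjacency matrix, use \cref{lem:adj:mat} to locate exactly which entries are perturbed by the column swap, and conclude by genericity via \cref{cor:measure:zero}. The only divergence is that you establish the necessity of $\de_o(j)=\de_o(l)$ by directly comparing the supports of the two swapped columns, whereas the paper cites \citet[Thm.~15]{salehkaleybar:2020}; your version is more self-contained, and your explicit care in excluding the index $j$ itself when reading $\ch(l)\setminus\ch(k)$ (since the row $i=j$ is left unperturbed) is a subtlety the paper's own write-up glosses over.
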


\begin{proof}[Proof of \cref{lem:graph:id}]
From \citet[Thm.~15]{salehkaleybar:2020}, we know that the only permutations that result in matrices in $\R^{\Tilde{\GG}}$ for some DAG $\Tilde{\GG}$, are the ones for which $$\de_o(I) = \de_o(\sigma(I)), \qquad\forall i\in\mathcal{V}.$$ Hence this condition is necessary.

The edges in $\tilde{\GG}$ are given by the support of the matrix $\Tilde{\mathbf{A}}$. In particular $\GG=\tilde{\GG}$ if and only if 
\begin{equation}
    \begin{aligned}
        \label{eq:graph:eq:cond}[\mathbf{A}_{o,o}]_{i,k}=0\in\R[\GG_\mA]&\iff[\Tilde{\mathbf{A}}_{o,o}]_{i,k}=0\in\R[\GG_\mA]\\
        [\mathbf{A}_{o,l}]_{i,l}=0\in\R[\GG_\mA]&\iff[\Tilde{\mathbf{A}}_{o,l}]_{i,l}=0\in\R[\GG_\mA]
    \end{aligned}    
\end{equation}
We provide the proof only for $\mathbf{A}_{o,o}$ since the one for $\mathbf{A}_{o,l}$ follows the same argument only using \eqref{eq:adj:mat:lat} instead of \eqref{eq:adj:mat}. 

From \cref{lem:adj:mat} we know that $[\mathbf{\Tilde{A}}_{o,o}]_{i,k}=[\mathbf{A}_{o,o}]_{i,k}$, if $i\notin\ch(l)$ or $k\notin\pa(j)\cup\{j\}$. Thus, the only the entries to consider are $i\in\ch(l)$ and $k\in\pa(j)\cup\{j\}$. \\
We know that $[\mathbf{A}_{o,o}]_{i,k}\neq0$ if and only if $i\in\ch(k)$, while $[\Tilde{\mathbf{A}}_{o,o}]_{i,k}$ is always different from 0. Therefore, the condition in \eqref{eq:graph:eq:cond} fails if and only if there is an $i\in\ch(l)\setminus\ch(j)$.
\end{proof}

\begin{proof}[Proof of \cref{thm:graph:id}]
    In \cref{lem:graph:id}, we have shown that the statement is true if $\sigma$ is a transposition. We now assume that $\sigma=\sigma_n\cdots\sigma_1$ where $\sigma_s$ is a transposition for every $s\in\{1, \dots, n\}$, such that if $\sigma_s(j)\neq j$ for some $j$ then also $\sigma(j)\neq j$. This can be done without loss of generality since every permutation can be written this way.
    
    We can see that the condition is sufficient from \cref{lem:graph:id}; if it is satisfied, the support of $\mA^\sigma$ cannot change. In order to prove that the condition is also necessary, we need to verify that if there is a $\sigma_s$ such that $[\mA^{\sigma_s}]_{i,k}\neq0$ then $[\mA^{\sigma}]_{i,k}\neq0$ as well. Again, we will prove the result for a pair of observed variables $i$ and $k$ since the result follows the same way when considering latent variables.

    In particular, we are going to prove the following
    \begin{equation}
        [\mathbf{A}^{\sigma}_{o,o}]_{i,k}=
        [\mathbf{A}_{o,o}]_{i,k}+\sum_s\mathbf{1}_{\ch(l_s)}(i)c^{j_s}_{l_s,i}[I-\mathbf{A}_{o,o}]_{j_s,k}+r_\sigma,
    \end{equation}
    where $\sigma_s$ is the transposition that swaps $j_s$ with $l_s$ and $r_\sigma$ is either 0 or a polynomial of degree at least two in $\R[\GG_\mA]$. We proceed by induction on $n$. If $n$ is equal to $1$, then we apply \cref{lem:adj:mat} with $r_\sigma=0$. 
    
    In order to proceed with the induction step, let us define $\sigma_{[s]}=\sigma_s\cdots\sigma_1$ so that we can write $\sigma$ as $\sigma_{k+1}\cdot\sigma_{[k]}$. From the induction, we know that 
    \begin{equation*}
        [\mathbf{A}^{\sigma_{[n]}}_{o,o}]_{i,k} =[\mathbf{A}_{o,o}]_{i,k}+\sum_{s=1}^n\mathbf{1}_{\ch(l_s)}(i)c^{j_s}_{l_s,i}[I-\mathbf{A}_{o,o}]_{j_s,k}+r_{\sigma_{[n]}},
    \end{equation*}
    and by construction we can write $\mB^{\sigma}=\mB^{\sigma_{[n]}}+C^{(n+1)}$ where 
    \begin{equation}
        C^{(n+1)}_{i,k}=
            \begin{cases}
                \displaystyle\sum_{Q\in\mathcal{P}_{\setminus j_{n+1}}(l_{n+1},i)}a^Q         \quad&\emph{if }\,k = j_{n+1}\,\emph{ and }\, i\in \de^{\GG_{\setminus j}}_o(l_{n+1}),\\
                0          \quad&\emph{otherwise}.\\
            \end{cases}
    \end{equation}
    Following the same steps of the proof of \cref{lem:adj:mat}, with the only difference of using $\mB^\sigma$ in place of $\tmB'$, and  $\mB^{\sigma_{[n]}}$ in place of $\mB'$ we obtain
    \begin{equation}
    \label{eq:A:sigma}
        \begin{aligned}
            [\mA^\sigma]_{i,k}=[\mA^{\sigma_{[n]}}]_{i,k}+\sum_{s}(-1)^{s+j_{n+1}}C^{(n+1)}_{s,j_{n+1}}\det([\mB^{\sigma_{[n]}}_o]_{/(k,s),/(i,j_{n+1})}).
        \end{aligned}
    \end{equation}
    Thus, the only thing that is left to prove is that the last term on the right-hand side of the equation is equal to
    $\mathbf{1}_{\ch(l_{n+1})}(i)c^{j_{n+1}}_{l_{n+1},i}[I-\mathbf{A}_{o,o}]_{j_{n+1},j}+r_{\sigma_{n+1}}
    $. In order to do so note that using the same formula as above we can write $\mB^{\sigma_{[n]}}_o=\mB^{\sigma_{[n-1]}}_o+C^n$, and thus $$\det([\mB^{\sigma_{[n]}}_o]_{/(k,s),/(i,j_{n+1})})=\det([\mB^{\sigma_{[n-1]}}_o]_{/(k,s),/(i,j_{n+1})})+r'_{\sigma_{[n-1]}}.$$    
    Plugging the above equation in \eqref{eq:A:sigma}, concludes the proof, following the same steps as in the proof of \cref{lem:adj:mat}.    
\end{proof}

\begin{proof}[Proof of \cref{thm:tot:eff:known:graph}]
    The only difference with respect to \cref{thm:tot:eff} is the condition $$\ch(l)\setminus\ch(k_1)=\emptyset,\quad\forall k_1\in\pa(k)\cup\{k\},$$ that comes from \cref{thm:graph:id}. Indeed, if this condition is not satisfied, then the permutation that swaps the columns corresponding to $k$ and $l$, cannot result in a model in $\R^{\GG}$.
\end{proof}

\begin{proof}[Proof of \cref{thm:dce:known:graph}]
As in the proof of \cref{thm:tot:eff:known:graph}, the difference from the conditions of \cref{thm:dce:id} and \cref{thm:dce:known:graph} is condition \eqref{eq:dce:cond:1}. This is, again, a direct consequence of \cref{thm:graph:id}.
\end{proof}

\begin{proof}[Proof of \cref{thm:mixing:matrix:known}]
The mixing matrix $\mathbf{B}_o$ is identifiable if and only if all of its parameters are. Hence, the result is a direct consequence of \cref{thm:tot:eff:known:graph}.
\end{proof}

\begin{example}[Examples \ref{ex:iv:unknown:graph} and \ref{ex:iv:known:graph} continued]
\label{app:example:iv}
Here, we report the mixing matrices corresponding to the two models that are compatible with the observed distribution
\begin{equation*}
    \mathbf{B}' = \begin{bmatrix}
        1 & 0 & 0 & 0\\
        b_{TI} & 1 & 1 & 0\\
        b_{TI}b_{YT} & b_{YT} & b_{YT}+b_{LY} & 1
    \end{bmatrix}, \quad
    \Tilde{\mathbf{B}}' = \begin{bmatrix}
        1 & 0 & 0 & 0\\
        b_{TI} & 1 & 1 & 0\\
        b_{TI}b_{YT} & b_{YT}+b_{LY} & b_{YT} & 1
    \end{bmatrix},    
\end{equation*}
together with the corresponding adjacency matrices
\begin{equation*}
    \mathbf{A}_{o, o} = \begin{bmatrix}
        1 & 0 & 0 \\
        b_{TI} & 1 & 0 \\
        0 & b_{YT} & 1
    \end{bmatrix},\quad
    \mathbf{\tilde{A}}_{o, o} = \begin{bmatrix}
        1 & 0 & 0 \\
        b_{TI} & 1 & 0 \\
        -b_{TI}b_{YL} & b_{YT}+b_{YL} & 1
    \end{bmatrix}.
\end{equation*}
    
    \begin{figure}[ht]
        \centering
        \begin{tikzpicture}
        \begin{scope}
        \node (I) at (-2,0) {\large $\GG_{IV}$:};
        \node[draw, circle, inner sep=2pt, minimum size=0.8cm] (I) at (-1,0) {$I$};
        \node[draw, circle, inner sep=2pt, minimum size=0.8cm] (T) at (2,0) {$T$};
        \node[draw, circle, inner sep=2pt, minimum size=0.8cm] (Y) at (5,0) {$Y$};
        \node[draw, circle, fill=gray!40, inner sep=2pt, minimum size=0.8cm] (H) at (3.5, 1) {$L$};
    
        \draw[->] (I) --node[below,blue] {$b_{TI}$} (T);
        \draw[->] (T) --node[below,red] {$b_{YT}$} (Y);
        \draw[->] (H) --node[left, pos = 0.1] {$1\quad$} (T);
        \draw[->] (H) --node[right, pos = 0.1, red] {$\quad b_{YL}$} (Y);
    \end{scope}
        
    \begin{scope}[xshift=8cm]
        \node (I) at (-2,0) {\large $\tilde{\GG}_{IV}$:};
        \node[draw, circle, inner sep=2pt, minimum size=0.8cm] (I) at (-1,0) {$I$};
        \node[draw, circle, inner sep=2pt, minimum size=0.8cm] (T) at (2,0) {$T$};
        \node[draw, circle, inner sep=2pt, minimum size=0.8cm] (Y) at (5,0) {$Y$};
        \node[draw, circle, fill=gray!40, inner sep=2pt, minimum size=0.8cm] (H) at (3.5, 1) {$L$};
    
        \draw[->] (I) --node[below,blue] {$b_{TI}$} (T);
        \draw[->] (I) to[bend right = 40]node[below, red] {$-b_{TI}b_{YL}$} (Y);
        \draw[->] (T) --node[below,red] {$b_{YT}+b_{YL}$} (Y);
        \draw[->] (H) --node[left, pos = 0.1] {$1\quad$} (T);
        \draw[->] (H) --node[right, pos = 0.1, red] {$\quad -b_{YL}$} (Y);
        
    \end{scope}
    \end{tikzpicture}
        \caption{The graphs corresponding to the two models.}
        \label{appendix:fig:iv:graphs}
    \end{figure}
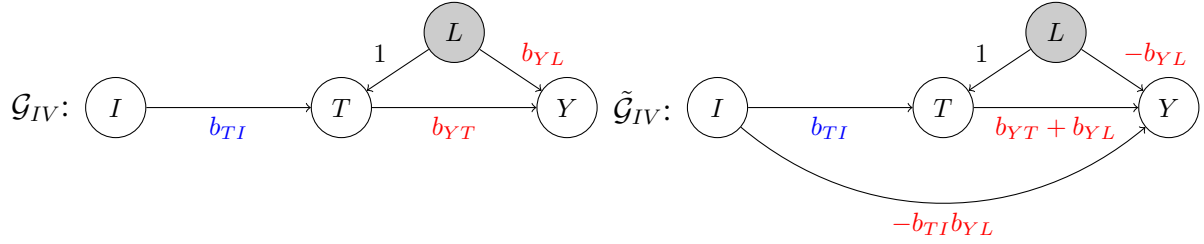

    The two models are depicted in \cref{appendix:fig:iv:graphs}. A parameter is generically identifiable without knowledge of the graph if it is the same for both models. Since there is only one model that is compatible with $\GG_{IV}$, all the parameters are generically identifiable with knowledge of the graph. These results are summarized in \cref{tab:summary:iv:graph}.
    
    \begin{table}[ht]
    \centering
    \caption{Summary of the identifiable parameters in the IV graph.}
    \label{tab:summary:iv:graph}
    \vspace{0.3cm}
    \begin{tabular}{|c|c|c|c|c|}
        \hline
        & \multicolumn{2}{c|}{Known DAG} & \multicolumn{2}{c|}{Unknown DAG} \\
        \cline{2-5}
        & TCE & DCE & TCE & DCE \\
        \hline
        $I \to T$ & \ding{51} & \ding{51} & \ding{51}  &\ding{51}  \\
        \hline
        $I \to Y$ &\ding{51}  &\ding{51}  &\ding{51} & \ding{55}\\
        \hline
        $T \to Y$ &\ding{51}  &\ding{51}  & \ding{55} & \ding{55}\\
        \hline
    \end{tabular}
\end{table}
\end{example}

\subsection{Proofs of \cref{subsec:cert}}
\label{subsec:proof:cert}

\begin{theorem}
\label{thm:cert:dce}
    Algorithms \ref{alg:tce:known}, \ref{alg:tce:unknown}, \ref{alg:dce:known}, \ref{alg:dce:unknown}, \ref{alg:matrix:known}, and \ref{alg:matrix:unknown} are sound and complete for solving the corresponding identifiability queries, as summarized in \cref{tab:summary:id}.
    
    Moreover, the computational complexity the algorithms is $\mathcal{O}(p_l(p_o^2+|E|))=\mathcal{O}(p^3)$.    
\end{theorem}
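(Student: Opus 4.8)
The plan is to split the claim into its two assertions---soundness together with completeness, and the running-time bound---and to argue both for \cref{alg:tce:known} in full, observing that the remaining five algorithms are governed by the same enumerate-and-check template applied to their respective graphical criteria (\cref{thm:tot:eff}, \cref{thm:dce:id}, \cref{thm:dce:known:graph}, and \cref{thm:mixing:matrix:known}), so that their analysis is verbatim.

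The correctness proof rests on the fact that each cited theorem characterizes generic identifiability by a purely graphical condition of the form ``there is no witness $l\in\mathcal{L}$ (or pair $(k,l)\in\mathcal{O}\times\mathcal{L}$) satisfying a finite conjunction of descendant, child, and membership clauses.'' It therefore suffices to show that the algorithm returns $\mathrm{TRUE}$ precisely when no such witness exists. I would establish this by reading the control flow of \cref{alg:tce:known} against \cref{thm:tot:eff:known:graph}: the WHILE loop enumerates candidate latents, and the nested conditionals test \eqref{eq:tot:ce:id:known:2} and then \eqref{eq:tot:ce:id:known:1} in turn. The delicate point is the universally quantified clause \eqref{eq:tot:ce:id:known:3}: upon finding an $l$ that passes the first two clauses, the flag $\mathrm{ID}$ is set provisionally to $\mathrm{FALSE}$, and the inner FOR loop resets it to $\mathrm{TRUE}$ as soon as one $k\in\pa(j)\cup\{j\}$ with $\ch(l)\setminus\ch(k)\neq\emptyset$ is found, because any such $k$ falsifies the for-all clause and thus disqualifies $l$ as a witness. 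I would argue that this bookkeeping is faithful because the WHILE guard $\mathrm{ID}==\mathrm{TRUE}$ is evaluated only at the top of each outer iteration, so a latent ruled out within its own iteration does not halt the search, whereas a genuine witness leaves $\mathrm{ID}=\mathrm{FALSE}$ and terminates it. I would also justify the filter $\ch(l)[1]=j$: in a canonical graph every child of a latent is observed, so the topologically least element of $\de_o(l)$ is the first child of $l$; since $j$ is the least element of $\de_o(j)$, the equality $\de_o(j)=\de_o(l)$ forces $j$ to be that first child, whence the filter discards only latents that cannot meet \eqref{eq:tot:ce:id:known:1} and is therefore both sound and complete.

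For the complexity bound I would account for the cost step by step. The topological sort and the single computation of $\de_o(j)$ cost $\mathcal{O}(p+|E|)$. The WHILE loop runs at most $p_l$ times, and each pass computes $\de^{\GG_{\setminus j}}_o(l)$ and $\de_o(l)$ by graph traversals costing $\mathcal{O}(p_o+|E|)$, while the inner FOR loop evaluates the set differences $\ch(l)\setminus\ch(k)$ over all $k\in\pa(j)\cup\{j\}$ in total time $\mathcal{O}(p_o^2)$ using precomputed membership tables. Summing over the $p_l$ iterations gives $\mathcal{O}(p_l(p_o^2+|E|))$, which is $\mathcal{O}(p^3)$ since $p_l,p_o=\mathcal{O}(p)$ and $|E|=\mathcal{O}(p^2)$; the other algorithms share this loop structure and obey the same accounting.

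The step I expect to be the main obstacle is precisely the correctness of the flag bookkeeping for the universally quantified clause: faithfully encoding the quantifier alternation (the existential over $l$ together with the universal over $k$) into imperative control flow, and confirming that the transient $\mathrm{ID}=\mathrm{FALSE}$ assignment interacts correctly with the re-evaluated WHILE guard, is exactly where a mis-scoped reset would silently break either soundness or completeness.
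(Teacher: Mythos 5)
Your proposal is correct and follows essentially the same route as the paper: the key step in both is the observation that $\de_o(j)=\de_o(l)$ forces $j$ to be the topologically first child of $l$ (justifying the filter $\ch(l)[1]=j$), after which the algorithm is a direct transcription of the graphical conditions, and your complexity accounting (topological sort in $\mathcal{O}(p+|E|)$, per-latent descendant computations in $\mathcal{O}(p_o+|E|)$, the parent/children set tests in $\mathcal{O}(p_o^2)$, repeated $p_l$ times) matches the paper's. Your discussion of the flag bookkeeping for the universally quantified clause is more explicit than the paper's one-line ``the rest is a translation of the conditions,'' but it is the same argument.
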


\begin{table}[ht]
    \centering
    \caption{Summary of the identification algorithms, with the corresponding identifiability queries.}
    \label{tab:summary:id}
    \vspace{0.3cm}
    \begin{tabular}{|c|c|c|c|c|}
        \hline
        & \multicolumn{2}{c|}{Known DAG} & \multicolumn{2}{c|}{Unknown DAG} \\
        \cline{2-5}
        & TCE & DCE & TCE & DCE \\
        \hline
        Given Pair & \cref{alg:tce:known} &\cref{alg:dce:known} &\cref{alg:tce:unknown} &\cref{alg:dce:unknown} \\
        \hline
        Complete Matrix &\cref{alg:matrix:known} &\cref{alg:matrix:known} &\cref{alg:matrix:unknown} & \cref{alg:matrix:unknown}\\
        \hline
    \end{tabular}
\end{table}

\begin{proof}[Proof of \cref{thm:cert} and \cref{thm:cert:dce}]

In order to prove that correctness of \cref{alg:tce:known}, it is enough to show that if $\de_o(l)=\de_o(j)$ then $j < k$ for every $k\in\ch(l)\setminus\{j\}$. Assume by contradiction that there is $k < j$ in $\ch(l)$. Then $k$ cannot be a descendent of $j$ and so $\de_o(l)\neq\de_o(j)$. This implies that the total causal effect is identifiable if the condition at line $7:$ in \cref{alg:tce:known} is not satisfied for any latent variable. The rest of the algorithm is just a translation of the conditions in \cref{thm:tot:eff:known:graph}.

The nodes can be arranged in a topological order in $\mathcal{O}(p+|E|)$ time with one run of depth-first search, see e.g., \citet[\S22.4]{cormen:2009}. Computing the descendants of a node can be done again with depth-first search in the graph in which only the observed nodes are considered since we assume no edges from observed to latent variables. Therefore, it has a cost of $\mathcal{O}(p_o+|E|)$, sorting $\ch(l_k)$ in line $5$ of the algorithm is $\mathcal{O}(p_olog(p_o))$, while computing the parents of a node can be done in $\mathcal{O}(p_o^2)$. Therefore, the internal loop that starts at line $4$ has the complexity of $\mathcal{O}(p_o^2+|E|)$ and this is repeated at most $p_l$ times. Hence, the loop costs $\mathcal{O}(p_l(p_o^2+|E|))$. The final cost is $\mathcal{O}(p_l(p_o^2+|E|)+p+|E|)=\mathcal{O}(p_l(p_o^2+|E|))=\mathcal{O}(p^3)$.

The result for \cref{thm:cert:dce} follows in the same way. Note that a consequence of \cref{lem:isomorphism} is that when one is interested in identifying the whole matrix, it is not necessary to distinguish between direct and total causal effects.
\end{proof}

\begin{algorithm}
    \caption{Total Causal Effect Identification without knowledge of the graph}
    \label{alg:tce:unknown}
    \textbf{INPUT:} $\mathcal{V}=\mathcal{O}\cup\mathcal{L}, \GG, \{\ch(i) \mid i \in \mathcal{V}\}, (j, i)$
    \begin{algorithmic}[1]
    \STATE $\text{ID} \gets \text{TRUE}$ 
    \STATE Sort $\mathcal{V}$ according to an ascending topological order
    \STATE Compute $\de_o(j)$    
    \WHILE{$\text{ID} == \text{TRUE}$ \AND $|\mathcal{L}|>0$}
        \STATE $l\gets \mathcal{L}[1]$\COMMENT{The first element in the list}
        \STATE Sort $\ch(l)$ according to the topological order defined in step 2 
        \IF{$\ch(l)[1] = j = j$}
            \STATE Compute $\de^{\GG_{\setminus j}}_o(l)$
            \IF[{\color{gray}\eqref{eq:tce:unknown:2}}\color{black}]{$i\in\de^{\GG_{\setminus j}}_o(l)$}
                \STATE Compute $\de_o(l)$
                \IF{$\de_o(l) =\de_o(j)$}   
                    \STATE $\text{ID} \gets \text{FALSE}$ \COMMENT{{\color{gray}\eqref{eq:tce:unknown:1}}}                                    
                \ENDIF
            \ENDIF
        \ENDIF
        \STATE $\mathcal{L}\gets \mathcal{L}\setminus\{l\}$
    \ENDWHILE
    \STATE \textbf{RETURN:} ID
    \end{algorithmic}
    \end{algorithm}

\begin{algorithm}
\caption{Direct Causal Effect Identification with knowledge of the graph}
\label{alg:dce:known}
\textbf{INPUT:} $\mathcal{V} = \mathcal{O} \cup \mathcal{L}, \GG, \{\ch(i) \mid i \in \mathcal{V}\}, (j, i)$
\begin{algorithmic}[1]
\STATE $\text{ID} \gets \text{TRUE}$
\STATE Sort $\mathcal{V}$ according to an ascending topological order

\FORALL{$l\in \mathcal{L}$}
    \STATE Sort $\ch(l)$ according to the topological order defined in step 2
    \STATE Compute $\de_o(l)$
    \IF[{\color{gray}\eqref{eq:dce:cond:3}}\color{black}]{$i\in\ch(l)$}
        \FORALL{$k\in\ch(j)\cup\{j\}$}
            \STATE Compute $\de_o(k)$
            \IF[{\color{gray}\eqref{eq:dce:cond:2}}\color{black}]{$\de_o(k)=\de_o(l)$}
                \STATE ID $\gets$ FALSE
                \STATE Compute $\pa(j)$
                \FORALL{$k_1\in\pa(j)\cup\{j\}$}
                    \IF[{\color{gray}\eqref{eq:dce:cond:1}}\color{black}]{$\ch(l)\setminus\ch(k_1)\neq\emptyset$}
                        \STATE ID $\leftarrow$ TRUE
                    \ENDIF
                \ENDFOR
            \ENDIF
        \ENDFOR
    \ENDIF
\ENDFOR
\STATE \textbf{RETURN:} ID
\end{algorithmic}
\end{algorithm}

\begin{algorithm}
\caption{Direct Causal Effect Identification without knowledge of the graph}
\label{alg:dce:unknown}
    \textbf{INPUT:} $\mathcal{V} = \mathcal{O} \cup \mathcal{L}, \{\ch(i) \mid i \in \mathcal{V}\}, (j, i)$
    \begin{algorithmic}[1]
    \STATE $\text{ID} \gets \text{TRUE}$
    \STATE Sort $\mathcal{V}$ according to an ascending topological order    
    \FORALL{$l\in \mathcal{L}$}
        \STATE Sort $\ch(l)$ according to the topological order defined in step 2
        \STATE Compute $\de_o(l)$
        \IF[{\color{gray}\eqref{eq:dce:id:cond:1}}\color{black}]{$i\in\ch(l)$}
            \FORALL{$k\in\ch(j)\cup\{j\}$}
                \STATE Compute $\de_o(k)$
                \IF[{\color{gray}\eqref{eq:dce:id:cond}}\color{black}]{$\de_o(k)=\de_o(l)$}
                    \STATE ID $\gets$ FALSE                
                \ENDIF
            \ENDFOR
        \ENDIF
    \ENDFOR
\STATE \textbf{RETURN:} ID
\end{algorithmic}
\end{algorithm}

\begin{algorithm}
\caption{Matrix Identification with knowledge of the graph}
\label{alg:matrix:known}
\textbf{INPUT:} $\mathcal{V} = \mathcal{O} \cup \mathcal{L}, \GG, \{\ch(i) \mid i \in \mathcal{V}\}$
\begin{algorithmic}[1]
\STATE $\text{ID} \gets \text{TRUE}$
\STATE Sort $\mathcal{V}$ according to an ascending topological order
\FORALL{$l\in \mathcal{L}$}
    \STATE Sort $\ch(l)$ according to the topological order defined in step 2
    \STATE $j \gets \ch(l)[1]$
    \STATE Compute $\de_o(l)$ and $\de_o(j)$
    \IF{$\de_o(j)=\de_o(l)$}                
        \STATE $\text{ID} \gets \text{FALSE}$   
        \FORALL{$k\in\ch(j)$}
            \FORALL{$k_1\pa(k)\cup\{k\}$}
                \IF{$\ch(l)\setminus\ch(k_1)\neq\emptyset$}
                \STATE $\text{ID} \gets \text{TRUE}$
                \ENDIF
            \ENDFOR
        \ENDFOR
    \ENDIF
\ENDFOR
\STATE \textbf{RETURN:} ID
\end{algorithmic}
\end{algorithm}

\begin{algorithm}
\caption{Matrix Identification without knowledge of the graph}
\label{alg:matrix:unknown}
\textbf{INPUT:} $\mathcal{V} = \mathcal{O} \cup \mathcal{L}, \{\ch(i) \mid i \in \mathcal{V}\}, \GG$
\begin{algorithmic}[1]
\STATE $\text{ID} \gets \text{TRUE}$
\STATE Sort $\mathcal{V}$ according to an ascending topological order
\FORALL{$l\in \mathcal{L}$}
    \STATE Sort $\ch(l)$ according to the topological order defined in step 2
    \STATE $j \gets \ch(l)[1]$
    \STATE Compute $\de_o(l)$ and $\de_o(j)$
    \IF{$\de_o(j)=\de_o(l)$}                
        \STATE ID $\gets$ FALSE        
    \ENDIF        
\ENDFOR
\STATE \textbf{RETURN:} ID
\end{algorithmic}
\end{algorithm}
\section{Details on the Experimental Setting and Additional Experiments}
\label{app:exp}
\subsection{Identification}
\subsubsection{Mixing Matrix Identification Experiments}
We used \cref{alg:matrix:known} and \cref{alg:matrix:unknown} on randomly generated graphs. In \cref{fig:id:exp}, we report the percentage of graphs in which all the causal effects are identifiable for graphs of size 10 and 100. The graphs are generated according to an Erd\H{o}s-Rényi model in which we ensure that the graph we sample is canonical. The probability of acceptance of an edge is plotted on the $x$-axis. For each setup, we randomly sample $500$ graphs. 

For the case in which the graph is known, we find the same qualitative behavior observed in \cref{subsec:id:exp}. In contrast, when we do not assume the graph to be known, the probability that all the parameters in the graph are identifiable drops drastically.
We found the same qualitative behavior with larger graphs.

The average running time for different graph sizes is shown in \cref{fig:run:time:matrix}.

\begin{figure}
    \centering
        \includegraphics[scale=0.3]{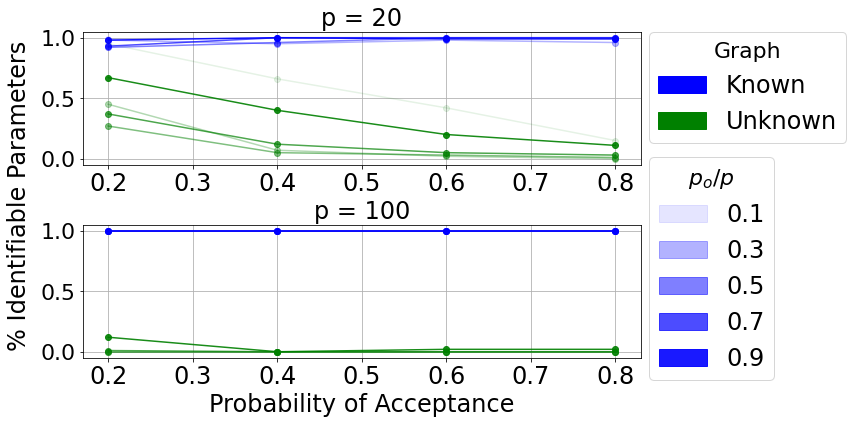}
        \label{fig:perc:id:matrix}
        \caption{On the $x$-axis, the probability of acceptance of an edge.
The $y$-axis shows the percentage of graphs in which all the parameters are identifiable}
\end{figure}
\begin{figure}    
        \centering
        \includegraphics[scale=0.3]{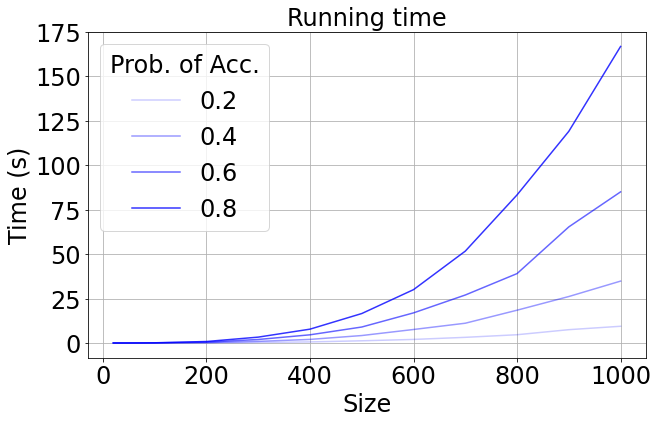}                        
    \caption{On the $x$-axis, the size of the graph. On the $y$-axis, the
average running time in seconds, $p_o/p$ is fixed to 0.5.}
    \label{fig:run:time:matrix}
\end{figure}
\subsection{Estimation}
\label{app:subsec:est}

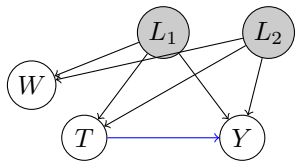
\begin{figure}[ht]
    \centering
    \begin{tikzpicture}[scale=0.7]
        \node[draw, circle, fill=gray!40, inner sep=2pt, minimum size=0.6cm] (z1) at (1,0) {$L_1$};
        \node[draw, circle, fill=gray!40, inner sep=2pt, minimum size=0.6cm] (z2) at (3,0) {$L_2$};
        
        \node[draw, circle, inner sep=2pt, minimum size=0.6cm] (T) at (-0.5,-2) {$T$};
        \node[draw, circle, inner sep=2pt, minimum size=0.6cm] (Y) at (2.5,-2) {$Y$};
        
        \node[draw, circle, inner sep=2pt, minimum size=0.6cm] (W) at (-1.5,-1) {$W$};
        
        \draw[black, ->] (z1) to (T);
        \draw[black, ->] (z1) to (Y);
        \draw[black, ->] (z1) to (W);
        
        \draw[black, ->] (z2) to (T);
        \draw[black, ->] (z2) to (Y);
        \draw[black, ->] (z2) to (W);

        \draw[blue, ->] (T) to (Y);        
    \end{tikzpicture}
    \caption{$\mathcal{G}_4$}
\end{figure}
\subsubsection{Sample size vs relative error}
In this subsection we present additional experiments on synthetic data to illustrate the performance of GRICA algorithm in comparison with the RICA algorithm implemented by \citet{salehkaleybar:2020} and Cross-Moment method by \citet{kivva:2023}.

\paragraph{Experimental setup.}
All the experiments in this subsection are done on the synthetic data generated according to the specific causal structure established for it. To generate synthetic data we specify all exogenous noises to be i.i.d., and select all non-zero entries within the matrix $\mathbf{A}$ through uniform sampling from $[-1, -0.5]\cup [0.5, 1]$. In the following, we display the results for the setups described in \cref{tab:synthetic_data}.

\begin{table}[ht]    
    \centering
    \caption{Summary of the experimental setups.}
    \label{tab:synthetic_data}
    \vspace{0.3cm}
    \begin{tabular}{|c|c|c|c|c|c|}
        \hline
        \textbf{Figure} & \textbf{Causal Graph} & \multicolumn{3}{c|}{\textbf{Distribution}} & \textbf{Parameter of Interest} \\
        \cline{3-5}
        & & \textbf{Family} & $\mu$ & $scale$ &\\
        \hline
        \ref{fig:laplace:sample size vs err:graph4} & $\GG_4$ & Laplace & 0 & 1 & $T\to Y$ \\
        \hline
        \ref{fig:laplace:sample size vs err:iv} & IV, \cref{fig:IV} & Laplace & 0 & 1 & $T_2\to Y$ \\
        \hline
        \ref{fig:exponential:sample size vs err:graph1} & $\GG_1$ & Exponential & - & 1 & $T\to Y$ \\
        \hline
        \ref{fig:exponential:sample size vs err:graph2} & $\GG_2$ & Exponential & - & 1 & $T\to Y$ \\
        \hline        
        \ref{fig:exponential:sample size vs err:graph4} & $\GG_4$ & Exponential & - & 1 & $T\to Y$ \\    
        \hline
        \ref{fig:exponential:sample size vs err:iv} & IV, \cref{fig:IV}& Exponential & - & 1 & $T_2\to Y$ \\
        \hline
    \end{tabular}
    
\end{table}

On the figures, we plot the average relative error over 10 independent experiments, with its standard deviation visualized with a transparent area filled with the respective color.
From the experiments, we see that GRICA outperforms other methods in all setups, except the one for the graph $\GG_1$. Note that for this specific graph, the Cross-Moment method gives us an exact statistical solution that requires computing high-order moments. This results in a less statistically robust estimation when the sample size is small, but it is more accurate if the sample size is large enough.

\begin{figure}
\centering
\begin{subfigure}[t]{0.45\textwidth}
    \centering
    \includegraphics[width=0.9\textwidth]{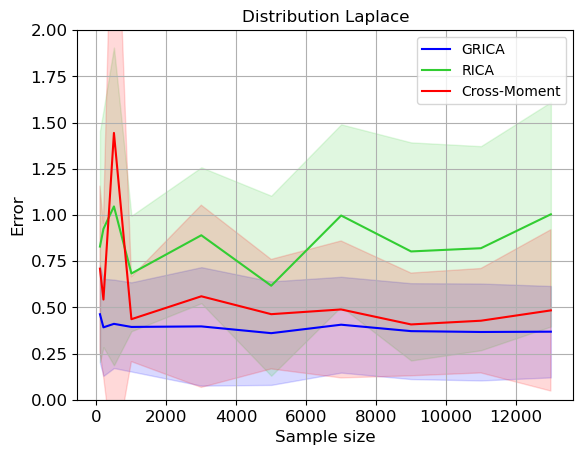}
    \caption{$\GG_4$}
    \label{fig:laplace:sample size vs err:graph4}
\end{subfigure}
\hfill
\begin{subfigure}[t]{0.45\textwidth}
    \centering
    \includegraphics[width=0.9\textwidth]{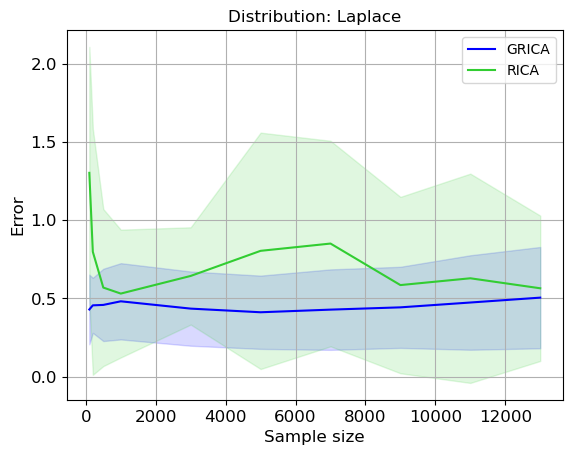}
    \caption{Underspecified instrument}
    \label{fig:laplace:sample size vs err:iv}
\end{subfigure}

\caption{Laplace distribution}
\label{fig:laplace:sample size vs err}
\end{figure}

\begin{figure}
\centering
\begin{subfigure}[t]{0.45\textwidth}
    \centering
    \includegraphics[width=0.9\textwidth]{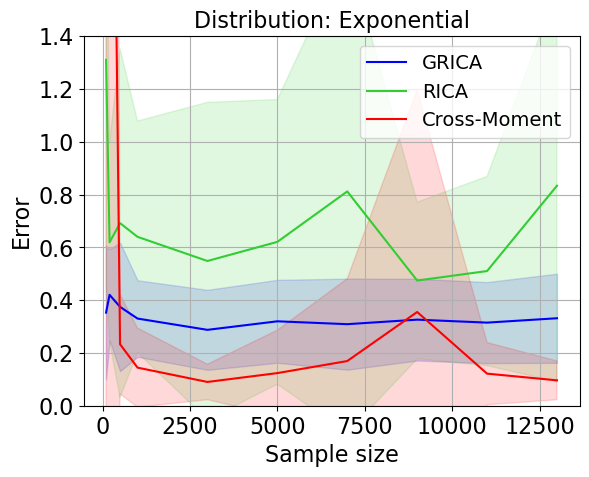}
    \caption{$\GG_1$}
    \label{fig:exponential:sample size vs err:graph1}
\end{subfigure}
\hfill
\begin{subfigure}[t]{0.45\textwidth}
    \centering
    \includegraphics[width=0.9\textwidth]{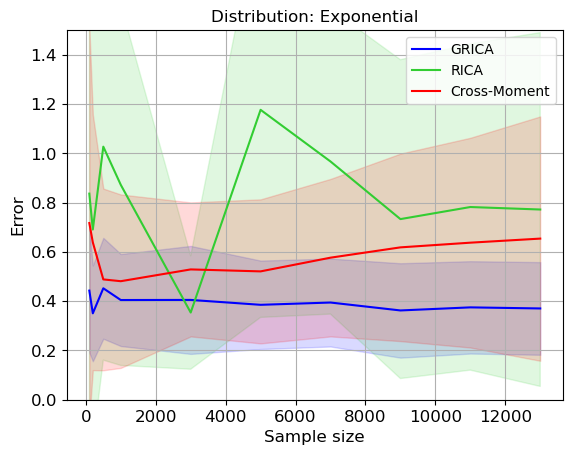}
    \caption{$\GG_2$}
    \label{fig:exponential:sample size vs err:graph2}
\end{subfigure}
\hfill
\begin{subfigure}[t]{0.45\textwidth}
    \centering
    \includegraphics[width=0.9\textwidth]{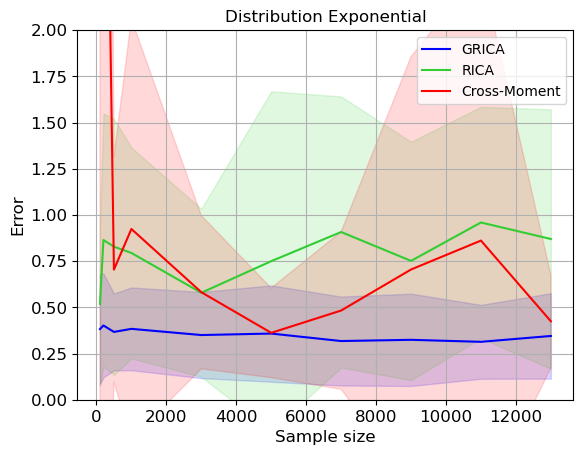}
    \caption{$\GG_4$}
    \label{fig:exponential:sample size vs err:graph4}
\end{subfigure}
\hfill
\begin{subfigure}[t]{0.45\textwidth}
    \centering
    \includegraphics[width=0.88\textwidth]{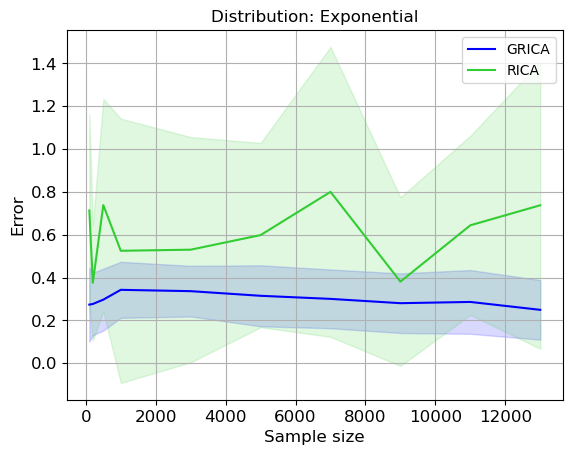}
    \caption{Underspecified instrument}
    \label{fig:exponential:sample size vs err:iv}
\end{subfigure}

\caption{Exponential distribution}
\label{fig:exponential:sample size vs err}
\end{figure}

\subsubsection{Model misspecification}
Here we consider the robustness of GRICA algorithm when data generation process is not linear SEM, but close to it. For these experiments the data is generated with the following machanism:
\[
\begin{cases}
    & V_j = \tanh{\left(\sum_{i=1}^{p}\mathbf{A}_{j, i}V_i\right)} + N_j, \text{ if } V_j\neq Y,\\
    & V_j = \tanh{\left(\sum_{i=1, i\neq k}^{p}\mathbf{A}_{j, i}V_i\right)} + A_{j, k}V_k +  N_j, \text{ if } V_j=Y \text{ and } V_k = T,
\end{cases}
\]
where $T \rightarrow Y$ is a direct causal effect that we want to estimate.

\paragraph{Experimental setup.}
For each experiment we generate the synthetic data accordingly to the causal structure chosen for it. All exogenous noises are i.i.d., and all non-zero non-diagonal entries within the matrix $\mathbf{A}$ sampled uniformly from $[-1, -0.5]\cup [0.5, 1]$. In the following, we display the results for the setups described in \cref{tab:synthetic_data_misspecification}.

\begin{table}[ht]
    \centering
    \caption{Summary of the experimental setups.}
    \label{tab:synthetic_data_misspecification}
    \vspace{0.3cm}
    \begin{tabular}{|c|c|c|c|c|c|}
        \hline
        \textbf{Figure} & \textbf{Causal Graph} & \multicolumn{3}{c|}{\textbf{Distribution}} & \textbf{Parameter of Interest} \\
        \cline{3-5}
        & & \textbf{Familiy} & $\mu$ & $scale$ &\\
        \hline
        \ref{fig:misspecification:laplace:sample size vs err:graph1} & $\GG_1$ & Laplace & 0 & 1 & $T\to Y$ \\
        \hline
        \ref{fig:misspecification:laplace:sample size vs err:graph2} & $\GG_2$ & Laplace & 0 & 1 & $T\to Y$ \\
        \hline
        \ref{fig:misspecification:exponential:sample size vs err:graph1} & $\GG_1$ & Exponential & - & 1 & $T\to Y$ \\
        \hline
        \ref{fig:misspecification:exponential:sample size vs err:graph1} & $\GG_2$ & Exponential & - & 1 & $T\to Y$ \\
        \hline
    \end{tabular}

\end{table}

\begin{figure}
\centering
\begin{subfigure}[t]{0.45\textwidth}
    \centering
    \includegraphics[width=0.85\textwidth]{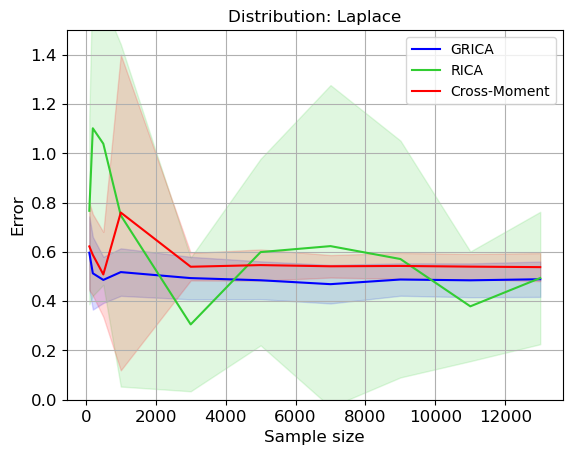}
    \caption{$\GG_1$}
    \label{fig:misspecification:laplace:sample size vs err:graph1}
\end{subfigure}
\hfill
\begin{subfigure}[t]{0.45\textwidth}
    \centering
    \includegraphics[width=0.85\textwidth]{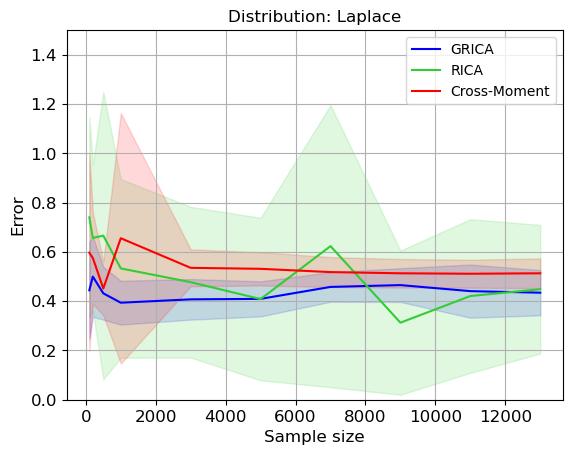}
    \caption{$\GG_2$}
    \label{fig:misspecification:laplace:sample size vs err:graph2}
\end{subfigure}

\caption{Laplace distribution}
\label{fig:misspecification:laplace:sample size vs err}
\end{figure}

\begin{figure}
\centering
\begin{subfigure}[t]{0.45\textwidth}
    \centering
    \includegraphics[width=0.85\textwidth]{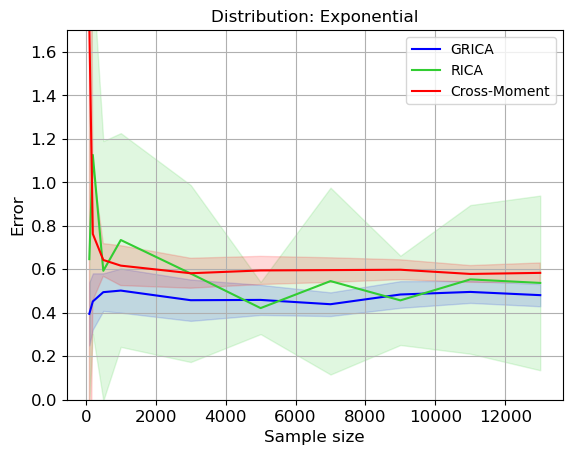}
    \caption{$\GG_1$}
    \label{fig:misspecification:exponential:sample size vs err:graph1}
\end{subfigure}
\hfill
\begin{subfigure}[t]{0.45\textwidth}
    \centering
    \includegraphics[width=0.85\textwidth]{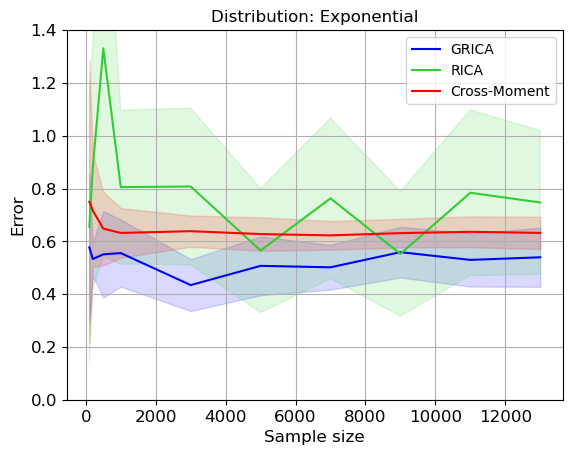}
    \caption{$\GG_2$}
    \label{fig:misspecification:exponential:sample size vs err:graph2}
\end{subfigure}

\caption{Exponential distribution}
\label{fig:misspecification:exponential:sample size vs err}
\end{figure}
On the figures, we plot the average relative error over 10 independent experiments, with its standard deviation visualized with transparent area filled with respective color. We observe that GRICA outperforms all other methods for these experiments.

\subsubsection{Relative error vs observations noise}
These experiments illustrate how the variances of certain exogenous noise impact the performance of GRICA algorithm in comparison to the state-of-the-art algorithms for the considered settings. Herein we considered three main setups. In each of the setup we initialize all the non-zero entries of matrix $\mathbf{A}$ to be equal to 1 and all exogenous noises are modeled 
 as i.i.d. distributions.
\begin{enumerate}
    \item Causal structure: $\GG_1$. The experiment for this causal structure is illustrated in Figure \ref{fig:exponential:snr vs err:graph1}. In this experiment, we initialize all exogenous noises as Exponential distributions. Then we scaled the standard deviation of exogenous noise $N_W$ corresponding to the variable $W$ by a factor displayed on x-axis of the Figure \ref{fig:exponential:snr vs err:graph1} and plotted the performance of GRICA algorithm against the performance of RICA algorithm.
    \item Causal structure: $\GG_3$. The experiment for this causal structure is illustrated in Figure \ref{fig:exponential:snr vs err:graph3}. In this experiment, we initialize all exogenous noises as Exponential distributions. Then we scaled $N_W$ and $N_Z$ by $Ratio$ and $1/Ratio$, respectively, where $Ratio$ is plotted on the x-axis of Figure \ref{fig:exponential:snr vs err:graph1} and plotted the performance of GRICA algorithm against the performance of RICA, Cross-Moment method and method proposed by \citet{tchetgen2020introduction}.
 \item Causal structure: underspecified instrument. The experiments for this causal structure are illustrated in  Figures \ref{fig:laplace:snr vs err:iv} and \ref{fig:exponential:snr vs err:iv}. In Figure \ref{fig:laplace:snr vs err:iv} all exogenous noises are modeled as a Laplace distribution and in Figure \ref{fig:exponential:snr vs err:iv} all exogenous noises are modeled as an Exponential distribution. Then we scaled $N_{L_1}$ and $N_{L_2}$ by a factor displayed on the $x-axis$ of each Figure and estimated the direct causal effect from $T_2$ to $Y$ with GRICA and RICA algorithms. The relative error of the estimations is plotted in the Figures.
\end{enumerate} 

\begin{figure}
\centering
\begin{subfigure}[t]{0.45\textwidth}
    \centering
    \includegraphics[width=0.85\textwidth]{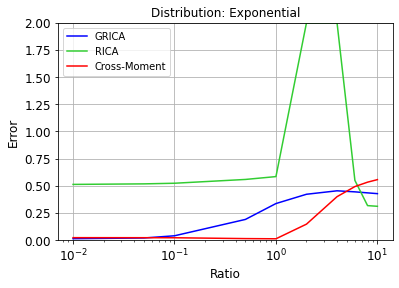}
    \caption{$\GG_1$}
    \label{fig:exponential:snr vs err:graph1}
\end{subfigure}
\hfill
\begin{subfigure}[t]{0.45\textwidth}
    \centering
    \includegraphics[width=0.85\textwidth]{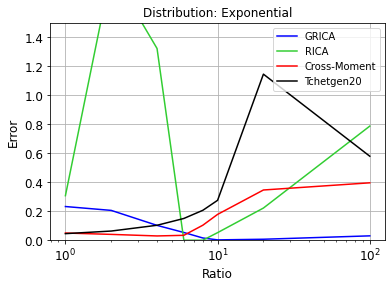}
    \caption{$\GG_3$}
    \label{fig:exponential:snr vs err:graph3}
\end{subfigure}
\caption{Exponential distribution}
\end{figure}

\begin{figure}
\centering
\begin{subfigure}[t]{0.45\textwidth}
    \centering
    \includegraphics[width=0.85\textwidth]{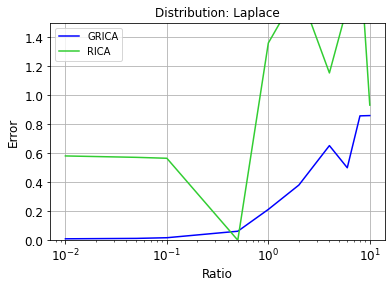}
    \caption{Laplace distribution}
    \label{fig:laplace:snr vs err:iv}
\end{subfigure}
\hfill
\begin{subfigure}[t]{0.45\textwidth}
    \centering
    \includegraphics[width=0.85\textwidth]{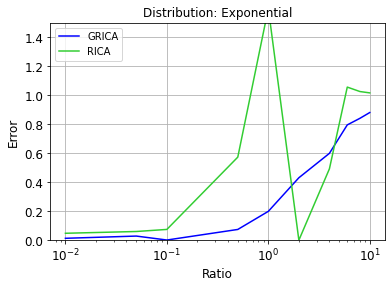}
    \caption{Exponential distribution}
    \label{fig:exponential:snr vs err:iv}
\end{subfigure}
\caption{Underspecified insturment}
\end{figure}

\end{document}